\documentclass{article} 
\usepackage{iclr2027_conference,times}

\usepackage{amsmath,amsfonts,bm}

\def\eqref#1{equation~\ref{#1}}

\def\1{\bm{1}}

\DeclareMathAlphabet{\mathsfit}{\encodingdefault}{\sfdefault}{m}{sl}
\SetMathAlphabet{\mathsfit}{bold}{\encodingdefault}{\sfdefault}{bx}{n}

\usepackage{hyperref}
\usepackage{url}
\usepackage{amsmath,amssymb}
\usepackage{amsthm}
\usepackage{xcolor}
\usepackage{tabularx}
\usepackage{booktabs}
\usepackage{graphicx}
\usepackage{subcaption}

\newcolumntype{Y}{>{\raggedright\arraybackslash}X}

\theoremstyle{plain}
\newtheorem{theorem}{Theorem}
\newtheorem{proposition}{Proposition}

\newtheorem*{prop1*}{Proposition 1}
\newtheorem*{prop2*}{Proposition 2}
\newtheorem*{thm1*}{Theorem 1}
\newtheorem*{prop3*}{Proposition 3}
\newtheorem*{prop4*}{Proposition 4}
\makeatletter
\newcommand{\fixedlabel}[2]{%
  \begingroup
  \def\@currentlabel{#1}%
  \label{#2}%
  \endgroup}
\makeatother
\theoremstyle{definition}

\theoremstyle{remark}
\newtheorem{remark}{Remark}

\graphicspath{{figures/}}

\title{Learning PDE Dynamics between Submanifolds Using Green's Observation Operators}

\author{
\textbf{Jan Tauberschmidt}\textsuperscript{1,2},\hspace{0.6em}
\textbf{Jephte Abijuru}\textsuperscript{2}\thanks{Equal contribution; ordered alphabetically.},\hspace{0.6em}
\textbf{Naukshatro Bose}\textsuperscript{2}\footnotemark[1],\hspace{0.6em}
\textbf{Samuel Okon}\textsuperscript{1,2}\footnotemark[1],
\\
\textbf{Sophie Fellenz}\textsuperscript{2},\hspace{0.6em}
\textbf{Marius Kloft}\textsuperscript{1,2},\hspace{0.6em}
\textbf{Jonas Latz}\textsuperscript{3},\hspace{0.6em}
\textbf{Sebastian Josef Vollmer}\textsuperscript{1,2}
\\
\textsuperscript{1}German Research Center for Artificial Intelligence (DFKI),
Kaiserslautern, Germany
\\
\textsuperscript{2}Rheinland-Pf\"alzische Technische Universit\"at (RPTU),
Kaiserslautern, Germany
\\
\textsuperscript{3}University of Manchester,
Manchester, United Kingdom
}

\iclrfinalcopy 
\begin{document}

\maketitle

\begin{abstract}
Many physical systems are driven and observed only on lower-dimensional submanifolds of a larger spatial domain, while their dynamics are governed by the ambient medium occupying that domain. Examples include laser-heated parts imaged by an infrared camera, and ground-level emissions measured on a sensor plane. Full-domain solvers, however, compute the entire volume for every new source although only the observation submanifold is needed, and black-box surrogates do not exploit that the ambient medium remains fixed. We introduce the \emph{Green's Observation Operator (GObO)}, which maps the ambient medium once to the Green's kernel of a linear PDE restricted to the source and observation submanifolds. New sources then cost one lower-dimensional integral and no network evaluation. Exponential rates in the kernel yield an exact finite streaming state with horizon-independent memory; we prove its stability and an approximation rate for the restricted heat kernel. On three-dimensional heat conduction and advection--diffusion with collocated and distinct source and observation geometries, GObO trained on static sources predicts responses to moving sources zero-shot with 4--8$\times$ lower error than black-box surrogates, at 1.4\,ms per query after a single conditioning pass.
The same kernel transfers across resolutions and admits corrections for mild nonlinearities, including radiative losses and temperature-dependent conductivity, without retraining, at the cost of lower in-distribution accuracy.
\end{abstract}

\section{Introduction}
\label{sec:intro}

Many systems in science and engineering are described by partial
differential equations (PDEs). Examples include heat flow in manufactured
parts, airflow around vehicles, blood flow in arteries, or acoustic wave
propagation. In many applications, however, the full state inside the
domain is not observed. Instead, measurements are available only on a
lower-dimensional part of the domain, such as a cross-section, a boundary face, or more generally, a \emph{submanifold}.
Likewise, the input that drives the system, represented by a forcing, may act only on a lower-dimensional region, such as a heated
surface, an area on the ground, or a boundary.

Laser-based metal additive manufacturing gives a concrete example of this setting, where laser locally heats and melts the material \citep{sajadi2025temperature} on a surface, while an infrared camera observes only a lower-dimensional temperature field. Thus, both the input and the observed output are lower-dimensional, but their relation still depends on the full material through which heat travels. Figure~\ref{fig:Fig1} illustrates this setting schematically, with a laser acting on a source submanifold and the resulting temperature being observed on an observation submanifold coupled through a heterogeneous ambient medium.

\begin{figure}
    \centering
    \includegraphics[width=\linewidth]{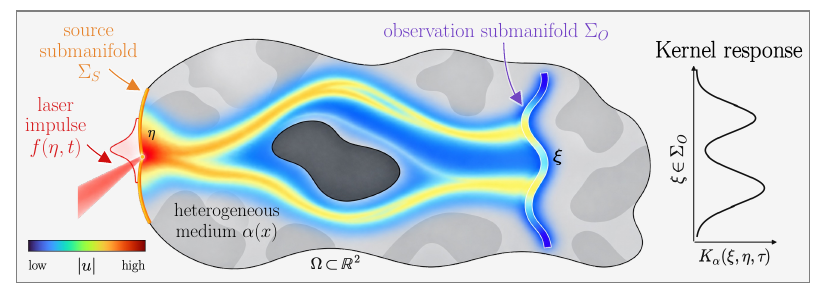}
    \caption{A laser acts on a source submanifold and drives an observed temperature response on the observation submanifold. The ambient medium affects the temperature flux.
    Classical PDE solvers model the transport through the full volume, whereas our GObO directly models the map between source and observation submanifolds.}
    \label{fig:Fig1}
\end{figure}

Similarly, in atmospheric transport, pollutants or
greenhouse gases may be released from an area on the ground and then
carried through the three-dimensional atmosphere by wind and diffusion.
Measurements, however, might only cover a two-dimensional concentration field on a vertical plane \citep{hashmonay2001field}.
Related settings include surface-pressure prediction in
aerodynamics~\citep{li2023gino}, wall-shear-stress prediction in
hemodynamics~\citep{suk2024mesh}, or nearfield acoustic
holography~\citep{olivieri2021physics}.

In these settings, the mapping between lower-dimensional source and observation submanifolds is still governed by the full ambient medium. The restricted state is therefore generally not closed and cannot be modeled independently of the surrounding domain. Classical PDE solvers retain this dependence by resolving the full ambient state, but consequently compute throughout the volume even when only the response on the observation submanifold is needed. 
This additional computation increases prediction time, which can limit its use in online prediction or planning settings.

We introduce the \emph{Green's Observation Operator} (GObO), which learns a time-dependent Green's kernel directly between the source and observation submanifolds, avoiding full-domain computation. GObO enables rapid prediction under many different source terms and generalizes across source configurations beyond those observed during training.
Our main contributions are:
\begin{itemize}
\item We introduce the \textbf{Green's Observation Operator (GObO)}, a time-dependent Green's kernel model acting directly between source and observation submanifolds conditioned on the ambient medium.

\item We develop a \textbf{structured kernel parametrization} and prove its exponential approximation rate for the restricted heat kernel (Theorem~\ref{thm:spectral-approx}), together with exact finite-state recurrent evaluation and its stability (Propositions~\ref{prop:streaming}--\ref{prop:streaming-stability}).

\item We \textbf{benchmark GObO} against PDE solvers, neural operators, and learned Green's function models. It achieves $4$--$8\times$ lower error under source generalization, up to $3.5\times$ lower error than the learned Green's function baseline, and over $100\times$ faster evaluation of new sources than black-box neural operators.

\end{itemize}

\section{Prediction between submanifolds needs a tailored approach}
\label{sec:submanifold-setting}
In this section, we formalize prediction between source and observation submanifolds and motivate a model that works directly on the submanifolds while retaining the ambient dynamics.

\subsection{Submanifolds of closed systems are not closed}
We study a map between \emph{lower-dimensional} submanifolds whose output, unlike the full PDE state, does not form a closed dynamical system.
We consider a time-dependent PDE on an ambient spatial domain
\(\Omega\subset\mathbb{R}^d\) over a time interval \([0,T]\). Its solution is
\(u_\alpha(x,t)\), while
\(\alpha:\Omega\to\mathbb{R}^p\) describes the known medium and \(f\) drives the system. With differential operator \(\mathcal{P}_{\alpha}\), we write the PDE as
\begin{equation}
\mathcal{P}_\alpha u_\alpha=f(x,t),
\qquad (x,t)\in\Omega\times(0,T],
\label{eq}
\end{equation}
together with suitable initial and boundary conditions.
The source \(f\) acts on a source submanifold
\(\Sigma_S \subseteq \Omega\), i.e. $f|_{\Omega \backslash \Sigma_S} \equiv 0$. The solution is observed on a possibly different
observation submanifold \(\Sigma_O \subseteq \Omega\). Denoting the observed trace by
\(v_\alpha:=u_\alpha|{\Sigma_O}\), we seek the map
\begin{equation}
\mathcal{S}_\alpha:\quad
f
\longmapsto
v_\alpha.
\label{eq:restricted-solution-map}
\end{equation}
Although both source and output are lower-dimensional, this map still depends on the full ambient medium.
Therefore, the observed trace on \(\Sigma_O\) is generally not closed. While the full state
\(u_\alpha(\cdot,t)\) determines its future, its restriction \(v_\alpha(\cdot,t)\) generally does not. Information can propagate through the unobserved domain and later influence \(\Sigma_O\) again. Consequently, the surrounding volume therefore acts as a hidden memory.
We next consider two approaches to this problem. Classical PDE solvers compute the solution in the full ambient domain, whereas data-driven methods seek to learn the source-to-observation map directly.

\subsection{PDE solvers require closed systems}



Classical PDE solvers obtain the observation trace by first solving the closed dynamical system in the full ambient domain. They typically compute \(u_\alpha\) by discretizing the governing equations throughout the full volume \citep{leveque2007finite}. The resulting state can then be restricted to \(\Sigma_O\), but this requires computing the full solution even when only the lower-dimensional trace is needed. Moreover, because the trace is not a closed state, each new source generally requires solving the ambient PDE again. This overhead accumulates across repeated predictions and, on the heat benchmark of Section~\ref{sec:experiments}, makes the finite-volume solver about \(2750\times\) slower than our model, as shown in Table~\ref{tab:timing-1k}.

\begin{table}[ht]
\centering
\footnotesize
\setlength{\tabcolsep}{4pt}
\begin{tabular}{lcc}
\toprule
 & Ours & Finite-volume solver \\
\midrule
Time for 1000 source predictions/solves & $\mathbf{1.7}$ s & $1.3$ h \\
\bottomrule
\end{tabular}
\caption{Classical solvers repeat a volumetric solve for every source although only the lower-dimensional trace is required, making repeated prediction expensive. See details in Section~\ref{sec:exp_reuse}.}
\label{tab:timing-1k}
\end{table}

\subsection{Black-box approaches miss key operator structure or dynamics}

General black-box surrogates such as DeepONet and the Fourier Neural Operator
(FNO) can learn mappings between function spaces, including on submanifolds,
but miss key structure of our setting
\citep{lu2021learning,li2021fourier,kovachki2023neural}. They can be conditioned
on the ambient medium \(\alpha\), but process \(\alpha\) and the source \(f\)
jointly through a nonlinear mapping. Each new source therefore requires another
network evaluation, even when the medium is unchanged, and generalization to
source configurations beyond those observed during training is not built into
the representation.

Neural Green's functions provide a more structured alternative by learning a
source-independent kernel that is applied to different sources
\citep{boulle2022learning,yoo2026neural}. This makes them well-suited to
generalization across sources. For transient problems, however, the Neural
Green's Operator (NGO) and DGNet learn a one-step kernel and roll it out
iteratively~\citep{melchers2026ngo,tan2026dgnet}. 
This requires a closed system
state, which \(v_\alpha\) does not provide, making these constructions not
directly applicable to our setting.

\subsection{Proposed GObO directly models non-closed submanifold dynamics}

The proposed Green's Observation Operator addresses these limitations by representing the effect of the ambient dynamics directly between the source and observation submanifolds. Its time-dependent kernel captures how the full medium couples the two submanifolds without reconstructing the volumetric state or requiring the observation trace to form a closed dynamical system. Once this medium-dependent representation is learned, new sources can be evaluated without another network pass.

We introduce a structured parametrization of this kernel in the following sections. Besides supporting efficient approximation, it enables a recurrent computation that compresses the system memory into a running state. We first introduce the Green's-function formulation underlying this construction.

\section{A primer on Green's functions}
\label{sec:background-green}
Green's functions provide a useful representation of the solution of linear PDEs.
We briefly review the ideas needed below and refer to
\citet{stakgold1979green} for a comprehensive treatment. In the notation of
Section~\ref{sec:submanifold-setting}, a Green's function is the response to
a unit impulse at location $y$ and time $s$:
\[
\mathcal P_{\alpha,(x,t)}
G_\alpha(x,t;y,s)
=
\delta(x-y)\delta(t-s).
\]
For a time-independent linear system, superposition gives the response to a
source $F$ as
\begin{equation}
\label{eq:green-volume}
    u_\alpha(x,t)
    =
    \int_0^t\int_\Omega
        G_\alpha(x,y,t-s)F(y,s)
    \,dy\,ds\,.
\end{equation}
This simplified display assumes zero initial data and homogeneous Neumann
boundary conditions. Appendix~\ref{app:green-proofs} develops the full
setting with initial and boundary inputs. The kernel $G_\alpha$ depends on
the medium $\alpha$, but not on $F$, so it can be reused across source
histories.

We now specialize the general problem in
Section~\ref{sec:submanifold-setting} to the parabolic PDE family studied
throughout this work:
\begin{equation}
\label{eq:parabolic}
    \partial_t u_\alpha
    +
    b(x)\cdot\nabla u_\alpha
    -
    \nabla\cdot
    \big(
        \kappa(x)\nabla u_\alpha
    \big)
    =
    F,
\end{equation}
Here $\alpha=(b,\kappa)$ collects the transport field $b$ and the spatially
varying diffusivity $\kappa$. Setting $b=0$ gives the heat equation, while
nonzero $b$ gives advection--diffusion.

\begin{proposition}[Existence of the Green's kernel, \emph{informal}]
\label{prop:green-existence}
Under the standard assumptions in Appendix~\ref{app:green-proofs}, for each
admissible $\alpha=(b,\kappa)$ the parabolic problem
\eqref{eq:parabolic} with homogeneous Neumann boundary conditions admits a
Green's kernel $G_\alpha$ that is continuous up to the boundary for positive
time.
\end{proposition}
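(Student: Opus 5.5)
Our plan is to obtain $G_\alpha$ by the classical parametrix (Levi) method, taking as given the standard assumptions of Appendix~\ref{app:green-proofs}: $\Omega$ is a bounded domain with $C^{2,\beta}$ (or at least $C^{1,\beta}$) boundary; $\kappa$ is Hölder continuous and uniformly positive, $0<\kappa_{\min}\le\kappa\le\kappa_{\max}$; and $b$ is bounded and Hölder continuous. Because the operator in \eqref{eq:parabolic} is time-independent, it suffices to construct a fundamental solution $G_\alpha(x,y,t)$ of $\partial_t-L$ with $Lu=\nabla\cdot(\kappa\nabla u)-b\cdot\nabla u$, satisfying $\kappa\,\partial_\nu G_\alpha=0$ on $\partial\Omega$ for $t>0$ and $G_\alpha(\cdot,y,t)\to\delta_y$ as $t\downarrow0$. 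Equivalently, $G_\alpha(\cdot,\cdot,t)$ is the integral kernel of the analytic semigroup $e^{tL_N}$, where $L_N$ is the Neumann realization of $L$.

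\textbf{Step 1 (semigroup and kernel existence).} We will set up $L_N$ through the sesquilinear form $a(u,v)=\int_\Omega \kappa\nabla u\cdot\nabla v+(b\cdot\nabla u)v$ on $H^1(\Omega)$. This form is bounded and, by Young's inequality, satisfies a Gårding inequality $a(u,u)+\lambda\|u\|_2^2\ge c\|u\|_{H^1}^2$. Lions' theory then gives that $-L_N$ generates an analytic $C_0$-semigroup on $L^2(\Omega)$. We will next show ultracontractivity, $\|e^{tL_N}\|_{L^1\to L^\infty}\le Ct^{-d/2}e^{\lambda t}$. For this we use Nash's inequality on the extension domain $\Omega$ (Neumann Sobolev embedding) together with $L^p$-invariance of the semigroup, which holds by a Beurling--Deny/positivity argument for the perturbed form. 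The Dunford--Pettis theorem then yields a bounded measurable kernel $G_\alpha(x,y,t)$ for every $t>0$. It also satisfies Gaussian upper bounds, obtained by Davies' exponential perturbation trick.

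\textbf{Step 2 (continuity up to the boundary).} The remaining task is to upgrade this measurable kernel to one that is continuous on $\overline\Omega\times\overline\Omega\times(0,\infty)$. We will write $G_\alpha(\cdot,y,t)=e^{(t/2)L_N}G_\alpha(\cdot,y,t/2)$, so that for fixed $y$ and $t$ the function is a weak solution of the Neumann problem with $L^2$ data, run for a positive time. De Giorgi--Nash--Moser estimates, in their boundary version for conormal problems on Lipschitz or $C^1$ domains, give local Hölder continuity in $(x,t)$ up to $\partial\Omega$. The constants depend only on $\kappa_{\min}$, $\kappa_{\max}$, $\|b\|_\infty$ and $\Omega$. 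The same argument applied to the adjoint operator (form $a^*$, with conormal condition $\kappa\partial_\nu G+(b\cdot\nu)G=0$) gives Hölder continuity in $y$. Since $G_\alpha(x,y,t)=G^*_\alpha(y,x,t)$, the two estimates combine into joint continuity. If the appendix instead assumes Hölder coefficients and a smooth boundary, we will replace this step by Schauder theory to get classical $C^{2,1}$ regularity. Finally, we verify the representation \eqref{eq:green-volume} for $F\in L^2$ via Duhamel's formula for the semigroup.

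\textbf{Main obstacle.} The delicate point is regularity at $\partial\Omega$: interior estimates are standard, but continuity up to the boundary requires boundary De Giorgi--Nash estimates for the conormal condition. We will handle this by flattening the boundary locally and reflecting evenly, which preserves the divergence structure and the ellipticity bounds. The reflected equation is then an interior problem, and the interior estimates apply.
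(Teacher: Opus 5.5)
Your argument is sound, but it reaches the statement by a different route than the paper. The paper's proof is a short citation chain. \citet{choi2013green} construct the Neumann Green's function for divergence-form parabolic equations with bounded measurable coefficients. \citet{nittka2011regularity} supplies the representative that is continuous up to the boundary of a Lipschitz domain at positive times. The drift $b\cdot\nabla u$ is handed off to the non-symmetric kernel theory of \citet{daners2000heat} as a bounded lower-order term.

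You instead work in a single framework: the form on $H^1(\Omega)$, Nash-type ultracontractivity with Dunford--Pettis for the kernel, Davies' perturbation for Gaussian bounds, and flattening plus even reflection plus De Giorgi--Nash--Moser, applied to the operator and its adjoint, for joint continuity on $\overline\Omega\times\overline\Omega\times(0,\infty)$. This gives a self-contained proof that treats drift, adjoint and boundary regularity uniformly, rather than combining a drift-free construction with a separate perturbation theory. It also does not use symmetry of $\kappa$, and it derives along the way the Gaussian bound in \eqref{eq:gaussian}, which the paper only quotes after its proof. The cost is length and routine technicalities: choosing a continuous representative of an a.e.-defined kernel, and keeping the H\"older constants uniform in $y$ and in $t$ on compact subsets of $(0,\infty)$.

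Two adjustments are needed to match the paper's setting.

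First, the appendix assumes only a bounded Lipschitz domain, a bounded measurable, symmetric, uniformly elliptic $\kappa$, and $b\in L^\infty$ that is divergence-free with zero normal trace. It does not assume H\"older coefficients or a $C^{1,\beta}$ boundary. The parametrix method announced in your first sentence and the Schauder fallback are therefore unavailable. Your Steps 1--2 never use them, though, and the reflection argument still works after a bi-Lipschitz flattening of a Lipschitz boundary, because it only needs the weak formulation with bounded measurable coefficients.

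Second, the Beurling--Deny criterion gives $L^\infty$-contractivity of $e^{tL_N}$ for free, since $\nabla\bigl[(1\wedge|u|)\operatorname{sgn}u\bigr]$ and $(|u|-1)^+$ have disjoint supports. $L^1$-contractivity does not come for free, and your Nash step needs it for the $L^1\to L^2$ bound. Applied to the adjoint form, the criterion reduces to $\int_\Omega b\cdot\nabla(|u|-1)^+\,dx\ge 0$. This holds, with equality, exactly because $\nabla\cdot b=0$ and $b\cdot n=0$, so this is where those assumptions enter your proof. Without them you would need a separate argument, for example local boundedness via Moser iteration for the adjoint equation.
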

The precise statement and proof are given in
Appendix~\ref{app:green-proofs}.
Under mild integrability assumptions on $F$, the integral in
\eqref{eq:green-volume} is finite \citep{daners2000heat}.
Because the PDE is linear, a single Green's kernel can predict the response
to any source in a fixed medium through \eqref{eq:green-volume}. However,
the kernel changes with the medium and is available analytically only in a few simple cases
\citep{davies1989heat}.
The next section therefore develops a method for
learning the map \(\alpha\mapsto G_\alpha\) in our setting of prediction between submanifolds.

\section{Green's Observation Operator}
\label{sec:method}
This section develops the Green's
Observation Operator (GObO).
Our construction first restricts the ambient Green kernel to the source and observation submanifolds and then introduces a tractable parametrization for learning and prediction.

\subsection{Restricting the ambient green kernel to submanifolds}

Rather than learning the ambient kernel $G_\alpha$ between arbitrary points in
$\Omega$, GObO learns only its restriction from sources on $\Sigma_S$ to
observations on $\Sigma_O$, as introduced in
Section~\ref{sec:submanifold-setting}.
To express this restricted map in
coordinates, we use bi-Lipschitz $C^1$ charts
\begin{equation}
\label{eq:manifolds}
    X_S:U_S\subset\mathbb{R}^{m_S}\to\overline{\Omega}
    \qquad\text{and}\qquad
    X_O:U_O\subset\mathbb{R}^{m_O}\to\overline{\Omega}
\end{equation}
that define a source manifold $\Sigma_S:=X_S(U_S)$ and an observation manifold $\Sigma_O:=X_O(U_O)$ respectively.
In these coordinates, restricting the observation variable is straightforward
and simply amounts to evaluating the solution on $\Sigma_O$, so the observed trace from
\eqref{eq:restricted-solution-map} becomes
\(v_\alpha(\xi,t) := u_\alpha(X_O(\xi),t)\).
Restricting the source variable requires a different construction, since
\(\Sigma_S\) generally is a null set in the ambient space and its contribution
would disappear under na\"ive integration in \eqref{eq:green-volume}.

Instead, we will follow the standard singular-source construction used in immersed-interface and immersed-boundary methods \citep{leveque1994immersed,peskin2002immersed,kandilarov2007immersed}.
First suppose that $\Sigma_S\subset\Omega$ and let $f_\Sigma(y,t)$ denote source
strength per unit surface measure. For smooth test functions
$\varphi \in C_{c}^\infty (\Omega)$, with $d\sigma$ denoting the surface
measure on $\Sigma_S$, we represent this source by the distribution
\begin{equation}
\label{eq:surface-source}
    F(\cdot,t)
    :=
    f_\Sigma(\cdot,t)\delta_{\Sigma_S},
    \qquad
    \langle F(\cdot,t),\varphi\rangle
    =
    \int_{\Sigma_S}
        f_\Sigma(y,t)\varphi(y)
    \,d\sigma(y),
\end{equation}
which can also be obtained as the distributional limit of smooth volumetric
sources supported in shrinking neighborhoods of $\Sigma_S$ \citep{friedlander1998introduction}.
Substituting \eqref{eq:surface-source} into \eqref{eq:green-volume} reduces the ambient to a surface integral.
In chart coordinates, and using the area formula, this can be written as
\begin{equation}
\label{eq:restricted-kernel}
    v_\alpha(\xi,t) = \int_0^t\int_{U_S} K_\alpha(\xi,\eta,t-s)\, f(\eta,s) \,d\eta\,ds,
\end{equation}
with \(f(\eta,t) := f_\Sigma(X_S(\eta),t)\) and surface element \(J_S(\eta) := \sqrt{\det\!\left(DX_S(\eta)^\top DX_S(\eta)\right)}\) directly absorbed into the kernel
\(K_\alpha(\xi,\eta,\tau):= G_\alpha\!\left( X_O(\xi),X_S(\eta),\tau \right)J_S(\eta).\)
Absorbing the surface element into \(K_\alpha\) keeps inference in the form of a standard chart integral.

If we instead consider \(\Sigma_S\subset\partial\Omega\) as an inward Neumann flux, we treat $F$ as a usual boundary condition.
Appendix~\ref{app:green-proofs} shows that, in the absence of volumetric forcing, this yields the same integral representation as in \eqref{eq:restricted-kernel}. Therefore, in the following, we use \eqref{eq:restricted-kernel} for both interior and boundary source manifolds.
The regularity result below provides the corresponding continuous extension of the kernel to the boundary, so we also allow \(\Sigma_O \subset \bar\Omega\).
It remains to verify that restricting the kernel to the measure-zero source
manifold preserves integrability.
\begin{proposition}[Integrability of the restricted kernel, \emph{informal}]
\label{prop:restricted-integrability}
For every finite $T>0$, the source integral of the restricted kernel is
bounded by $C_T\tau^{-1/2}$ for $0<\tau\leq T$, uniformly over the
observation manifold. In particular, the kernel is integrable over source
space and time.
\end{proposition}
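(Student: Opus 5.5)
The plan is to reduce the statement to a Gaussian (Aronson-type) upper bound on the ambient kernel and then integrate that Gaussian over a set of codimension at least one. Under the standing assumptions of Appendix~\ref{app:green-proofs}, we take as given the estimate behind Proposition~\ref{prop:green-existence}: $\kappa$ is uniformly elliptic and bounded, $b$ is bounded, $\partial\Omega$ is Lipschitz, and Neumann conditions are imposed. For $0<\tau\le T$ this estimate reads
\[
|G_\alpha(x,y,\tau)|\le C_1\,\tau^{-d/2}\exp\!\Big(-\frac{|x-y|^2}{C_2\tau}\Big),\qquad x,y\in\overline\Omega.
\]
Here $C_1,C_2$ depend on $T$, the ellipticity constants, $\|b\|_\infty$ and $\Omega$. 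This is the classical Aronson bound. For Neumann problems on bounded Lipschitz domains it holds for finite time horizons, e.g.\ \citep{daners2000heat}, with the drift absorbed into the $T$-dependence of $C_1$. If this is not already stated in the appendix, it is the main imported ingredient.

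Next, fix $\xi\in U_O$ and set $x=X_O(\xi)$. By definition of $K_\alpha$ and the area formula,
\[
\int_{U_S}|K_\alpha(\xi,\eta,\tau)|\,d\eta=\int_{\Sigma_S}|G_\alpha(x,y,\tau)|\,d\sigma(y).
\]
It therefore suffices to bound $\int_{\Sigma_S}\tau^{-d/2}e^{-|x-y|^2/(C_2\tau)}\,d\sigma(y)$ uniformly in $x\in\overline\Omega$. The key geometric input is an upper density bound for surface measure: for all $x\in\mathbb{R}^d$ and $r>0$,
\[
\sigma(\Sigma_S\cap B(x,r))\le C_3\,r^{m_S}.
\]
This follows from the bi-Lipschitz property of $X_S$. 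The preimage of $B(x,r)$ lies in a ball of radius $Lr$ in $U_S$ (centred at any preimage point), and $J_S\le L^{m_S}$. Combined with boundedness of $\Sigma_S$, this gives the bound for all $r$.

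With the density bound in hand, decompose $\Sigma_S$ into dyadic shells $A_k=\{y:2^{k-1}\sqrt\tau\le|x-y|<2^k\sqrt\tau\}$ together with the ball $|x-y|<\sqrt\tau$. This yields
\[
\int_{\Sigma_S}\tau^{-d/2}e^{-|x-y|^2/(C_2\tau)}\,d\sigma\le C_1C_3\,\tau^{(m_S-d)/2}\sum_{k\ge0}2^{km_S}e^{-4^{k-1}/C_2}.
\]
The series converges, so the right-hand side equals $C\,\tau^{(m_S-d)/2}$. Since $m_S\le d-1$ and $\tau\le T$, this is at most $C\,T^{(d-1-m_S)/2}\,\tau^{-1/2}=:C_T\tau^{-1/2}$. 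The same bound applies to boundary source manifolds, because the appendix reduces the Neumann-flux case to the same integral representation \eqref{eq:restricted-kernel}. Integrability over source space and time then follows from $\int_0^T\tau^{-1/2}\,d\tau=2\sqrt T<\infty$ and Tonelli's theorem. Uniformity over the observation manifold is automatic, since nothing depended on $x$ beyond $x\in\overline\Omega$.

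The main obstacle is the first step. We need a Gaussian bound that holds uniformly up to the boundary, for the Neumann problem with drift, and that covers both $x$ and $y$ in $\partial\Omega$; continuity alone, as given by Proposition~\ref{prop:green-existence}, is not enough. We would first try to cite Daners' Neumann heat kernel bounds and treat the drift perturbatively via a Duhamel argument on $[0,T]$. If that fails for rough boundaries, a fallback is to assume a $C^{1,\alpha}$ boundary and use reflection to reduce to interior estimates.
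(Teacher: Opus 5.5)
Your argument is correct for codimension-one sources, which is the case the proposition covers: the precise version takes $U_S\subset\mathbb{R}^{d-1}$. It rests on the same imported ingredient as the paper, the Aronson-type Gaussian bound, which the paper also cites rather than proves. The route to the $\tau^{-1/2}$ factor is different.

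The paper stays in the chart. It picks a nearest point $X_S(\eta_0)$ on the compact set $\Sigma_S$. The triangle inequality and the lower Lipschitz bound then give $|X_S(\eta)-x|\geq|\eta-\eta_0|/(2L)$. Together with $J_S\leq L^{d-1}$, it integrates the resulting Gaussian in $\eta$ over all of $\mathbb{R}^{d-1}$ in closed form, which gives the explicit constant $C_T=C_GL^{d-1}(4\pi L^2/c)^{(d-1)/2}$.

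You instead work on the surface, using an upper density bound $\sigma(\Sigma_S\cap B(x,r))\leq C_3r^{m_S}$ and a dyadic-shell sum. Your version needs only that density bound. It therefore applies unchanged to finite unions of bi-Lipschitz patches, or to any upper Ahlfors-regular source set, and needs no nearest point. The paper's version is shorter for a single chart and gives sharper explicit constants.

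There are two harmless slips. The preimage of $B(x,r)$ lies in a ball of radius $2Lr$, not $Lr$. The innermost ball term should carry Gaussian factor $1$, not $e^{-1/(4C_2)}$.

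One step is genuinely wrong and must be removed: the claim that $\tau^{(m_S-d)/2}\leq T^{(d-1-m_S)/2}\tau^{-1/2}$ for $\tau\leq T$ when $m_S<d-1$. Write $\tau^{(m_S-d)/2}=\tau^{-(d-1-m_S)/2}\tau^{-1/2}$. The extra factor blows up as $\tau\to0$, so the inequality runs the other way and cannot be repaired by a power of $T$.

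This is not only a bookkeeping error; the bound itself is false in codimension at least two. Take the constant-coefficient heat kernel and an observation point on a smooth curve $\Sigma_S\subset\mathbb{R}^3$. There the source integral behaves like $\tau^{-1}$ as $\tau\to0$, and its time integral diverges logarithmically. State your proof for $m_S=d-1$, which is exactly the paper's hypothesis. In that case the shell sum gives $\tau^{-(d-m_S)/2}=\tau^{-1/2}$ directly, with no $T$-dependent factor needed.
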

The precise statement and proof are given in
Appendix~\ref{app:green-proofs}.
In particular, this justifies that the integral in \eqref{eq:restricted-kernel} exists in all settings considered below.

\subsection{Spectral separation of time and space in the restricted kernel}
\label{sec:kernel-approximation}
Spectral decomposition of the kernel provides a tractable parametrization,
which is needed because a naive pointwise parametrization is prohibitively
expensive. The latter would represent the kernel with a coordinate-based network
\(\bigl((\xi,\eta,\tau), \alpha\bigr)\mapsto K_\alpha(\xi,\eta,\tau)\in \mathbb R\)
which requires too
many network evaluations to compute the space--time integral.
For a 2D domain discretized with 64 points per
spatial axis and 64 time steps, for instance, a single prediction would already require
$64^2\times64^2\times64>10^9$ evaluations. Our
parametrization is motivated by a classical spectral decomposition, which also
separates time and space in the restricted kernel.

Indeed, for strongly elliptic second-order operators on bounded Lipschitz domains, spectral expansions in
eigenfunctions and generalized eigenfunctions are established by
\citet{agranovich2008spectral}. If the operator has no nontrivial
generalized eigenfunctions, this reduces to the biorthogonal
eigenfunction expansion
\begin{equation}
\label{eq:general-spectral}
    G_\alpha(x,y,\tau)
    =
    \sum_{k=1}^{\infty}
        e^{-\lambda_k\tau}
        r_k(x)\ell_k(y),
\end{equation}
where $r_k$ and $\ell_k$ are the corresponding right and left
eigenfunctions and eigenvalues \(\lambda_k\), which may be complex.
We restrict this expansion to the source and observation manifolds and,
to obtain a tractable parametrization for optimization, truncate the sum
after $M$ terms:
\begin{equation}
\label{eq:restricted-separable}
    K_\alpha(\xi,\eta,\tau)
    =
    \sum_{k=1}^{\infty}
        e^{-\lambda_k\tau}
        \psi_k(\xi)\varphi_k(\eta)
    \approx
    \sum_{k=1}^{M}
        e^{-\lambda_k\tau}
        \psi_k(\xi)\varphi_k(\eta)
    =:
    K_{\alpha,M}(\xi,\eta,\tau),
\end{equation}
with restricted spatial factors
\(\psi_k(\xi) := r_k(X_O(\xi))\)
and
\(\varphi_k(\eta) := \ell_k(X_S(\eta))J_S(\eta)\).
Thus, the source-side factor absorbs the surface element.

\subsection{Exponential convergence of spectral truncation for the heat kernel}

For the heat equation, this truncation is particularly well behaved, and its
approximation error decays exponentially in $M^{2/d}$ with the number $M$ of
retained terms. This behavior follows from the spectral structure of the heat
kernel. Its expansion holds pointwise, and the regular growth of the underlying
operator's eigenvalues causes higher modes to be rapidly suppressed at positive
times. Consequently, once we stay a positive time $\tau_0$ away from the
singularity at $\tau=0$, the restricted kernel satisfies the explicit
truncation bound stated next.

\begin{theorem}[Approximation of the restricted heat kernel, \emph{informal}]
\label{thm:spectral-approx}
Assume the setting of Propositions~\ref{prop:green-existence} and
\ref{prop:restricted-integrability} with $b=0$. For each fixed medium
$\alpha$ and every $\tau_0>0$, there are constants $C_{\tau_0},c>0$, with
$c$ independent of $\tau_0$, such that
\begin{equation}
\label{eq:spectral-bound}
    \sup_{\tau\geq\tau_0}
    \left\|
        K_\alpha(\cdot,\cdot,\tau)
        -
        K_{\alpha,M}(\cdot,\cdot,\tau)
    \right\|_{L^2(U_O\times U_S)}
    \leq
    C_{\tau_0}
    e^{-c\tau_0M^{2/d}}.
\end{equation}
\end{theorem}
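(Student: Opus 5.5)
With $b=0$ the operator $A_\alpha u = -\nabla\cdot(\kappa\nabla u)$ with homogeneous Neumann conditions is self-adjoint and nonnegative on $L^2(\Omega)$, and it has compact resolvent. The plan is to exploit this structure. The eigenvalues $0=\lambda_1\le\lambda_2\le\dots$ are real, the eigenfunctions $r_k=\ell_k=e_k$ form an orthonormal basis, and the expansion \eqref{eq:general-spectral} holds. For $\tau>0$ it converges absolutely and uniformly on $\overline\Omega\times\overline\Omega$; this follows from continuity of $G_\alpha$ up to the boundary (Proposition~\ref{prop:green-existence}) together with Mercer-type arguments. The truncation error is therefore the tail
\[
R_M(\xi,\eta,\tau)=\sum_{k>M}e^{-\lambda_k\tau}\,e_k(X_O(\xi))\,e_k(X_S(\eta))\,J_S(\eta),
\]
and the proof reduces to bounding this tail pointwise.

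\textbf{Step 1 (pointwise eigenfunction bounds).} From the ultracontractivity bound $\sup_{x,y}|G_\alpha(x,y,t)|\le C t^{-d/2}$ for $0<t\le1$, which is the Gaussian/Nash bound available under the appendix assumptions, I obtain
\[
e^{-\lambda_k t}|e_k(x)|^2 \le \sum_j e^{-\lambda_j t}|e_j(x)|^2 = G_\alpha(x,x,t)\le Ct^{-d/2}.
\]
Choosing $t=\min(1,1/\lambda_k)$ gives $\|e_k\|_\infty\le C(1+\lambda_k)^{d/4}$, valid up to $\overline\Omega$, so the bound also holds on $\Sigma_O,\Sigma_S\subset\overline\Omega$.

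\textbf{Step 2 (Weyl lower bound).} I need $\lambda_k\ge c_0k^{2/d}-C_0$ for the Neumann problem on a bounded Lipschitz domain. One route is to compare with the Laplacian via $\kappa\ge\kappa_{\min}$ and the min--max principle, using Weyl's law for Neumann Lipschitz domains. A more self-contained route uses the trace bound
\[
\sum_k e^{-\lambda_k t}=\int_\Omega G_\alpha(x,x,t)\,dx\le C|\Omega|t^{-d/2}.
\]
This gives $k\,e^{-\lambda_k t}\le Ct^{-d/2}$. Setting $t=k^{-2/d}$ then yields $\lambda_k\ge k^{2/d}(\log k - C')$, which is even stronger than linear in $k^{2/d}$ for large $k$. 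Either route suffices. I regard this step as the main obstacle, because it is the place where the constants must be tracked so that $c$ depends only on the medium and the domain, and not on $\tau_0$.

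\textbf{Step 3 (summing the tail).} For $\tau\ge\tau_0$, combine the two estimates:
\[
|R_M|\le \|J_S\|_\infty\sum_{k>M}C(1+\lambda_k)^{d/2}e^{-\lambda_k\tau_0}.
\]
Split $e^{-\lambda_k\tau_0}=e^{-\lambda_k\tau_0/2}e^{-\lambda_k\tau_0/2}$. The first factor absorbs the polynomial, since $(1+\lambda)^{d/2}e^{-\lambda\tau_0/2}\le C_{\tau_0}$. For the second, Step 2 gives $\lambda_k\ge c_0k^{2/d}-C_0$, and a sum-versus-integral comparison yields
\[
\sum_{k>M}e^{-c_0\tau_0k^{2/d}/2}\le C'_{\tau_0}e^{-c_0\tau_0M^{2/d}/2}.
\]
The prefactor may depend on $\tau_0$, for example polynomially through $M^{1-2/d}$ factors absorbed by giving up a fraction of the exponent, and it is harmless since it goes into $C_{\tau_0}$. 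Any constant $C_0$ is likewise absorbed into $C_{\tau_0}$ via $e^{C_0\tau_0}$. Here I use that $\sup_{\tau\ge\tau_0}$ is attained at $\tau=\tau_0$, by monotonicity of each term.

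\textbf{Step 4 ($L^2$ norm).} The pointwise bound is uniform in $(\xi,\eta)$, and $U_O,U_S$ are bounded chart domains, taking bounded charts or using the bi-Lipschitz property. Integrating the square over $U_O\times U_S$ therefore gives \eqref{eq:spectral-bound} with $c=c_0/2$, which is independent of $\tau_0$.
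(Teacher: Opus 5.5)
Your architecture matches the paper's: a sup bound $\|e_k\|_\infty\lesssim(1+\lambda_k)^{d/4}$ from the heat-kernel bound at $t\approx 1/\lambda_k$, a Weyl-type lower bound from the heat trace, splitting $e^{-\lambda_k\tau_0}$ to absorb the polynomial factor, and a tail sum. Using Mercer and the diagonal $G_\alpha(x,x,t)$ instead of the paper's Cauchy--Schwarz $L^2\to L^\infty$ estimate is a harmless variation. However, the self-contained computation in Step~2, which you single out as the crux, is wrong. With $t=k^{-2/d}$ the right-hand side is $Ct^{-d/2}=Ck$, so $k\,e^{-\lambda_k k^{-2/d}}\le Ck$ only yields $\lambda_k\ge-(\log C)\,k^{2/d}$. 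This is vacuous, because $C\ge1$: the term $\lambda_1=0$ alone contributes $1$ to the trace at $t=1$. Your claimed bound $\lambda_k\ge k^{2/d}(\log k-C')$ comes from dropping the factor $t^{-d/2}$, and it cannot hold, since by min--max $\lambda_k\le\kappa_+\lambda_k^{\mathrm{Dir}}(-\Delta)=O(k^{2/d})$. The repair is the paper's choice $t=1/\lambda_k$ whenever $\lambda_k\ge1$. This gives $k/e\le C\lambda_k^{d/2}$, that is, $\lambda_k\ge(eC)^{-2/d}k^{2/d}$, and the finitely many indices with $\lambda_k<1$ are absorbed into your $C_0$. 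Your other route, comparing with the Neumann Laplacian via min--max, does work. But it only works by importing Weyl-type bounds for Neumann problems on Lipschitz domains, which is what the trace argument is meant to avoid.

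A smaller slip is in Step~3. For $d\ge3$, the displayed estimate $\sum_{k>M}e^{-ak^{2/d}}\le C'_{\tau_0}e^{-aM^{2/d}}$ with $a=c_0\tau_0/2$ fails uniformly in $M$, because the sum is of order $M^{1-2/d}e^{-aM^{2/d}}$. You acknowledge that part of the exponent must be sacrificed, but then you cannot conclude with $c=c_0/2$. The paper uses $e^{-ak^{2/d}}\le e^{-aM^{2/d}/2}e^{-ak^{2/d}/2}$ for $k>M$, which gives $c=c_0/4$. This constant is still independent of $\tau_0$, and $C_{\tau_0}$ absorbs $\sum_k e^{-ak^{2/d}/2}$ and $e^{C_0\tau_0/2}$.
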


The precise statement and proof are given in
Appendix~\ref{app:kernel-proof}.
This justifies the truncated representation in
\eqref{eq:restricted-separable} and leaves the spatial factors as the
remaining objects to represent.

\subsection{Finite-dimensional parametrization of the restricted kernel}
We approximate the remaining spatial factors of the truncated kernel in finite-dimensional
bases $\boldsymbol{\Phi}_S(\eta)\in\mathbb{R}^{D_S}$ and
$\boldsymbol{\Phi}_O(\xi)\in\mathbb{R}^{D_O}$ on the source and observation
submanifolds, respectively. For each retained mode, we represent the two spatial
factors by coefficient vectors
$\mathbf{v}_k(\alpha)\in\mathbb{R}^{D_S}$ and
$\mathbf{u}_k(\alpha)\in\mathbb{R}^{D_O}$. This gives
\begin{equation}
\label{eq:kernel-model}
    K_{\theta,\alpha}(\xi,\eta,\tau)
    =
    \sum_{k=1}^{M}
        e^{-\lambda_k\tau}
        \underbrace{
        \big(
            \boldsymbol{\Phi}_O(\xi)^\top
            \mathbf{u}_k(\alpha)
        \big)
        \big(
            \mathbf{v}_k(\alpha)^\top
            \boldsymbol{\Phi}_S(\eta)
        \big)
        }_{=:\,\boldsymbol{\Phi}_O(\xi)^\top A_k(\alpha) \boldsymbol{\Phi}_S(\eta)}
\end{equation}
where
$A_k(\alpha):=\mathbf{u}_k(\alpha)\mathbf{v}_k(\alpha)^\top$
collects the coefficients.

A conditioning network maps $\alpha$
to the factors $\mathbf{u}_k(\alpha)$ and $\mathbf{v}_k(\alpha)$ of the
rank-one matrices $\{A_k\}$. 
Thus, the network learns only the medium-dependent coefficients, while source predictions are then obtained by numerical integration.
Since the predicted response is linear in
each $A_k$ for fixed rates, a squared loss is convex in the coefficient
matrices, but not in their individual factors.
Results on factored low-rank optimization suggest, under assumptions that our problem does not fully satisfy, that rank-one factors can stall at
stationary points that do not minimize the convex loss. Additional factor columns can supply descent directions away from such
points~\citep{burer2005local,haeffele2015global,zhang2025improved}. Motivated by
these results, we use a higher-rank factorization as an optimization heuristic
and parametrize
\[
    A_k(\alpha)
    =
    U_k(\alpha)V_k(\alpha)^\top,
\]
with
$V_k(\alpha)\in\mathbb{R}^{D_S\times L}$ and
$U_k(\alpha)\in\mathbb{R}^{D_O\times L}$ to allow for higher rank. We
elaborate on this in Appendix~\ref{app:search-rank}, and the experiment
in Section~\ref{sec:exp-kernel-structure} demonstrates the effect empirically.

\subsection{Markovian streaming via an augmented kernel state}
\label{sec:streaming}
The kernel parametrization in \eqref{eq:kernel-model} also admits a Markovian
augmented-state representation that can be used for recurrent streaming
prediction.
Since the dynamics on the observation submanifold are not closed,
naively evaluating the Green's integral would require reintegration over the
full source history at every time step with growing cost over longer prediction horizons.
On a time grid where the source is piecewise constant, the augmented formulation is exact and replaces this growing history by a fixed-dimensional state whose size is independent of the prediction horizon. 
The observed state can then be reconstructed from the
augmented state.

\begin{proposition}[Exact streaming evaluation, \emph{informal}]
\label{prop:streaming}
On a uniform grid $t_i=i\Delta$, a piecewise-constant projected source
$\widehat f^i$ can be evaluated exactly with one recurrent state per temporal
mode, updated by
\begin{equation}
\label{eq:stream-recurrence}
    g_k^{i+1}
    =
    e^{-\lambda_k\Delta}g_k^i
    +
    \left(\int_0^\Delta e^{-\lambda_k s}\,ds\right)
    V_k(\alpha)^\top\widehat f^i.
\end{equation}
The observation is reconstructed as
$v_\alpha(\xi,t_i)=\boldsymbol{\Phi}_O(\xi)^\top
\sum_{k=1}^{M}U_k(\alpha)g_k^i$.
\end{proposition}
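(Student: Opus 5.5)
We substitute the parametrized kernel \eqref{eq:kernel-model}, with $A_k(\alpha)=U_k(\alpha)V_k(\alpha)^\top$, into the restricted Green representation \eqref{eq:restricted-kernel} and use the piecewise-constant source to collapse the time integral. The projected source $\widehat f^i\in\mathbb{R}^{D_S}$ is interpreted as the vector of source moments $\widehat f^i=\int_{U_S}\boldsymbol{\Phi}_S(\eta)f(\eta,s)\,d\eta$ for $s\in[t_i,t_{i+1})$. This moment is constant on each interval by the piecewise-constant assumption. Exchanging the finite sum over $k$ with the integrals gives
\[
v_\alpha(\xi,t)=\boldsymbol{\Phi}_O(\xi)^\top\sum_{k=1}^M U_k(\alpha)\,g_k(t),\qquad g_k(t):=\int_0^t e^{-\lambda_k(t-s)}V_k(\alpha)^\top\widehat f(s)\,ds\in\mathbb{R}^L .
\]
This already yields the reconstruction formula once we set $g_k^i:=g_k(t_i)$. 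The interchange is justified because the sum is finite and each term is bounded for bounded $f$.

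Next we derive the recurrence. We split the integral defining $g_k(t_{i+1})$ into $[0,t_i]$ and $[t_i,t_{i+1}]$. On the first piece we use $e^{-\lambda_k(t_{i+1}-s)}=e^{-\lambda_k\Delta}e^{-\lambda_k(t_i-s)}$, which gives exactly $e^{-\lambda_k\Delta}g_k^i$. This is the semigroup property of the exponential, and it is the only structural feature needed; it holds for complex $\lambda_k$ as well. On the second piece the source is constant, equal to $\widehat f^i$, so it can be pulled out. Substituting $s'=t_{i+1}-s$ turns the remaining integral into $\int_0^\Delta e^{-\lambda_k s'}\,ds'$, which matches \eqref{eq:stream-recurrence}. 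The initial condition $g_k^0=0$ follows from zero initial data. Induction over $i$ then shows that the recurrence reproduces the integral exactly at every grid time.

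Finally we note the memory claim. The state $(g_k^i)_{k\le M}$ has dimension $ML$, which is independent of $i$ and hence of the horizon. The coefficient $\int_0^\Delta e^{-\lambda_k s}ds=(1-e^{-\lambda_k\Delta})/\lambda_k$ (or $\Delta$ when $\lambda_k=0$) can be precomputed once.

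The main obstacle is essentially bookkeeping about what ``projected source'' means. Exactness holds only with respect to the parametrized kernel $K_{\theta,\alpha}$, not the true kernel $K_\alpha$. It also requires the projection to be the $\boldsymbol{\Phi}_S$-moment, rather than, say, an $L^2$ projection onto a different basis. We will state this explicitly. We will also make sure that the within-step exponential is integrated exactly rather than by quadrature, since that is what makes the scheme exact.
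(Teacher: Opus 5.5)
Your proposal is correct and follows essentially the same route as the paper's proof. You define the projected moment $\widehat f(t)=\int_{U_S}\boldsymbol{\Phi}_S(\eta)f(\eta,t)\,d\eta$ and the state $g_k(t)$, split the integral at $t_i$ via $e^{-\lambda_k(t_{i+1}-s)}=e^{-\lambda_k\Delta}e^{-\lambda_k(t_i-s)}$, pull out the constant $\widehat f^i$ with the substitution $r=t_{i+1}-s$, and get the readout by substituting the parametrized kernel $K_{\theta,\alpha}$ into the restricted Green integral. Your caveat that exactness holds relative to $K_{\theta,\alpha}$ rather than $K_\alpha$ matches the paper's statement that the recurrence equals batch evaluation of the model kernel.
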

Furthermore, Proposition~\ref{prop:streaming-stability} shows that the exponential update makes this recurrence stable whenever the rates have nonnegative real parts.
For both propositions, precise statements and proofs are given in
Appendix~\ref{app:streaming-proof}.

\begin{proposition}[Stability of the streaming state, \emph{informal}]
\label{prop:streaming-stability}
If $\operatorname{Re}(\lambda_k)\geq0$, the recurrence in
Proposition~\ref{prop:streaming} does not amplify state perturbations. If the
inequality is strict, bounded projected sources yield uniformly bounded states.
\end{proposition}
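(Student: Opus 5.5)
The recurrence in Proposition~\ref{prop:streaming} is, for each mode $k$, an affine map $g\mapsto e^{-\lambda_k\Delta}g + c_k(\Delta)\,V_k(\alpha)^\top\widehat f^i$ with $c_k(\Delta):=\int_0^\Delta e^{-\lambda_k s}\,ds$. The modes decouple, so I would prove both claims mode by mode and then combine them over the finitely many $k=1,\dots,M$ (for instance with the Euclidean norm on the stacked state, or a maximum over modes).

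\textbf{Non-amplification.} Take two trajectories $g_k^i$ and $\tilde g_k^i$ driven by the same source but started from different states, or differing by a perturbation at some step. Their difference satisfies the homogeneous recurrence $e_k^{i+1}=e^{-\lambda_k\Delta}e_k^i$. Because $|e^{-\lambda_k\Delta}|=e^{-\operatorname{Re}(\lambda_k)\Delta}\le1$ whenever $\operatorname{Re}(\lambda_k)\ge0$, we get $\|e_k^{i}\|\le\|e_k^{j}\|$ for all $i\ge j$. Summing over modes gives the same bound for the full state. By linearity of the reconstruction $v=\boldsymbol\Phi_O^\top\sum_k U_k g_k$, the reconstructed observation error is bounded by a constant depending on $U_k$ and $\boldsymbol\Phi_O$ times the initial perturbation, uniformly in $i$. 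If perturbations are injected at every step, for example by rounding, the same estimate gives accumulated error at most the sum of the injected perturbations. This is the natural precise form of ``does not amplify''.

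\textbf{Uniform boundedness.} Assume $\operatorname{Re}(\lambda_k)>0$ and $\sup_i\|\widehat f^i\|\le B$. Unrolling the recurrence gives
\[
g_k^i=e^{-\lambda_k i\Delta}g_k^0+c_k(\Delta)\sum_{j=0}^{i-1}e^{-\lambda_k(i-1-j)\Delta}V_k^\top\widehat f^j .
\]
Bounding term by term and summing the geometric series $\sum_{n\ge0}e^{-\operatorname{Re}(\lambda_k)n\Delta}=(1-e^{-\operatorname{Re}(\lambda_k)\Delta})^{-1}$ yields
\[
\|g_k^i\|\le\|g_k^0\|+\frac{|c_k(\Delta)|\,\|V_k\|\,B}{1-e^{-\operatorname{Re}(\lambda_k)\Delta}},
\]
which is independent of $i$. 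To make this bound clean I would also note $|c_k(\Delta)|\le\int_0^\Delta e^{-\operatorname{Re}(\lambda_k)s}\,ds=(1-e^{-\operatorname{Re}(\lambda_k)\Delta})/\operatorname{Re}(\lambda_k)$. The denominator then cancels and the bound becomes $\|g_k^0\|+\|V_k\|B/\operatorname{Re}(\lambda_k)$, which is uniform even as $\Delta\to0$. Taking the maximum over the finitely many modes finishes the argument.

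\textbf{Main obstacle.} There is no genuine technical difficulty here. The point needing care is fixing what the precise statement should mean, in particular which norm is used and whether the perturbation is to the initial state or injected at each step. I would state the result for the stacked state with the Euclidean norm and treat both cases through the linear difference recursion.

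A second subtlety is complex $\lambda_k$. The one-step multiplier and $c_k$ must be bounded using the modulus $|e^{-\lambda s}|=e^{-\operatorname{Re}(\lambda)s}$, not treated as real numbers. For $\operatorname{Re}(\lambda_k)=0$ the boundedness claim genuinely fails, since a constant source makes $g_k^i$ grow linearly. This shows the strict inequality is necessary for the second claim but not the first.
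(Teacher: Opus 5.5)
Your proof is correct and is essentially the paper's. Non-amplification is the same difference recursion $\delta g_k^{i+1}=e^{-\lambda_k\Delta}\delta g_k^i$ with $|e^{-\lambda_k\Delta}|=e^{-\operatorname{Re}(\lambda_k)\Delta}\le 1$. For boundedness, the paper bounds the continuous convolution $g_k(t)=\int_0^t e^{-\lambda_k(t-s)}V_k(\alpha)^\top\widehat f(s)\,ds$ directly by $\sup_i\|V_k(\alpha)^\top\widehat f^i\|/\operatorname{Re}(\lambda_k)$. Your unrolled geometric sum together with $|c_k(\Delta)|\le(1-e^{-\operatorname{Re}(\lambda_k)\Delta})/\operatorname{Re}(\lambda_k)$ reproduces this bound, and it also allows a nonzero initial state $g_k^0$.

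One correction to your side remark on necessity: a constant source gives linear growth only when $\lambda_k=0$, the conserved zero rate. For $\lambda_k=i\omega\neq0$ a constant source keeps $\|g_k^i\|\le\|g_k^0\|+2\|V_k(\alpha)^\top\widehat f\|/|\omega|$, so state that example for $\lambda_k=0$.
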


\subsection{Reusing the linear kernel for mild nonlinearities}
\label{sec:nonlinear}

The learned kernel describes a linear system, but classical techniques allow
us to reuse it for certain mild nonlinearities. One such technique handles
source-like nonlinearities by alternating the learned linear
propagation with a nonlinear source update, in the spirit of Strang
splitting~\citep{strang1968construction}. When the nonlinear term can be evaluated from the predicted
trace, the streaming state makes these repeated updates inexpensive.
Appendix~\ref{app:radiative-feedback} develops this construction for
radiative heat loss.

A second technique handles state-dependent material laws through a change
of variables. For nonlinear heat conduction, the Kirchhoff transform either
reduces the problem exactly to the learned linear response or leaves a
transformed nonlinear system that can be advanced approximately using the
streaming state. Appendix~\ref{app:kirchhoff-construction} develops both
cases.

\section{Experiments}
\label{sec:experiments}

Our experiments show that GObO learns the dynamics between source and observation submanifolds while enabling efficient prediction and source generalization. We compare against black-box neural operators and learned Green's function approaches.

\subsection{Experimental setting}

We study three-dimensional heat conduction and advection--diffusion with different configurations of source and observation submanifolds. The datasets are obtained from finite-volume solves and specified in Appendix~\ref{app:TODO-datasets}. All results report the mean relative error and standard deviation over 3 seeds. 
Evaluation tests predictive accuracy across unseen inputs \(f\) and medium configurations \(\alpha\).
Our GObO implementation uses cosine basis functions whose coefficients are learned with a medium-conditioned U-FNO-style cosine spectral backbone \citep{wen2022ufno,liu2023spfno}. The temporal rates are not learned. Further details on the architecture, baselines and experimental details are given in Appendices~\ref{app:TODO-shared-architecture} and \ref{app:TODO-experiment-details}, with ablations of these choices in Appendix~\ref{app:results}.

\subsection{GObO learns a kernel that generalizes across sources}

The kernel learned by GObO remains accurate across different source classes,
although this comes at some cost on in-distribution accuracy. We
train all models on static source pulses and evaluate them on held-out static
pulses and on sources moving along a straight line. Figure~\ref{fig:exp1}
illustrates this transfer in the heat setting. While the medium-conditioned FNO-2D does not recover the location, shape, or magnitude of the moving response, GObO continues to track it closely.

\begin{table}[h]
\centering
\footnotesize
\setlength{\tabcolsep}{5pt}
\begin{tabular}{llcc}
\toprule
system & method & static & moving \\
\midrule
heat & GObO & $0.011 \pm 0.001$ & $\mathbf{0.012 \pm 0.000}$ \\
& adapted NGO, $S=1$ & $0.039 \pm 0.001$ & $0.043 \pm 0.000$ \\
& medium-conditioned FNO-2D & $\mathbf{0.001 \pm 0.000}$ & $0.101 \pm 0.004$ \\
\midrule
advection--diffusion & GObO, fixed rates & $0.114 \pm 0.018$ & $\mathbf{0.101 \pm 0.020}$ \\
& medium-conditioned FNO-2D & $\mathbf{0.104 \pm 0.009}$ & $0.430 \pm 0.015$ \\
\bottomrule
\end{tabular}
\caption{GObO transfers to unseen source configurations. All models are trained on static sources and evaluated on static and moving sources without retraining. Shown is the mean relative error and standard deviation.}
\label{tab:exp1}
\end{table}

Table~\ref{tab:exp1} quantifies this behavior for both heat and advection--diffusion. FNO-2D achieves the lowest error on static heat sources, but its error increases by about \(100\times\) for moving sources, whereas the error of GObO changes only slightly. 
The adapted NGO, another learned Green's function approach, shows similarly stable transfer across source classes, but with errors around \(4\times\) higher than GObO. 
This pattern is consistent across additional source and observation configurations, where the FNO-2D error increases by about \(4\)--\(7\times\) while GObO remains stable (Appendix~\ref{app:result-distinct-observations}). 
Additional black-box baselines, including FNO-3D and a backbone-matched spectral predictor, show similar behavior (Appendix~\ref{app:result-heat-transfer}).

\begin{figure}[t]
\centering
\includegraphics[width=0.92\linewidth]{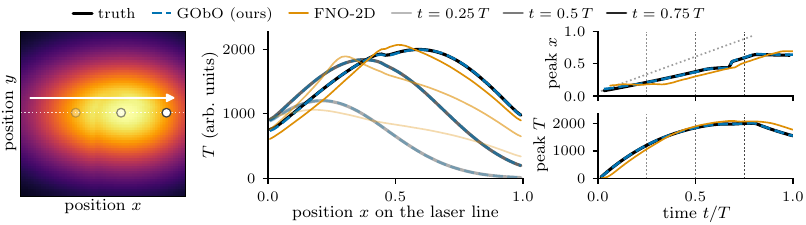}
\caption{GObO tracks the moving response, whereas FNO does not track it accurately. Left: true final temperature and source trajectory. Middle: temperature profiles at the three indicated times. Right: position and magnitude of the temperature peak.}
\label{fig:exp1}
\end{figure}

\subsection{The augmented kernel state captures non-closed submanifold dynamics}
\label{sec:exp-augmented-state}


Autoregressive prediction from the restricted state requires information about its history, whereas GObO carries the missing information in an augmented kernel state. We first verify this in a controlled regression setting by predicting the next state from different amounts of observation history. As shown in Figure~\ref{fig:impl-onestep-err}(a), additional history reduces the error by about \(3\times\) for the restricted trace, while a closed two-dimensional heat equation on the same manifold is predicted to machine precision from a single state. This isolates the effect of the non-closed dynamics.
The same issue appears in recurrent Green's models. Transient NGO learns a purely spatial kernel in a one-step recurrence with a spectral cap \(S\) for stabilization, but yields about \(3.5\times\) higher rollout error than GObO and degrades over longer horizons (Figure~\ref{fig:impl-onestep-err}(b)). To separate this effect from architectural differences, we also place a purely spatial GObO kernel in the same one-step setting, matched to the full model except for the temporal exponential rates. Its predictions again improve with additional history, but remain less accurate than full GObO (Figure~\ref{fig:impl-onestep-err}(c)). These results show that the non-closed dynamics are consequential and that GObO's augmented kernel state captures the information needed to propagate them.
Experimental details and complete results are given in Appendices~\ref{app:details-kernel-state} and~\ref{app:result-one-step}.

\begin{figure}[h]
\centering
\begin{subfigure}[b]{0.32\textwidth}
  \centering\includegraphics[width=0.8\linewidth]{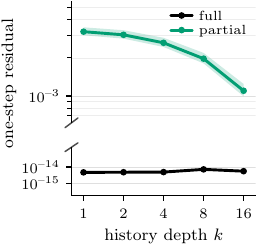}
  \caption{full and partial observations}
\end{subfigure}\hfill
\begin{subfigure}[b]{0.32\textwidth}
  \centering\includegraphics[width=0.8\linewidth]{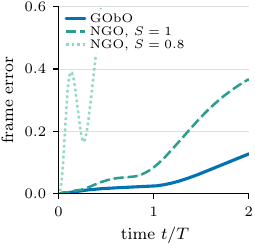}
  \caption{GObO vs. NGO}
\end{subfigure}\hfill
\begin{subfigure}[b]{0.32\textwidth}
  \centering\includegraphics[width=0.8\linewidth]{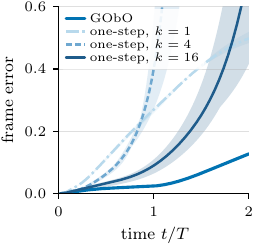}
  \caption{one-step GObO variant}
\end{subfigure}
\caption{The restricted observation requires additional state information for recurrent prediction. (a) Next-step regression error with increasing observation history, compared with a closed two-dimensional heat system. (b) Rollout error of GObO and transient NGO with spectral cap \(S\). (c)~History dependence of the purely spatial one-step GObO variant.}
\label{fig:impl-onestep-err}
\end{figure}

\subsection{The kernel parametrization captures the spatiotemporal structure}
\label{sec:exp-kernel-structure}

The proposed spectral kernel parametrization is important for capturing the joint spatial and temporal structure of the response. Replacing it with a naive tensor decomposition over space and time increases the error by \(3\)--\(4\times\) (Appendix~\ref{app:result-tensor}). This difference is also visible in the impulse response of Figure~\ref{fig:impl-lag}, where GObO captures the exponential temporal structure while the tensor ablation does not (Appendix~\ref{app:result-impulse-response}).
We further isolate the factor search rank \(L\) in a controlled example with curved manifolds, a homogeneous medium, and an analytically known kernel. Here, the task reduces to direct regression of the kernel coefficients. Increasing \(L\) from \(1\) to \(8\) reduces the fitting objective by about \(14\times\), with little improvement afterward, which motivates our choice of \(L=16\) as a safety margin. This agrees with the full-setting ablation in Appendix~\ref{app:result-gop-ablation}; the controlled recovery and full rank sweep are reported in Appendices~\ref{app:result-curved-recovery} and~\ref{app:result-rank}.

\subsection{GObO enables rapid source evaluation and nonlinear corrections}
\label{sec:exp_reuse}

Once learned, the GObO kernel supports inexpensive source evaluation and extensions of the underlying linear model. Constructing the kernel requires one neural-network pass per medium, comparable in cost to one FNO-2D prediction, while subsequent source evaluations are \(134\times\) faster on CPU and \(5.6\times\) faster on GPU. Table~\ref{tab:timing} summarizes these costs for batch size \(1\), with similar amortization for batched and streaming evaluation in Appendix~\ref{app:result-timing}.

\begin{table}[h]
\centering
\footnotesize
\setlength{\tabcolsep}{4pt}
\begin{tabular}{llcc}
\toprule
method & frequency & CPU & GPU, $b=1$ \\
\midrule
FV solver, matched accuracy & per source & $4.6$ s & n/a \\
FNO-2D forward & per source & $194.1$ ms & $2.50$ ms \\
GObO conditioning & per medium & $218.3$ ms & $10.1$ ms \\
GObO query & per source & $\mathbf{1.45}$ ms & $\mathbf{0.45}$ ms \\
\bottomrule
\end{tabular}
\caption{GObO constructs one kernel per medium and
then evaluates new sources faster than FNO-2D or the numerical solver.}
\label{tab:timing}
\end{table}

Its basis-function representation also supports arbitrary evaluation locations on the observation submanifold, with numerically negligible additional error at \(2\times\) the training resolution. The complete resolution-transfer study is given in Appendix~\ref{app:result-resolution-transfer}.
Finally, the same kernel further supports the nonlinear corrections of Section~\ref{sec:nonlinear} without retraining, improving accuracy by more than \(11\times\) for radiative feedback and up to \(29\times\) for temperature-dependent conductivity (Appendix~\ref{app:result-radiative-feedback}). The corrected traces closely follow the corresponding nonlinear reference solutions in Figures~\ref{fig:exp2-app} and~\ref{fig:exp3}.


\section{Related Work}
\label{sec:related}
We summarize the most directly related work here and provide an extended discussion of related work in Appendix~\ref{app:related-ext}.
\paragraph{Neural operators and surrogates.}
Neural operators can learn maps between function spaces, including maps from
sources on one submanifold to observations on another. However,
they usually treat this relation as a black box and do not explicitly impose
linearity in the source. GObO instead encodes this structure through a reusable
restricted Green kernel.
\paragraph{PDEs on manifolds.}
Recent methods solve PDEs posed intrinsically on manifolds or learn their
solution operators~\citep{burbulla2024pinn,chen2024norm,liao2025curvature}. In
our setting, the PDE instead evolves in the ambient domain and the observed
submanifold is not closed. GObO therefore models how the surrounding medium
couples the source and observation submanifolds.
\paragraph{Learning Green's functions.}
Learned Green functions often identify a fixed operator
\citep{boulle2022learning}. Recent methods condition steady-state
representations on geometry~\citep{yoo2026neural} or Green operators on PDE
coefficients, while transient approaches rely on fixed time-stepping schemes.
GObO instead learns
the continuous time-dependent kernel between submanifolds.

\section{Conclusion}
\label{sec:conclusion}

We introduced the Green's Observation Operator, a method for learning a Green's kernel that maps directly between source and observation submanifolds of a larger ambient domain. GObO preserves source linearity and causality and admits a horizon-independent streaming form, enabling rapid predictions and generalization to new sources. Across heat conduction and advection--diffusion, this structure trades some in-distribution accuracy for reliable source transfer and cheap reuse. The same kernel also supports resolution transfer, longer rollouts, and extensions to mild nonlinearities without retraining the linear kernel.
Future work could extend the approximation and stability guarantees to broader classes of PDEs and submanifold settings, and explore more general nonlinear dynamics.

\subsubsection*{Acknowledgments}
The main part of this work was conducted within the BMFTR awards 16IS24071A and 16IS24071B (PIAD). The authors further acknowledge support under BMBF award 01IW23005 (Eventful) and by the DFG through DFG FOR 5359 (ID 459419731) , TRR 375 (ID 511263698), SPP 2298 (ID 464252197), and SPP 2331 (IDs 441958259, 553345933, and 466468799) and by the Carl-Zeiss Foundation through the initiatives AI4ChemRisk and AI-Care.

\subsection*{Declaration of Generative AI Assistance}
As part of this work, generative AI tools were used for literature reviews, to assist in writing and editing texts, to implement code for training and evaluating numerical experiments, and to create scientific figures.
All AI-generated results were reviewed and verified by the authors.
Literature suggestions were cross-checked against the original sources, the code was inspected and tested, and the figures were validated based on the underlying numerical results.
The scientific ideas, methodological decisions, mathematical statements, experimental design, interpretation of the results, and conclusions were developed and evaluated by the authors.

\bibliography{iclr2027_conference}

@book{stakgold1979green,
  title={Green's functions and boundary value problems},
  author={Stakgold, Ivar and Holst, Michael J},
  volume={9},
  year={1979},
  publisher={Wiley New York}
}

@book{leveque2007finite,
  title={Finite difference methods for ordinary and partial differential equations: steady-state and time-dependent problems},
  author={LeVeque, Randall J},
  year={2007},
  publisher={SIAM}
}

@article{aronson1968non,
  title={Non-negative solutions of linear parabolic equations},
  author={Aronson, Donald G},
  journal={Annali della Scuola Normale Superiore di Pisa-Scienze Fisiche e Matematiche},
  volume={22},
  number={4},
  pages={607--694},
  year={1968}
}

@book{davies1989heat,
  title={Heat kernels and spectral theory},
  author={Davies, Edward Brian},
  number={92},
  year={1989},
  publisher={Cambridge university press}
}

@book{friedlander1998introduction,
  title={Introduction to the Theory of Distributions},
  author={Friedlander, Friedrich Gerard},
  year={1998},
  publisher={Cambridge University Press}
}

@article{agranovich2008spectral,
  title={Spectral boundary value problems in Lipschitz domains for strongly elliptic systems in Banach spaces H p $\sigma$ and B p $\sigma$},
  author={Agranovich, Mikhail Semenovich},
  journal={Functional Analysis and Its Applications},
  volume={42},
  number={4},
  pages={249--267},
  year={2008},
  publisher={Springer}
}

@article{daners2000heat,
  title={Heat kernel estimates for operators with boundary conditions},
  author={Daners, Daniel},
  journal={Mathematische Nachrichten},
  volume={217},
  number={1},
  pages={13--41},
  year={2000},
  publisher={Wiley Online Library}
}

@article{choi2013green,
  title={Green's function for second order parabolic systems with Neumann boundary condition},
  author={Choi, Jongkeun and Kim, Seick},
  journal={Journal of Differential Equations},
  volume={254},
  number={7},
  pages={2834--2860},
  year={2013},
  publisher={Elsevier}
}

@article{nittka2011regularity,
  title={Regularity of solutions of linear second order elliptic and parabolic boundary value problems on Lipschitz domains},
  author={Nittka, Robin},
  journal={Journal of Differential Equations},
  volume={251},
  number={4-5},
  pages={860--880},
  year={2011},
  publisher={Elsevier}
}

@article{necati1993heat,
  title={Heat conduction},
  author={Necati, Ozisik M and others},
  journal={ed: John Wiley \& Sons, Inc},
  year={1993}
}

@article{benner2015survey,
  title={A survey of projection-based model reduction methods for parametric dynamical systems},
  author={Benner, Peter and Gugercin, Serkan and Willcox, Karen},
  journal={SIAM review},
  volume={57},
  number={4},
  pages={483--531},
  year={2015},
  publisher={SIAM}
}

@article{carlberg2017galerkin,
  title={Galerkin v. least-squares Petrov--Galerkin projection in nonlinear model reduction},
  author={Carlberg, Kevin and Barone, Matthew and Antil, Harbir},
  journal={Journal of Computational Physics},
  volume={330},
  pages={693--734},
  year={2017},
  publisher={Elsevier}
}

@book{ouhabaz2009analysis,
  title={Analysis of heat equations on domains.(LMS-31)},
  author={Ouhabaz, El-Maati},
  year={2009},
  publisher={Princeton University Press}
}

@book{carslaw1906introduction,
  title={Introduction to the Mathematical Theory of the Conduction of Heat in Solids},
  author={Carslaw, Horatio Scott},
  year={1906},
  publisher={Macmillan and Company, limited}
}

@article{hochbruck2010exponential,
  title={Exponential integrators},
  author={Hochbruck, Marlis and Ostermann, Alexander},
  journal={Acta Numerica},
  volume={19},
  pages={209--286},
  year={2010},
  publisher={Cambridge University Press}
}

@article{lu2021learning,
  title={Learning nonlinear operators via DeepONet based on the universal approximation theorem of operators},
  author={Lu, Lu and Jin, Pengzhan and Pang, Guofei and Zhang, Zhongqiang and Karniadakis, George Em},
  journal={Nature machine intelligence},
  volume={3},
  number={3},
  pages={218--229},
  year={2021},
  publisher={Nature Publishing Group UK London}
}

@inproceedings{li2021fourier,
    title={Fourier Neural Operator for Parametric Partial Differential Equations},
    author={Zongyi Li and Nikola Borislavov Kovachki and Kamyar Azizzadenesheli and Burigede liu and Kaushik Bhattacharya and Andrew Stuart and Anima Anandkumar},
    booktitle={International Conference on Learning Representations},
    year={2021},
}

@article{wen2022ufno,
  title={U-FNO—An enhanced Fourier neural operator-based deep-learning model for multiphase flow},
  author={Wen, Gege and Li, Zongyi and Azizzadenesheli, Kamyar and Anandkumar, Anima and Benson, Sally M},
  journal={Advances in Water Resources},
  volume={163},
  pages={104180},
  year={2022},
  publisher={Elsevier}
}

@article{liu2023spfno,
  title={Spfno: Spectral operator learning for pdes with dirichlet and neumann boundary conditions},
  author={Liu, Ziyuan and Wu, Yuhang and Huang, Daniel Zhengyu and Zhang, Hong and Qian, Xu and Song, Songhe},
  journal={arXiv preprint arXiv:2312.06980},
  year={2023}
}

@article{kovachki2023neural,
  title={Neural operator: Learning maps between function spaces with applications to pdes},
  author={Kovachki, Nikola and Li, Zongyi and Liu, Burigede and Azizzadenesheli, Kamyar and Bhattacharya, Kaushik and Stuart, Andrew and Anandkumar, Anima},
  journal={Journal of Machine Learning Research},
  volume={24},
  number={89},
  pages={1--97},
  year={2023}
}

@inproceedings{brandstetter2022message,
    title={Message Passing Neural {PDE} Solvers},
    author={Johannes Brandstetter and Daniel E. Worrall and Max Welling},
    booktitle={International Conference on Learning Representations},
    year={2022},
}

@article{boussif2022magnet,
  title={MAgnet: Mesh agnostic neural PDE solver},
  author={Boussif, Oussama and Bengio, Yoshua and Benabbou, Loubna and Assouline, Dan},
  journal={Advances in Neural Information Processing Systems},
  volume={35},
  pages={31972--31985},
  year={2022}
}

@article{cho2025learning,
  title={Learning time-dependent PDE via graph neural networks and deep operator network for robust accuracy on irregular grids},
  author={Cho, Sung Woong and Lee, Jae Yong and Hwang, Hyung Ju},
  journal={Journal of Computational Physics},
  pages={114430},
  year={2025},
  publisher={Elsevier}
}

@article{boulle2022learning,
  title={Learning Green's functions associated with time-dependent partial differential equations},
  author={Boull{\'e}, Nicolas and Kim, Seick and Shi, Tianyi and Townsend, Alex},
  journal={Journal of Machine Learning Research},
  volume={23},
  number={218},
  pages={1--34},
  year={2022}
}

@article{yoo2026neural,
  title={Neural Green's functions},
  author={Yoo, Seungwoo and Yeo, Kyeongmin and Hwang, Jisung and Sung, Minhyuk},
  journal={Advances in Neural Information Processing Systems},
  volume={38},
  pages={47961--47982},
  year={2026}
}

@article{ayed2019learning,
  title={Learning dynamical systems from partial observations},
  author={Ayed, Ibrahim and de B{\'e}zenac, Emmanuel and Pajot, Arthur and Brajard, Julien and Gallinari, Patrick},
  journal={arXiv preprint arXiv:1902.11136},
  year={2019}
}

@article{ayed2022modelling,
  title={Modelling spatiotemporal dynamics from Earth observation data with neural differential equations},
  author={Ayed, Ibrahim and de B{\'e}zenac, Emmanuel and Pajot, Arthur and Gallinari, Patrick},
  journal={Machine Learning},
  volume={111},
  number={6},
  pages={2349--2380},
  year={2022},
  publisher={Springer}
}

@article{iakovlev2023learning,
  title={Learning space-time continuous latent neural PDEs from partially observed states},
  author={Iakovlev, Valerii and Heinonen, Markus and L{\"a}hdesm{\"a}ki, Harri},
  journal={Advances in Neural Information Processing Systems},
  volume={36},
  pages={26372--26395},
  year={2023}
}

@article{stephany2024weak,
  title={Weak-PDE-LEARN: A weak form based approach to discovering PDEs from noisy, limited data},
  author={Stephany, Robert and Earls, Christopher},
  journal={Journal of Computational Physics},
  volume={506},
  pages={112950},
  year={2024},
  publisher={Elsevier}
}

@article{kontolati2024learning,
  title={Learning nonlinear operators in latent spaces for real-time predictions of complex dynamics in physical systems},
  author={Kontolati, Katiana and Goswami, Somdatta and Em Karniadakis, George and Shields, Michael D},
  journal={Nature Communications},
  volume={15},
  number={1},
  pages={5101},
  year={2024},
  publisher={Nature Publishing Group UK London}
}

@article{li2025latent,
  title={Latent neural PDE solver: A reduced-order modeling framework for partial differential equations},
  author={Li, Zijie and Patil, Saurabh and Ogoke, Francis and Shu, Dule and Zhen, Wilson and Schneier, Michael and Buchanan Jr, John R and Farimani, Amir Barati},
  journal={Journal of Computational Physics},
  volume={524},
  pages={113705},
  year={2025},
  publisher={Elsevier}
}

@article{wang2024latent,
  title={Latent neural operator for solving forward and inverse pde problems},
  author={Wang, Tian and Wang, Chuang},
  journal={Advances in Neural Information Processing Systems},
  volume={37},
  pages={33085--33107},
  year={2024}
}

@article{seidman2022nomad,
  title={NOMAD: Nonlinear manifold decoders for operator learning},
  author={Seidman, Jacob and Kissas, Georgios and Perdikaris, Paris and Pappas, George J},
  journal={Advances in Neural Information Processing Systems},
  volume={35},
  pages={5601--5613},
  year={2022}
}

@article{mou2026neuralpod,
  title={Neural-POD: A plug-and-play neural operator framework for infinite-dimensional functional nonlinear proper orthogonal decomposition},
  author={Mou, Changhong and Lu, Binghang and Lin, Guang},
  journal={arXiv preprint arXiv:2602.15632},
  year={2026}
}

@article{zhong2025physics,
  title={Physics-informed geometry-aware neural operator},
  author={Zhong, Weiheng and Meidani, Hadi},
  journal={Computer Methods in Applied Mechanics and Engineering},
  volume={434},
  pages={117540},
  year={2025},
  publisher={Elsevier}
}

@article{wen2026geometry,
  title={Geometry aware operator transformer as an efficient and accurate neural surrogate for pdes on arbitrary domains},
  author={Wen, Shizheng and Kumbhat, Arsh and Lingsch, Levi and Mousavi, Sepehr and Zhao, Yizhou and Chandrashekar, Praveen and Mishra, Siddhartha},
  journal={Advances in Neural Information Processing Systems},
  volume={38},
  pages={155423--155501},
  year={2026}
}

@article{li2023geofno,
  author  = {Zongyi Li and Daniel Zhengyu Huang and Burigede Liu and Anima Anandkumar},
  title   = {Fourier Neural Operator with Learned Deformations for PDEs on General Geometries},
  journal = {Journal of Machine Learning Research},
  year    = {2023},
  volume  = {24},
  number  = {388},
  pages   = {1--26},
}

@article{zhao2025diffeomorphism,
  title={Diffeomorphism neural operator for various domains and parameters of partial differential equations},
  author={Zhao, Zhiwei and Liu, Changqing and Li, Yingguang and Chen, Zhibin and Liu, Xu},
  journal={Communications Physics},
  volume={8},
  number={1},
  pages={15},
  year={2025},
  publisher={Nature Publishing Group UK London}
}

@article{li2023gino,
  title={Geometry-informed neural operator for large-scale 3d pdes},
  author={Li, Zongyi and Kovachki, Nikola and Choy, Chris and Li, Boyi and Kossaifi, Jean and Otta, Shourya and Nabian, Mohammad Amin and Stadler, Maximilian and Hundt, Christian and Azizzadenesheli, Kamyar and others},
  journal={Advances in Neural Information Processing Systems},
  volume={36},
  pages={35836--35854},
  year={2023}
}

@article{chen2024norm,
  title={Learning neural operators on riemannian manifolds},
  author={Chen, Gengxiang and Liu, Xu and Meng, Qinglu and Chen, Lu and Liu, Changqing and Li, Yingguang},
  journal={National Science Open},
  volume={3},
  number={6},
  year={2024},
  publisher={Science China Press}
}

@article{burbulla2024pinn,
  title={Physics-informed neural networks for transformed geometries and manifolds},
  author={Burbulla, Samuel},
  journal={arXiv preprint arXiv:2311.15940},
  year={2023}
}

@inproceedings{liao2025curvature,
  title={Curvature-aware graph attention for PDEs on manifolds},
  author={Liao, Yunfeng and Guan, Jiawen and Li, Xiucheng},
  booktitle={Forty-second International Conference on Machine Learning},
  year={2025}
}

@article{suk2024mesh,
  title={Mesh neural networks for SE (3)-equivariant hemodynamics estimation on the artery wall},
  author={Suk, Julian and de Haan, Pim and Lippe, Phillip and Brune, Christoph and Wolterink, Jelmer M},
  journal={Computers in Biology and Medicine},
  volume={173},
  pages={108328},
  year={2024},
  publisher={Elsevier}
}

@article{olivieri2021physics,
  title={A physics-informed neural network approach for nearfield acoustic holography},
  author={Olivieri, Marco and Pezzoli, Mirco and Antonacci, Fabio and Sarti, Augusto},
  journal={Sensors},
  volume={21},
  number={23},
  pages={7834},
  year={2021},
  publisher={MDPI}
}

@article{sajadi2025temperature,
  title={Two-dimensional temperature field prediction with in-situ data in metal additive manufacturing using physics-informed neural networks},
  author={Sajadi, Pouyan and Dehaghani, Mostafa Rahmani and Tang, Yifan and Wang, G Gary},
  journal={Engineering Applications of Artificial Intelligence},
  volume={150},
  pages={110636},
  year={2025},
  publisher={Elsevier}
}

@article{hashmonay2001field,
  title={Field evaluation of a method for estimating gaseous fluxes from area sources using open-path Fourier transform infrared},
  author={Hashmonay, Ram A and Natschke, David F and Wagoner, Keith and Harris, D Bruce and Thompson, Edgar L and Yost, Michael G},
  journal={Environmental science \& technology},
  volume={35},
  number={11},
  pages={2309--2313},
  year={2001},
  publisher={ACS Publications}
}

@techreport{kim1975thermophysical,
  title={Thermophysical properties of stainless steels},
  author={Kim, Choong S},
  year={1975},
  institution={Argonne National Lab., Ill.(USA)}
}

@article{mukherjee2016printability,
  title={Printability of alloys for additive manufacturing},
  author={Mukherjee, Tuhin and Zuback, JS and De, A and DebRoy, Tarasankar},
  journal={Scientific reports},
  volume={6},
  number={1},
  pages={19717},
  year={2016},
  publisher={Nature Publishing Group UK London}
}

@misc{specialmetals2007in718,
  title={Special Metals Corporation},
  author={Alloys, High-Performance},
  year={2008}
}

@article{strang1968construction,
  title={On the construction and comparison of difference schemes},
  author={Strang, Gilbert},
  journal={SIAM journal on numerical analysis},
  volume={5},
  number={3},
  pages={506--517},
  year={1968},
  publisher={SIAM}
}

@article{ascher1997implicit,
  title={Implicit-explicit Runge-Kutta methods for time-dependent partial differential equations},
  author={Ascher, Uri M and Ruuth, Steven J and Spiteri, Raymond J},
  journal={Applied Numerical Mathematics},
  volume={25},
  number={2-3},
  pages={151--167},
  year={1997},
  publisher={Elsevier}
}

@article{mclachlan2002splitting,
  title={Splitting methods},
  author={McLachlan, Robert I and Quispel, G Reinout W},
  journal={Acta Numerica},
  volume={11},
  pages={341--434},
  year={2002},
  publisher={Cambridge University Press}
}

@article{leveque1994immersed,
  title={The immersed interface method for elliptic equations with discontinuous coefficients and singular sources},
  author={LeVeque, Randall J and Li, Zhilin},
  journal={SIAM Journal on Numerical Analysis},
  volume={31},
  number={4},
  pages={1019--1044},
  year={1994},
  publisher={SIAM}
}

@article{kandilarov2007immersed,
  title={The immersed interface method for two-dimensional heat-diffusion equations with singular own sources},
  author={Kandilarov, Juri D and Vulkov, Lubin G},
  journal={Applied numerical mathematics},
  volume={57},
  number={5-7},
  pages={486--497},
  year={2007},
  publisher={Elsevier}
}

@article{peskin2002immersed,
  title={The immersed boundary method},
  author={Peskin, Charles S},
  journal={Acta numerica},
  volume={11},
  pages={479--517},
  year={2002},
  publisher={Cambridge University Press}
}

@article{zhang2025improved,
  title={Improved global guarantees for the nonconvex Burer--Monteiro factorization via rank overparameterization: RY Zhang},
  author={Zhang, Richard Y},
  journal={Mathematical Programming},
  volume={213},
  number={1},
  pages={1009--1038},
  year={2025},
  publisher={Springer}
}

@article{ha2020equivalence,
  title={An equivalence between critical points for rank constraints versus low-rank factorizations},
  author={Ha, Wooseok and Liu, Haoyang and Barber, Rina Foygel},
  journal={SIAM Journal on Optimization},
  volume={30},
  number={4},
  pages={2927--2955},
  year={2020},
  publisher={SIAM}
}

@article{burer2005local,
  title={Local minima and convergence in low-rank semidefinite programming},
  author={Burer, Samuel and Monteiro, Renato DC},
  journal={Mathematical programming},
  volume={103},
  number={3},
  pages={427--444},
  year={2005},
  publisher={Springer}
}

@inproceedings{bhojanapalli2018smoothed,
  title={Smoothed analysis for low-rank solutions to semidefinite programs in quadratic penalty form},
  author={Bhojanapalli, Srinadh and Boumal, Nicolas and Jain, Prateek and Netrapalli, Praneeth},
  booktitle={Conference on learning theory},
  pages={3243--3270},
  year={2018},
  organization={PMLR}
}

@article{haeffele2015global,
  title={Global optimality in tensor factorization, deep learning, and beyond},
  author={Haeffele, Benjamin D and Vidal, Ren{\'e}},
  journal={arXiv preprint arXiv:1506.07540},
  year={2015}
}

@inproceedings{ge2017nospurious,
  title={No spurious local minima in nonconvex low rank problems: A unified geometric analysis},
  author={Ge, Rong and Jin, Chi and Zheng, Yi},
  booktitle={International conference on machine learning},
  pages={1233--1242},
  year={2017},
  organization={PMLR}
}

@article{zhang2018howmuch,
  title={How much restricted isometry is needed in nonconvex matrix recovery?},
  author={Zhang, Richard and Josz, C{\'e}dric and Sojoudi, Somayeh and Lavaei, Javad},
  journal={Advances in neural information processing systems},
  volume={31},
  year={2018}
}

@article{tong2021accelerating,
  title={Accelerating ill-conditioned low-rank matrix estimation via scaled gradient descent},
  author={Tong, Tian and Ma, Cong and Chi, Yuejie},
  journal={Journal of Machine Learning Research},
  volume={22},
  number={150},
  pages={1--63},
  year={2021}
}

@inproceedings{saxe2014exact,
    title     = {Exact Solutions to the Nonlinear Dynamics of Learning in Deep Linear Neural Networks},
    author    = {Saxe, Andrew M. and McClelland, James L. and Ganguli, Surya},
    booktitle = {International Conference on Learning Representations},
    year      = {2014}
}

@article{du2017gradient,
  title={Gradient descent can take exponential time to escape saddle points},
  author={Du, Simon S and Jin, Chi and Lee, Jason D and Jordan, Michael I and Singh, Aarti and Poczos, Barnabas},
  journal={Advances in neural information processing systems},
  volume={30},
  year={2017}
}

@article{lesnic1995note,
  title={A note on the determination of the thermal properties of a material in a transient nonlinear heat conduction problem},
  author={Lesnic, D and Elliott, L and Ingham, DB},
  journal={International communications in heat and mass transfer},
  volume={22},
  number={4},
  pages={475--482},
  year={1995},
  publisher={Elsevier}
}

@article{fabre2017integral,
  title={On the integral-balance approach to the transient heat conduction with linearly temperature-dependent thermal diffusivity},
  author={Fabre, Antoine and Hristov, Jordan},
  journal={Heat and Mass Transfer},
  volume={53},
  number={1},
  pages={177--204},
  year={2017},
  publisher={Springer}
}

@article{boulle2022data,
  title={Data-driven discovery of Green’s functions with human-understandable deep learning},
  author={Boull{\'e}, Nicolas and Earls, Christopher J and Townsend, Alex},
  journal={Scientific reports},
  volume={12},
  number={1},
  pages={4824},
  year={2022},
  publisher={Nature Publishing Group UK London}
}

@article{zhang2022modnet,
  author = {Zhang, Lulu and Luo, Tao and Zhang, Yaoyu and E, Weinan and John Xu, Zhi-Qin and Ma, Zheng},
  title = {MOD-Net: A Machine Learning Approach via Model-Operator-Data Network for Solving PDEs},
  journal = {Communications in Computational Physics},
  year = {2022},
  volume = {32},
  number = {2},
  pages = {299-335},
  doi = {10.4208/cicp.OA-2021-0257},
}

@article{lin2023bigreennet,
  title={Bi-greennet: learning green’s functions by boundary integral network},
  author={Lin, Guochang and Chen, Fukai and Hu, Pipi and Chen, Xiang and Chen, Junqing and Wang, Jun and Shi, Zuoqiang},
  journal={Communications in Mathematics and Statistics},
  volume={11},
  number={1},
  pages={103--129},
  year={2023},
  publisher={Springer}
}

@article{gu2025greensonet,
  title={An explainable operator approximation framework under the guideline of Green’s function},
  author={Gu, Jianghang and Wen, Ling and Chen, Yuntian and Chen, Shiyi},
  journal={Journal of Computational Physics},
  pages={114244},
  year={2025},
  publisher={Elsevier}
}

@article{tang2022ngf,
  title={Neural green’s function for laplacian systems},
  author={Tang, Jingwei and Azevedo, Vinicius C and Cordonnier, Guillaume and Solenthaler, Barbara},
  journal={Computers \& Graphics},
  volume={107},
  pages={186--196},
  year={2022},
  publisher={Elsevier}
}

@article{praveen2025chebgreen,
  title={chebgreen: Learning and interpolating continuous Empirical Green's Functions from data},
  author={Praveen, Harshwardhan and Brown, Jacob and Earls, Christopher},
  journal={Computer Physics Communications},
  pages={109867},
  year={2025},
  publisher={Elsevier}
}

@article{yang2026gffo,
  title={Green’s function fourier operator: a spectral learning framework for linear differential equations with arbitrary boundary conditions},
  author={Yang, Yang and Zhang, Xihong and Kou, Kunpang and Lam, Chichiu},
  journal={Engineering Analysis with Boundary Elements},
  volume={191},
  pages={106948},
  year={2026},
  publisher={Elsevier}
}

@article{aldirany2024greenonet,
  title={Operator approximation of the wave equation based on deep learning of Green's function},
  author={Aldirany, Ziad and Cottereau, R{\'e}gis and Laforest, Marc and Prudhomme, Serge},
  journal={Computers \& Mathematics with Applications},
  volume={159},
  pages={21--30},
  year={2024},
  publisher={Elsevier}
}

@article{melchers2026ngo,
  title={Neural green’s operators for parametric partial differential equations},
  author={Melchers, HA and Prins, JHM and Abdelmalik, Michael RA},
  journal={Computer Methods in Applied Mechanics and Engineering},
  volume={455},
  pages={118893},
  year={2026},
  publisher={Elsevier}
}

@inproceedings{tan2026dgnet,
 author = {Tan, Yingjie and Yao, Quanming and Wang, Yaqing},
 booktitle = {International Conference on Learning Representations},
 editor = {C. Vondrick and B. Hariharan and C. Raffel and L. Pinto and D. Yang and A. Faust},
 pages = {26748--26773},
 title = {DGNet: Discrete Green Networks for Data-Efficient Learning of Spatiotemporal PDEs},
 volume = {2026},
 year = {2026}
}
\bibliographystyle{iclr2027_conference}

\appendix
\renewcommand{\theHequation}{app.\arabic{equation}}

\section*{Overview of the Appendix}

\begin{itemize}
    \item Appendix~\ref{app:related-ext} places GObO among operator-learning,
    manifold, and learned Green-function methods.
    \item Appendix~\ref{app:theory} provides the proofs and derivations used
    by the method.
    \item Appendix~\ref{app:datasets} defines the equations, media, sources,
    observation manifolds, and numerical reference solutions.
    \item Appendix~\ref{app:shared-architecture} describes the GObO
    architecture and the protocol shared across experiments.
    \item Appendix~\ref{app:details} gives the experiment-specific models and
    evaluation procedures.
    \item Appendix~\ref{app:results} provides the complete experimental
    results and ablations.
\end{itemize}

\section{Extended Related Work}
\label{app:related-ext}

\subsection{Operator learning for PDEs}
Neural operators learn mappings between function spaces and have become a
common tool for data-driven models of PDEs.
DeepONet and FNO use branch--trunk approximation and
spectral convolution. \citet{kovachki2023neural} give a general
function-space framework. Later work extends these ideas to irregular
meshes and graphs. Examples include message-passing PDE
solvers~\citep{brandstetter2022message},
mesh-agnostic neural PDE solvers~\citep{boussif2022magnet}, and
graph-enhanced DeepONets~\citep{cho2025learning}. We use the same
function-space view but take the output to be a measured submanifold.

\subsection{Partial observations and reduced representations}
PDE dynamics can also be learned from limited measurements through
forecasting models, latent states, weak forms, and reduced models.
Partially observed PDE models use neural networks, recurrent models, or
learned bases to forecast from limited
measurements~\citep{ayed2019learning,ayed2022modelling}, while latent neural
PDE models infer continuous spatio-temporal dynamics from partial and noisy
observations~\citep{iakovlev2023learning}. Weak-form methods discover PDE
structure from sparse or noisy measurements~\citep{stephany2024weak}.
Reduced and latent operator-learning methods, including Latent
DeepONet~\citep{kontolati2024learning}, latent neural PDE
solvers~\citep{li2025latent}, Latent Neural
Operators~\citep{wang2024latent}, nonlinear manifold
decoders~\citep{seidman2022nomad}, and neural POD
approaches~\citep{mou2026neuralpod}, learn compact representations of
high-dimensional states. GObO instead makes the observed submanifold the
prediction space and uses the ambient medium to determine its response.
The nonlinear construction in Section~\ref{sec:nonlinear} also connects
the learned kernel to projection-based model
reduction~\citep{benner2015survey,carlberg2017galerkin}.

\subsection{Geometry and physical context}
Many learned PDE models depend on geometry, coefficients, and boundary
conditions. These quantities can enter through domain encodings,
coordinate maps, graph representations,
signed-distance functions, and residual-based losses. Examples include
PI-GANO~\citep{zhong2025physics}, geometry-aware operator
transformers~\citep{wen2026geometry}, Geo-FNO~\citep{li2023geofno}, and
diffeomorphism-based neural operators~\citep{zhao2025diffeomorphism}.
GINO~\citep{li2023gino} is especially relevant because it predicts on
surfaces in large three-dimensional systems. It uses full-field
supervision and full-domain latent states. GObO uses only observations on
the manifold and represents their response with an explicit kernel.

\subsection{Nonlinear methods built on linear solvers}
Operator splitting originates with \citet{strang1968construction} and is
surveyed by \citet{mclachlan2002splitting}. Implicit--explicit methods
treat stiff and non-stiff terms differently~\citep{ascher1997implicit}.
Exponential integrators propagate a linear part exactly and approximate
the remaining increment~\citep{hochbruck2010exponential}. The Kirchhoff
transform is classical for temperature-dependent heat conduction
\citep{carslaw1906introduction,necati1993heat}. Each method assumes access
to a linear solver. Section~\ref{sec:nonlinear} uses the learned kernel as
that solver. Its streaming form advances the linear state exactly, while
source linearity allows it to evaluate the intermediate forcings created
by the nonlinear method.

\subsection{Green's-function and kernel-based operators}
\label{app:green-learning}

We separate methods by whether the problem is steady or transient and by how
the learned kernel is represented. Pointwise methods evaluate a coordinate
network at each source, observation, and possibly time coordinate. This is
flexible, but resolving a dense space--time kernel requires many network
evaluations and becomes impractical in the three-dimensional transient setting
considered here. Structured methods instead produce a reusable finite
representation that can be applied to many sources after 1 inference or
identification step.

\begin{table}[ht]
\centering
\small
\setlength{\tabcolsep}{4pt}
\begin{tabularx}{\linewidth}{p{0.14\linewidth}YY}
\toprule
 & pointwise kernel values & reusable representation \\
\midrule
steady
& GreenLearning, MOD-Net, BI-GreenNet, and GreensONet
\citep{boulle2022data,zhang2022modnet,lin2023bigreennet,gu2025greensonet}
& discrete and spectral Green representations
\citep{tang2022ngf,praveen2025chebgreen,yoo2026neural,yang2026gffo} \\
transient
& GreenONet \citep{aldirany2024greenonet}
& full-kernel identification \citep{boulle2022learning}, one-step NGO,
DGNet, and GObO \\
\bottomrule
\end{tabularx}
\caption{Representative learned Green-function methods. Pointwise methods
query individual kernel values. Reusable methods produce a finite operator
representation that is applied without another network query at each point.}
\label{tab:green-taxonomy}
\end{table}

Most prior work treats steady problems. Among transient approaches,
GreenONet represents time through pointwise coordinate queries
\citep{aldirany2024greenonet}. The full space--time recovery of
\mbox{\citet{boulle2022learning}} avoids these repeated queries, but identifies the
kernel of 1 fixed system rather than predicting a kernel for a new medium.
NGO and DGNet learn or construct a one-step update. Repeated application then requires the
updated quantity to contain the full state needed for the next step. This is
not true for the partial observations considered here.

GObO instead maps each new medium once to a structured kernel over temporal
lag between the source and observation submanifolds. It does not evaluate a
network at every kernel point, form a dense Green matrix, reconstruct the
ambient state, or assume that the observation is a closed state. The same
representation supports direct source convolution and the finite streaming
state derived in Section~\ref{sec:streaming}. NGO is therefore the closest
baseline because it also preserves source linearity and conditions its Green
representation on PDE coefficients, while differing in its one-step treatment
of time.

\section{Proofs and Derivations}
\label{app:theory}

This appendix collects the arguments that support the statements in
Section~\ref{sec:method}. We follow the order of the method section, so
that each argument can be read alongside the corresponding construction.

\subsection{Green representations for initial, interior, and boundary data}
\label{app:green-proofs}

For an autonomous linear parabolic problem, the initial condition and
the interior forcing are propagated by the same solution semigroup.
Boundary data enter through a boundary control operator and may therefore
have a different kernel. Writing these contributions separately gives
the general representation
\begin{align}
\label{eq:full-green-representation}
    u_\alpha(x,t)
    & =
    \underbrace{
    \int_\Omega
        G_\alpha(x,y,t)u_0(y)
    \,dy}_{\text{initial condition}}
    \nonumber\\
    &\quad+
    \underbrace{
    \int_0^t\int_\Omega
        G_\alpha(x,y,t-s)F(y,s)
    \,dy\,ds}_{\text{interior forcing}}
    \nonumber\\
    &\quad+
    \underbrace{
    \int_0^t\int_{\partial\Omega}
        G_\alpha^{\partial}(x,z,t-s)g(z,s)
    \,d\sigma(z)\,ds}_{\text{boundary input}}.
\end{align}
Thus the initial and forcing kernels coincide for the time-independent
systems considered here. The boundary kernel $G_\alpha^{\partial}$
depends on how the boundary data enter the PDE. For an inward Neumann
flux, the weak load is
\begin{equation}
\label{eq:weak-neumann-load}
    \ell_s(\phi)
    =
    \int_{\partial\Omega}
        g(z,s)\phi(z)
    \,d\sigma(z).
\end{equation}
Relative to the homogeneous Neumann problem, its response kernel is the
boundary trace of the same Green kernel,
$G_\alpha^{\partial}(x,z,\tau)=G_\alpha(x,z,\tau)$, with the sign fixed
by the convention that positive $g$ injects flux into the domain. Other
boundary conditions can produce a different boundary kernel. Keeping the
superscript in \eqref{eq:full-green-representation} makes this distinction
explicit.

We now restrict the two source terms. The area formula is
\begin{equation}
\label{eq:area-formula}
    \int_{\Sigma_S}
        h(y)\,d\sigma(y)
    =
    \int_{U_S}
        h(X_S(\eta))
        \sqrt{
            \det(
                DX_S(\eta)^\top DX_S(\eta)
            )
        }
    \,d\eta .
\end{equation}
For an interior manifold source $F=f_\Sigma\delta_{\Sigma_S}$, the
forcing term in \eqref{eq:full-green-representation} becomes
\begin{align}
\label{eq:interior-manifold-response}
    v_\alpha^F(\xi,t)
    & =
    \int_0^t\int_{U_S}
        K_\alpha^F(\xi,\eta,t-s)f(\eta,s)
    \,d\eta\,ds,
\\
    K_\alpha^F(\xi,\eta,\tau)
    & :=
    G_\alpha\!\left(
        X_O(\xi),X_S(\eta),\tau
    \right)J_S(\eta).
\end{align}
For a Neumann flux supported on a boundary patch
$\Sigma_S\subset\partial\Omega$, the boundary term instead becomes
\begin{align}
\label{eq:boundary-manifold-response}
    v_\alpha^\partial(\xi,t)
    & =
    \int_0^t\int_{U_S}
        K_\alpha^\partial(\xi,\eta,t-s)f(\eta,s)
    \,d\eta\,ds,
\\
    K_\alpha^\partial(\xi,\eta,\tau)
    & :=
    G_\alpha^\partial\!\left(
        X_O(\xi),X_S(\eta),\tau
    \right)J_S(\eta).
\end{align}
For a Neumann input, $K_\alpha^F$ and $K_\alpha^\partial$ are restrictions
of the same Green kernel to different source sets. A more general
boundary operator may instead change the boundary kernel. In either
case, the source-to-observation map has the same form. The spatial
integration is over $U_S$, the kernel depends on the full ambient medium, and no
observation Jacobian appears because the state is evaluated pointwise on
$\Sigma_O$. This is the common form written in
\eqref{eq:restricted-kernel}. If initial, interior, and boundary inputs
are present together, their three responses simply add. The initial
condition can also be viewed as an interior impulse at time zero.

The interior distribution and the weak Neumann load also share a local
flux interpretation. For a codimension-one surface, consider a
divergence-form diffusion equation written locally as
\begin{equation}
\label{eq:flux-distribution}
    -\nabla\cdot q
    =
    f_\Sigma\delta_{\Sigma_S},
    \qquad
    q=\kappa\nabla u.
\end{equation}
If $\Sigma_S\subset\Omega$ separates two regions and $n$ points from the
minus side to the plus side, integration over a shrinking pillbox gives
\begin{equation}
\label{eq:flux-jump}
    [q\cdot n]_{\Sigma_S}
    :=
    q^+\cdot n-q^-\cdot n
    =
    -f_\Sigma .
\end{equation}
Thus the singular source prescribes a jump in normal flux across an
interior interface, as in the classical immersed-interface construction
\citep{leveque1994immersed}. If
$\Sigma_S\subset\partial\Omega$, only the interior side remains and
\eqref{eq:flux-jump} becomes the Neumann condition
\begin{equation}
\label{eq:neumann-source}
    -\kappa\nabla u\cdot n
    =
    f_\Sigma ,
\end{equation}
when positive $f_\Sigma$ denotes flux injected into the domain. The
boundary and interior flux balances are therefore the one-sided and
two-sided versions of the same local construction.

We now state the analytic assumptions on the ambient problem. The result
below is the full version of Proposition~\ref{prop:green-existence} in the
main text.

\begin{prop1*}[Existence of the Green's kernel]
\phantomsection\fixedlabel{1}{prop:green-existence-app}
Let $\Omega\subset\mathbb R^d$ be bounded and Lipschitz. Suppose that
$\kappa:\Omega\to\mathbb R^{d\times d}$ is bounded, measurable, symmetric,
and uniformly elliptic, so that for some $0<\kappa_-\leq\kappa_+<\infty$,
\begin{equation}
\label{eq:ellipticity-assumption}
    \kappa_-|z|^2
    \leq
    z^\top\kappa(x)z
    \leq
    \kappa_+|z|^2
\end{equation}
for almost every $x\in\Omega$ and every $z\in\mathbb R^d$. Let
$b\in L^\infty(\Omega;\mathbb R^d)$. For the advection--diffusion systems
considered here, assume additionally that $b$ is divergence-free in
$\Omega$ and has zero normal trace on $\partial\Omega$.
Then each admissible $\alpha$ in \eqref{eq:parabolic} admits a Green's
kernel $G_\alpha$ with a representative that is continuous on
$\overline\Omega\times\overline\Omega\times(0,\infty)$.
\end{prop1*}

\begin{proof}
For scalar divergence-form parabolic equations with bounded measurable
coefficients, \citet{choi2013green} construct the Neumann Green's function.
The boundary regularity result of \citet{nittka2011regularity} gives the
continuous representative at positive times. The drift
$b\cdot\nabla u$ is a bounded lower-order term. It is covered by the
non-symmetric kernel theory of \citet{daners2000heat}.
\end{proof}

The kernel not only exists but also satisfies an Aronson-type Gaussian bound
\citep{aronson1968non,daners2000heat,choi2013green,ouhabaz2009analysis}. For every
finite $T>0$, there are constants $C_G,c>0$, which may depend on $T$, such that
\begin{equation}
\label{eq:gaussian}
    |G_\alpha(x,y,\tau)|
    \leq
    C_G\tau^{-d/2}
    \exp\!\left(-c\frac{|x-y|^2}{\tau}\right)
\end{equation}
for all $x,y\in\overline\Omega$ and $0<\tau\leq T$. Integrating over $y$
and enlarging the domain to $\mathbb R^d$ gives
\begin{equation}
    \sup_{x\in\overline\Omega}
    \int_\Omega |G_\alpha(x,y,\tau)|\,dy
    \leq
    C_G\left(\frac{\pi}{c}\right)^{d/2}
    =: C_T.
\end{equation}
The factor $\tau^{-d/2}$ is therefore canceled by integration over the full
$d$-dimensional domain. This makes \eqref{eq:green-volume} finite, for
example, whenever $F\in L^1(0,T;L^\infty(\Omega))$. Restricting the kernel
to a $(d-1)$-dimensional source manifold leaves the integrable singularity
$\tau^{-1/2}$ instead. The following result gives the corresponding bound
uniformly over all observation points. It is the full version of
Proposition~\ref{prop:restricted-integrability} in the main text.

\begin{prop2*}[Integrability of the restricted kernel]
\phantomsection\fixedlabel{2}{prop:restricted-integrability-app}
Assume the setting of Proposition~\ref{prop:green-existence-app}, with
$d\geq2$. Let $U_S\subset\mathbb R^{d-1}$ be compact and let
$X_S:U_S\to\Sigma_S\subset\overline\Omega$ be bi-Lipschitz with constant
$L\geq1$, that is,
\begin{equation}
\label{eq:bilipschitz}
    L^{-1}|\eta-\eta'|
    \leq
    |X_S(\eta)-X_S(\eta')|
    \leq
    L|\eta-\eta'|
    \qquad\text{for all }\eta,\eta'\in U_S,
\end{equation}
and let $K_\alpha$ include the surface Jacobian $J_S$ as in
\eqref{eq:restricted-kernel}. Then for every finite $T>0$ there is a constant
$C_T$, independent of the observation point, such that for every
$x=X_O(\xi)\in\overline\Omega$
\begin{equation}
\label{eq:surface-singularity}
    \int_{U_S}
        |K_\alpha(\xi,\eta,\tau)|
    \,d\eta
    =
    \int_{\Sigma_S}
        |G_\alpha(x,y,\tau)|
    \,d\sigma(y)
    \leq
    C_T\tau^{-1/2},
    \qquad 0<\tau\leq T.
\end{equation}
Consequently,
\begin{equation}
\label{eq:surface-time-integrability}
    \sup_{\xi\in U_O}
    \int_0^T\int_{U_S}
        |K_\alpha(\xi,\eta,\tau)|
    \,d\eta\,d\tau
    \leq
    2C_T\sqrt T
    <\infty.
\end{equation}
\end{prop2*}

\begin{proof}
The equality in \eqref{eq:surface-singularity} is the area formula
\eqref{eq:area-formula}. Fix $x\in\overline\Omega$. Since $\Sigma_S$ is
compact, there is
$\eta_0\in U_S$ with $|x-X_S(\eta_0)|=\operatorname{dist}(x,\Sigma_S)$. For
every $\eta\in U_S$ the triangle inequality and the choice of $\eta_0$ give
\[
    |X_S(\eta)-X_S(\eta_0)|
    \leq
    |X_S(\eta)-x|+|x-X_S(\eta_0)|
    \leq
    2\,|X_S(\eta)-x|,
\]
and the lower bound in \eqref{eq:bilipschitz} turns this into
\begin{equation}
\label{eq:chart-distance}
    |X_S(\eta)-x|
    \geq
    \frac{1}{2L}\,|\eta-\eta_0|
    \qquad\text{for all }\eta\in U_S.
\end{equation}
If $x\in\Sigma_S$, then $x=X_S(\eta_0)$ and the factor $2$ can be dropped.

The upper Lipschitz bound implies that $DX_S$ exists almost everywhere and
has operator norm at most $L$. Therefore,
\begin{equation}
\label{eq:jacobian-bound}
    J_S(\eta)
    =
    \sqrt{\det\!\big(DX_S(\eta)^\top DX_S(\eta)\big)}
    \leq
    L^{d-1}.
\end{equation}

The Gaussian bound \eqref{eq:gaussian} holds pointwise on
$\overline\Omega\times\overline\Omega$, so it may be evaluated at
$y=X_S(\eta)$. Combining it with \eqref{eq:chart-distance} gives
\[
    |G_\alpha(x,X_S(\eta),\tau)|
    \leq
    C_G\tau^{-d/2}
    \exp\!\Big(-\frac{c\,|\eta-\eta_0|^2}{4L^2\tau}\Big).
\]
Multiplying by \eqref{eq:jacobian-bound}, integrating over $U_S$, and
enlarging the domain of integration to $\mathbb R^{d-1}$ gives
\begin{align}
    \int_{U_S}|K_\alpha(\xi,\eta,\tau)|\,d\eta
    &\leq
    C_GL^{d-1}\tau^{-d/2}
    \int_{\mathbb R^{d-1}}
        \exp\!\Big(-\frac{c\,|\eta-\eta_0|^2}{4L^2\tau}\Big)
    \,d\eta
    \nonumber\\
    &=
    C_GL^{d-1}\tau^{-d/2}
    \Big(\frac{4\pi L^2\tau}{c}\Big)^{(d-1)/2}
    \nonumber\\
    &=
    C_GL^{d-1}\Big(\frac{4\pi L^2}{c}\Big)^{(d-1)/2}
    \tau^{-1/2}.
\end{align}
Here we used the Gaussian integral
$\int_{\mathbb R^{n}}e^{-|z|^2/(4s)}\,dz=(4\pi s)^{n/2}$ with $n=d-1$ and
$s=L^2\tau/c$. Defining
$C_T=C_GL^{d-1}(4\pi L^2/c)^{(d-1)/2}$ proves
\eqref{eq:surface-singularity}. This constant does not depend on $x$.

It remains to integrate the bound in time. Uniformly in $\xi$,
\[
    \int_0^T\int_{U_S}|K_\alpha(\xi,\eta,\tau)|\,d\eta\,d\tau
    \leq
    C_T\int_0^T\tau^{-1/2}\,d\tau
    =
    2C_T\sqrt T,
\]
which is \eqref{eq:surface-time-integrability}.
\end{proof}

When the manifolds are separated, the exponential decay in the Gaussian
bound removes even the remaining short-time singularity.

\begin{remark}[Separated source and observation manifolds]
\label{rem:separated-manifolds}
If $\operatorname{dist}(\Sigma_S,\Sigma_O)=\delta>0$, then for
$x\in\Sigma_O$, $y\in\Sigma_S$, and $0<\tau\leq T$, \eqref{eq:gaussian}
gives
\begin{equation}
\label{eq:separated-bound}
    |G_\alpha(x,y,\tau)|
    \leq
    C_G\tau^{-d/2}e^{-c\delta^2/\tau}
    \leq
    C_G\Big(\frac{d}{2ec\delta^2}\Big)^{d/2},
\end{equation}
where the second inequality is $s^{d/2}e^{-s}\leq(d/2e)^{d/2}$ with
$s=c\delta^2/\tau$. Hence
\[
    \sup_{0<\tau\leq T}\;
    \sup_{\xi\in U_O}
    \int_{U_S}|K_\alpha(\xi,\eta,\tau)|\,d\eta
    \leq
    C_G|\Sigma_S|\Big(\frac{d}{2ec\delta^2}\Big)^{d/2}
    <\infty .
\]
The short-time singularity of Proposition~\ref{prop:restricted-integrability-app}
therefore appears only where the observation manifold touches the source
manifold.
\end{remark}

\subsection{Approximation of the restricted kernel}
\label{app:kernel-proof}

For heat, the abstract descriptor is $\alpha=\kappa$, and the spatial
operator $A_\kappa$ is the variational Neumann realization of
$-\nabla\cdot(\kappa\nabla)$. Its eigenpairs solve
\begin{equation}
\label{eq:eigenproblem}
    -\nabla\cdot\big(\kappa(x)\nabla e_k(x)\big)
    =\lambda_k e_k(x),
    \qquad
    (\kappa\nabla e_k)\cdot n=0
    \quad\text{on }\partial\Omega.
\end{equation}
For symmetric uniformly elliptic $\kappa$ on a bounded Lipschitz domain, this
operator is nonnegative and self-adjoint, and its resolvent is compact. The
spectral theorem therefore gives an orthonormal basis
$\{e_k\}_{k\geq1}$ of $L^2(\Omega)$, with
$0=\lambda_1\leq\lambda_2\leq\cdots$ and $\lambda_k\to\infty$. For every
$\tau>0$, the corresponding heat kernel has the expansion
\begin{equation}
\label{eq:heat-spectral-app}
    G_\kappa(x,y,\tau)
    =
    \sum_{k=1}^{\infty}
        e^{-\lambda_k\tau}e_k(x)e_k(y),
\end{equation}
with convergence in $L^2(\Omega\times\Omega)$. These standard spectral facts
for elliptic Neumann operators and their heat kernels are given, for example,
by \citet{davies1989heat} and \citet{ouhabaz2009analysis}.

Before restricting this series to the manifolds, we strengthen its
convergence. The Gaussian bound implies that the eigenfunctions grow at
most polynomially in $\lambda_k$ and that the eigenvalues grow at least
like $k^{2/d}$. The exponential factor then gives uniform convergence for
every time lag bounded away from zero. This makes the pointwise restriction
well defined and reduces the truncation estimate to a tail sum. The
following is the full version of Theorem~\ref{thm:spectral-approx} in the
main text. It gives a uniform pointwise bound, from which the stated
$L^2$ bound follows.

\begin{thm1*}[Approximation of the restricted heat kernel]
\phantomsection\fixedlabel{1}{thm:spectral-approx-app}
Assume the setting of Proposition~\ref{prop:green-existence-app} with
$b=0$. Let $X_S:U_S\to\Sigma_S$ and $X_O:U_O\to\Sigma_O$ be bi-Lipschitz
$C^1$ charts on compact parameter sets, with $X_S$ satisfying the assumptions
of Proposition~\ref{prop:restricted-integrability-app}, and let $\psi_k$,
$\varphi_k$, and $K_{\alpha,M}$ be defined as in
\eqref{eq:restricted-separable}. Then the eigenfunctions $e_k$ are
continuous on $\overline\Omega$, and for every $\tau>0$ the series in
\eqref{eq:restricted-separable} converges uniformly on $U_O\times U_S$.
Moreover, there is a constant $c>0$ depending only on $\Omega$,
$\kappa_-$, $\kappa_+$, and the Gaussian constants in
\eqref{eq:gaussian}. For every $\tau_0>0$, there is a constant
$C_{\tau_0}$ such that, for all $M\geq1$,
\begin{equation}
\label{eq:spectral-bound-sup}
    \sup_{\tau\geq\tau_0}
    \sup_{\xi\in U_O,\,\eta\in U_S}
    \frac{
        |K_\alpha(\xi,\eta,\tau)-K_{\alpha,M}(\xi,\eta,\tau)|
    }{J_S(\eta)}
    \leq
    C_{\tau_0}e^{-c\tau_0M^{2/d}}.
\end{equation}
In particular,
\begin{equation*}
\tag*{\ref{eq:spectral-bound}}
    \sup_{\tau\geq\tau_0}
    \left\|
        K_\alpha(\cdot,\cdot,\tau)-K_{\alpha,M}(\cdot,\cdot,\tau)
    \right\|_{L^2(U_O\times U_S)}
    \leq
    |U_O|^{1/2}\|J_S\|_{L^2(U_S)}
    C_{\tau_0}e^{-c\tau_0M^{2/d}}.
\end{equation*}
\end{thm1*}

\begin{proof}
Let $C_G$ and $c_G$ denote the constants in \eqref{eq:gaussian} for
$0<t\leq1$. We use the variational Neumann operator $A_\kappa$ from
\eqref{eq:eigenproblem}, with orthonormal eigenpairs
$(\lambda_k,e_k)$ ordered as
$0=\lambda_1<\lambda_2\leq\cdots$.

We first use the Gaussian bound to control the heat operator from $L^2$ to
$L^\infty$. For $f\in L^2(\Omega)$, $x\in\overline\Omega$, and
$0<t\leq1$, the Cauchy--Schwarz inequality gives
\begin{align}
    |(e^{-tA_\kappa}f)(x)|
    &=
    \left|\int_\Omega G_\alpha(x,y,t)f(y)\,dy\right|
    \nonumber\\
    &\leq
    C_Gt^{-d/2}
    \left(\int_{\mathbb R^d}e^{-2c_G|x-y|^2/t}\,dy\right)^{1/2}
    \|f\|_{L^2(\Omega)}
    \nonumber\\
    &=
    C_G\left(\frac{\pi}{2c_G}\right)^{d/4}
    t^{-d/4}\|f\|_{L^2(\Omega)}
    =:
    C_1t^{-d/4}\|f\|_{L^2(\Omega)}.
\label{eq:ultracontractive}
\end{align}
The kernel is bounded and continuous on
$\overline\Omega\times\overline\Omega$ for fixed $t>0$. Dominated
convergence therefore shows that $e^{-tA_\kappa}f$ is continuous on
$\overline\Omega$.

Since $e^{-tA_\kappa}e_k=e^{-\lambda_kt}e_k$, equation
\eqref{eq:ultracontractive} yields
\[
    \|e_k\|_{L^\infty(\Omega)}
    \leq
    C_1t^{-d/4}e^{\lambda_kt},
    \qquad 0<t\leq1.
\]
It also shows that each $e_k$ has a continuous representative on
$\overline\Omega$. When $\lambda_k\geq1$, we choose
$t=1/\lambda_k$. Otherwise, we choose $t=1$. In both cases,
\begin{equation}
\label{eq:eigen-sup}
    \|e_k\|_{L^\infty(\Omega)}
    \leq
    eC_1(1+\lambda_k)^{d/4}.
\end{equation}
Thus, pointwise restriction of an eigenfunction introduces only polynomial
growth in $\lambda_k$.

The Gaussian bound also gives the required growth of the eigenvalues. The
squared Hilbert--Schmidt norm of $e^{-(t/2)A_\kappa}$ equals the squared
$L^2(\Omega\times\Omega)$ norm of its kernel. Hence, for $0<t\leq1$,
\begin{align}
    \sum_{j=1}^\infty e^{-\lambda_jt}
    &=
    \int_\Omega\int_\Omega
        |G_\alpha(x,y,t/2)|^2
    \,dy\,dx
    \nonumber\\
    &\leq
    C_G^2\left(\frac{t}{2}\right)^{-d}
    \int_\Omega\int_{\mathbb R^d}
        e^{-4c_G|x-y|^2/t}
    \,dy\,dx
    \nonumber\\
    &=
    C_G^2|\Omega|2^d
    \left(\frac{\pi}{4c_G}\right)^{d/2}t^{-d/2}
    =:
    C_3t^{-d/2}.
\end{align}
Because the eigenvalues are nondecreasing, the sum on the left is at least
$ke^{-\lambda_kt}$. If $\lambda_k\geq1$, choosing $t=1/\lambda_k$ gives
\[
    ke^{-1}
    \leq
    C_3\lambda_k^{d/2},
    \qquad
    \lambda_k
    \geq
    (eC_3)^{-2/d}k^{2/d}.
\]
Only finitely many indices $k\geq2$ satisfy $\lambda_k<1$. Decreasing the
constant to include these indices gives some $c_0>0$ such that
\begin{equation}
\label{eq:weyl}
    \lambda_k\geq c_0k^{2/d}
    \qquad\text{for every }k\geq2.
\end{equation}

We now combine these two estimates. Fix $\tau_0>0$ and let
$\tau\geq\tau_0$. Equation~\eqref{eq:eigen-sup} implies
\begin{align}
    e^{-\lambda_k\tau}|e_k(x)e_k(y)|
    &\leq
    e^2C_1^2(1+\lambda_k)^{d/2}e^{-\lambda_k\tau_0}
    \nonumber\\
    &\leq
    C_{\tau_0}'e^{-\lambda_k\tau_0/2},
\label{eq:term-bound}
\end{align}
uniformly for $x,y\in\overline\Omega$, where
\[
    C_{\tau_0}'
    :=
    e^2C_1^2
    \sup_{\lambda\geq0}
        (1+\lambda)^{d/2}e^{-\lambda\tau_0/2}
    <\infty.
\]
Together with \eqref{eq:weyl}, this shows that the series
\[
    S(x,y,\tau)
    :=
    \sum_{k=1}^\infty e^{-\lambda_k\tau}e_k(x)e_k(y)
\]
converges absolutely and uniformly on
$\overline\Omega\times\overline\Omega\times[\tau_0,\infty)$. Its sum is
therefore continuous. By \eqref{eq:heat-spectral-app}, the same series
converges to $G_\alpha(\cdot,\cdot,\tau)$ in
$L^2(\Omega\times\Omega)$. Since both functions are continuous and agree
almost everywhere, they agree everywhere. As $\tau_0>0$ was arbitrary,
\begin{equation}
\label{eq:pointwise-spectral}
    G_\alpha(x,y,\tau)
    =
    S(x,y,\tau)
    \qquad
    \text{for all }x,y\in\overline\Omega\text{ and }\tau>0.
\end{equation}

Evaluating \eqref{eq:pointwise-spectral} at $x=X_O(\xi)$ and
$y=X_S(\eta)$, then multiplying by $J_S(\eta)$, gives
\begin{equation}
\label{eq:restricted-spectral-app}
    K_\alpha(\xi,\eta,\tau)
    =
    \sum_{k=1}^\infty e^{-\lambda_k\tau}
        \underbrace{e_k(X_O(\xi))}_{\psi_k(\xi)}
        \underbrace{e_k(X_S(\eta))J_S(\eta)}_{\varphi_k(\eta)},
\end{equation}
with uniform convergence on $U_O\times U_S$.

It remains to bound the omitted terms. Every index in the tail satisfies
$k\geq2$ when $M\geq1$. Using \eqref{eq:term-bound} and
\eqref{eq:weyl}, with $a=c_0\tau_0/2$, gives
\begin{align}
    \frac{
        |K_\alpha(\xi,\eta,\tau)-K_{\alpha,M}(\xi,\eta,\tau)|
    }{J_S(\eta)}
    &\leq
    \sum_{k>M}e^{-\lambda_k\tau}
        |e_k(X_O(\xi))e_k(X_S(\eta))|
    \nonumber\\
    &\leq
    C_{\tau_0}'\sum_{k>M}e^{-ak^{2/d}}.
\end{align}
For $k>M$, we have $k^{2/d}\geq M^{2/d}$ and therefore
\[
    e^{-ak^{2/d}}
    \leq
    e^{-aM^{2/d}/2}e^{-ak^{2/d}/2}.
\]
Summing this inequality yields
\begin{equation}
\label{eq:tail-sum}
    \sum_{k>M}e^{-ak^{2/d}}
    \leq
    e^{-aM^{2/d}/2}
    \sum_{k=1}^\infty e^{-ak^{2/d}/2}
    =:
    S_a e^{-aM^{2/d}/2},
    \qquad S_a<\infty.
\end{equation}
Combining these estimates proves \eqref{eq:spectral-bound-sup} with
$C_{\tau_0}=C_{\tau_0}'S_a$ and $c=c_0/4$. In particular, $c$ is
independent of $\tau_0$. Finally, multiplying the pointwise estimate by
$J_S(\eta)$ and taking its $L^2(U_O\times U_S)$ norm gives
\eqref{eq:spectral-bound}.
\end{proof}

\subsection{Optimization with additional search rank}
\label{app:search-rank}

We provide some intuition for the effect of search rank observed in
Table~\ref{tab:rank-full} by examining the optimization landscape of the
factorized objective. We treat the factors as free variables and hold the
rates fixed. In the conditional model, however, the factors are outputs
of a neural network, so the connection to our experiments is only qualitative.

For fixed rates, the observations depend linearly on the coefficient
matrices $A=(A_1,\ldots,A_M)$. The squared loss
\begin{equation}
\label{eq:app-loss}
\mathcal L(A)
=
\sum_i\lVert\mathcal P_i(A)-v_i\rVert_2^2,
\qquad \mathcal P_i\ \text{linear},
\end{equation}
is therefore a convex quadratic in $A$. Introducing the factorizations
$A_k=U_kV_k^\top$ gives
\begin{equation}
\label{eq:app-factored}
f(U,V)
=
\mathcal L
\left(
U_1V_1^\top,\ldots,U_MV_M^\top
\right),
\end{equation}
which is generally nonconvex in
$U_k\in\mathbb R^{D_O\times L}$ and
$V_k\in\mathbb R^{D_S\times L}$. In coefficient space, this resembles
asymmetric low-rank matrix sensing with one measurement block per
temporal mode. Our instance is highly ill-conditioned. The temporal
design matrix in Appendix~\ref{app:toy-details} has condition number
$5.8\times10^{14}$, and the objective has no balancing regularizer.

Let $G_k:=\nabla_{A_k}\mathcal L$. The first-order stationary conditions
for the factored objective are
\begin{equation}
\label{eq:app-stationary}
G_kV_k=0,
\qquad
G_k^\top U_k=0
\qquad
\text{for every }k.
\end{equation}
These conditions are weaker than $G_k=0$. For example, they hold
for a given slot when $U_k=V_k=0$, regardless of its coefficient
gradient. The factored objective can therefore be stationary even
when the corresponding coefficient matrices do not minimize
\eqref{eq:app-loss} \citep[eq.~(8)]{ha2020equivalence}.

Rank deficiency provides a useful connection between the two
objectives. For a twice differentiable function $g$ and a
factorization $X=AB^\top$ with $L$ columns, every second-order
stationary point of $g(AB^\top)$ satisfying
$\operatorname{rank}(AB^\top)<L$ also satisfies
$\nabla g(AB^\top)=0$ \citep[Theorem~1]{ha2020equivalence}.
Related arguments appear for Burer--Monteiro factorizations and
regularized factorization models
\citep{burer2005local,bhojanapalli2018smoothed,haeffele2015global}.
Intuitively, unused factor directions allow the second-order
behavior to detect coefficient gradients that are hidden from
the first-order conditions.

Applying this result to one rate slot while holding the others fixed, a
second-order stationary point with $\operatorname{rank}(A_k)<L$ must
satisfy $G_k=0$. If every slot is rank deficient, then
$\nabla_A\mathcal L=0$, and convexity makes $A$ a global minimizer of
\eqref{eq:app-loss}. At $L=1$, this mechanism applies only to zero-rank
slots, whereas additional columns allow unused directions even for nonzero
coefficient matrices.

Related theoretical motivation for increasing search rank comes
from positive-semidefinite Burer--Monteiro factorizations of
smooth, strongly convex objectives. Sufficient rank
overparameterization makes every second-order stationary point
globally optimal, whereas the corresponding guarantee without
overparameterization requires an objective with condition number
below three \citep{zhang2025improved}. These results suggest
potential optimization benefits from increasing the search rank.
However, they do not cover our rectangular, multi-rate factorization,
and our coefficient-space loss need not satisfy the required
strong-convexity assumptions. Nor is there a guarantee that our
optimizer reaches a rank-deficient second-order stationary point.

The optimization landscape also depends on properties of the measurement
operator. Under suitable restricted-isometry assumptions, factored
matrix-sensing problems have no spurious local minima and their
saddle points are strict \citep{ge2017nospurious}. Counterexamples
show that weaker isometry conditions can still permit spurious
local minima \citep{zhang2018howmuch}. We have not established
these assumptions for our highly ill-conditioned temporal design,
so neither result directly characterizes our loss. Optimization
may also be slow without convergence to a spurious minimum.
Gradient descent can be sensitive to ill-conditioning in low-rank
estimation \citep{tong2021accelerating}. In related factored
linear models, singular directions may be learned sequentially
with plateaus between them \citep{saxe2014exact}, and even strict
saddle points can take a long time to escape \citep{du2017gradient}.
The final loss of a finite training run therefore cannot
distinguish these plateaus from convergence to a poor local minimum.

The results in Table~\ref{tab:rank-full} are consistent with the theoretical motivation for increasing search rank. At $L=1$, the attained loss remains more than an order of magnitude above the convex coefficient-space optimum, whereas for $L\geq8$, it approaches that optimum to within approximately one percent. Although increasing $L$ affects both the set of representable coefficient matrices and the optimization landscape, the observed improvement provides empirical support for the search rank used in the remaining experiments. A theoretical characterization of the observed dependence on search rank is left to future work.

\subsection{Exact streaming evaluation}
\label{app:streaming-proof}

The result below restates Proposition~\ref{prop:streaming} from the main
text. The proof separates the accumulated history from the final time
interval.

\begin{prop3*}[Exact streaming evaluation]
\phantomsection\fixedlabel{3}{prop:streaming-app}
Let $t_i=i\Delta$ be a uniform time grid and define
\begin{align}
    \widehat f(t)
    &:={}
    \int_{U_S}
        \boldsymbol{\Phi}_S(\eta)f(\eta,t)
    \,d\eta,
\\
    g_k(t)
    &:={}
    \int_0^t
        e^{-\lambda_k(t-s)}
        V_k(\alpha)^\top\widehat f(s)
    \,ds,
    \qquad k=1,\ldots,M.
\end{align}
Suppose that the source is piecewise constant in time, with projected value
$\widehat f^i$ on $[t_i,t_{i+1})$. Then, writing
$g_k^i:=g_k(t_i)$,
\begin{equation*}
\tag*{\ref{eq:stream-recurrence}}
    g_k^{i+1}
    =
    e^{-\lambda_k\Delta}g_k^i
    +
    \left(\int_0^\Delta e^{-\lambda_k s}\,ds\right)
    V_k(\alpha)^\top\widehat f^i.
\end{equation*}
The observation at $t_i$ is
\begin{equation}
    v_\alpha(\xi,t_i)
    =
    \boldsymbol{\Phi}_O(\xi)^\top
    \sum_{k=1}^{M}U_k(\alpha)g_k^i.
\end{equation}
Thus the recurrence and readout are exactly equal to batch evaluation of the
kernel for a piecewise-constant source.
\end{prop3*}

\begin{proof}
We begin from the definition of the state at $t_{i+1}$ and split its time
integral at $t_i$:
\begin{align}
    g_k(t_{i+1})
    &={}
    \int_0^{t_{i+1}}
        e^{-\lambda_k(t_{i+1}-s)}
        V_k(\alpha)^\top\widehat f(s)
    \,ds
    \nonumber\\
    &={}
    \int_0^{t_i}
        e^{-\lambda_k(t_{i+1}-s)}
        V_k(\alpha)^\top\widehat f(s)
    \,ds
    +
    \int_{t_i}^{t_{i+1}}
        e^{-\lambda_k(t_{i+1}-s)}
        V_k(\alpha)^\top\widehat f(s)
    \,ds.
\end{align}
Since $t_{i+1}=t_i+\Delta$, the first integral becomes
\begin{align}
    &\int_0^{t_i}
        e^{-\lambda_k(t_{i+1}-s)}
        V_k(\alpha)^\top\widehat f(s)
    \,ds
    \nonumber\\
    &\quad={}
    \int_0^{t_i}
        e^{-\lambda_k\Delta}
        e^{-\lambda_k(t_i-s)}
        V_k(\alpha)^\top\widehat f(s)
    \,ds
    \nonumber\\
    &\quad={}
    e^{-\lambda_k\Delta}
    \int_0^{t_i}
        e^{-\lambda_k(t_i-s)}
        V_k(\alpha)^\top\widehat f(s)
    \,ds
    \nonumber\\
    &\quad={}
    e^{-\lambda_k\Delta}g_k(t_i)
    =e^{-\lambda_k\Delta}g_k^i.
\end{align}
On the interval $[t_i,t_{i+1})$, piecewise constancy gives
$\widehat f(s)=\widehat f^i$. With the change of variables
$r=t_{i+1}-s$, the second integral therefore becomes
\begin{align}
    &\int_{t_i}^{t_{i+1}}
        e^{-\lambda_k(t_{i+1}-s)}
        V_k(\alpha)^\top\widehat f(s)
    \,ds
    \nonumber\\
    &\quad={}
    V_k(\alpha)^\top\widehat f^i
    \int_{t_i}^{t_{i+1}}
        e^{-\lambda_k(t_{i+1}-s)}
    \,ds
    \nonumber\\
    &\quad={}
    V_k(\alpha)^\top\widehat f^i
    \int_{\Delta}^{0}e^{-\lambda_k r}(-\,dr)
    \nonumber\\
    &\quad={}
    \left(\int_0^\Delta e^{-\lambda_k s}\,ds\right)
    V_k(\alpha)^\top\widehat f^i.
\end{align}
Combining the two parts gives
\begin{align}
    g_k^{i+1}
    &={}
    e^{-\lambda_k\Delta}g_k^i
    +
    \left(\int_0^\Delta e^{-\lambda_k s}\,ds\right)
    V_k(\alpha)^\top\widehat f^i,
\end{align}
which is \eqref{eq:stream-recurrence}.

It remains to verify the readout. Substituting the kernel model
\eqref{eq:kernel-model} into the source-to-observation integral gives
\begin{align}
    v_\alpha(\xi,t_i)
    &={}
    \int_0^{t_i}\int_{U_S}
        K_{\theta,\alpha}(\xi,\eta,t_i-s)f(\eta,s)
    \,d\eta\,ds
    \nonumber\\
    &={}
    \int_0^{t_i}\int_{U_S}
        \boldsymbol{\Phi}_O(\xi)^\top
        \sum_{k=1}^{M}
            e^{-\lambda_k(t_i-s)}
            U_k(\alpha)V_k(\alpha)^\top
            \boldsymbol{\Phi}_S(\eta)
            f(\eta,s)
    \,d\eta\,ds
    \nonumber\\
    &={}
    \boldsymbol{\Phi}_O(\xi)^\top
    \sum_{k=1}^{M}U_k(\alpha)
    \int_0^{t_i}
        e^{-\lambda_k(t_i-s)}V_k(\alpha)^\top
        \left(
            \int_{U_S}\boldsymbol{\Phi}_S(\eta)f(\eta,s)\,d\eta
        \right)
    \,ds
    \nonumber\\
    &={}
    \boldsymbol{\Phi}_O(\xi)^\top
    \sum_{k=1}^{M}U_k(\alpha)
    \int_0^{t_i}
        e^{-\lambda_k(t_i-s)}V_k(\alpha)^\top\widehat f(s)
    \,ds
    \nonumber\\
    &={}
    \boldsymbol{\Phi}_O(\xi)^\top
    \sum_{k=1}^{M}U_k(\alpha)g_k^i.
\end{align}
This is the stated observation formula.
\end{proof}

The next result restates Proposition~\ref{prop:streaming-stability} from the
main text.

\begin{prop4*}[Stability of the streaming state]
\phantomsection\fixedlabel{4}{prop:streaming-stability-app}
Let $\rho_k=e^{-\lambda_k\Delta}$. If
$\operatorname{Re}(\lambda_k)\geq0$, perturbations evolving under the same
source satisfy
\begin{equation}
    \lVert\delta g_k^{i+1}\rVert
    \leq
    |\rho_k|\,\lVert\delta g_k^i\rVert
    \leq
    \lVert\delta g_k^i\rVert.
\end{equation}
If $\operatorname{Re}(\lambda_k)>0$ and
$\sup_i\lVert V_k(\alpha)^\top\widehat f^i\rVert<\infty$, then the states
$\{g_k^i\}_{i\geq0}$ are uniformly bounded.
\end{prop4*}

\begin{proof}
Let $g_k^i$ and $\widetilde g_k^i$ be two states driven by the same source,
and set $\delta g_k^i:=\widetilde g_k^i-g_k^i$. Their updates are
\begin{align}
    g_k^{i+1}
    &={}
    \rho_k g_k^i
    +
    \left(\int_0^\Delta e^{-\lambda_k s}\,ds\right)
    V_k(\alpha)^\top\widehat f^i,
    \nonumber\\
    \widetilde g_k^{i+1}
    &={}
    \rho_k\widetilde g_k^i
    +
    \left(\int_0^\Delta e^{-\lambda_k s}\,ds\right)
    V_k(\alpha)^\top\widehat f^i.
\end{align}
Subtracting the first equation from the second cancels the common input and
gives
\begin{align}
    \delta g_k^{i+1}
    &={}
    \widetilde g_k^{i+1}-g_k^{i+1}
    \nonumber\\
    &={}
    \rho_k\widetilde g_k^i-\rho_k g_k^i
    \nonumber\\
    &={}
    \rho_k\bigl(\widetilde g_k^i-g_k^i\bigr)
    =\rho_k\delta g_k^i.
\end{align}
Moreover,
\begin{align}
    |\rho_k|
    &=|e^{-\lambda_k\Delta}|
    =e^{-\operatorname{Re}(\lambda_k)\Delta}
    \leq 1,
\end{align}
because $\operatorname{Re}(\lambda_k)\geq0$. It follows that
\begin{align}
    \lVert\delta g_k^{i+1}\rVert
    &=\lVert\rho_k\delta g_k^i\rVert
    =|\rho_k|\,\lVert\delta g_k^i\rVert
    \leq\lVert\delta g_k^i\rVert.
\end{align}

For the second claim, let $a_k:=\operatorname{Re}(\lambda_k)>0$ and
$B_k:=\sup_i\lVert V_k(\alpha)^\top\widehat f^i\rVert$. Since the source is
piecewise constant, the same bound holds at almost every time. The definition
of the state then yields
\begin{align}
    \lVert g_k(t)\rVert
    &=
    \left\lVert
        \int_0^t
            e^{-\lambda_k(t-s)}
            V_k(\alpha)^\top\widehat f(s)
        \,ds
    \right\rVert
    \nonumber\\
    &\leq
    \int_0^t
        |e^{-\lambda_k(t-s)}|
        \lVert V_k(\alpha)^\top\widehat f(s)\rVert
    \,ds
    \nonumber\\
    &\leq
    B_k\int_0^t e^{-a_k(t-s)}\,ds
    \nonumber\\
    &=
    \frac{B_k}{a_k}\left(1-e^{-a_k t}\right)
    \leq
    \frac{B_k}{a_k}.
\end{align}
Setting $t=t_i$ proves that the sequence $\{g_k^i\}_{i\geq0}$ is uniformly
bounded.
\end{proof}

\subsection{Extensions to mild nonlinearities}
\label{app:nonlinear-proofs}

This section develops the two nonlinear extensions summarized in
Section~\ref{sec:nonlinear}. The first treats a nonlinear effect as a feedback
source and alternates it with the learned linear propagation. The second uses
a Kirchhoff transformation to rewrite a class of nonlinear heat equations in
terms of a transformed field. Both constructions reuse the learned linear
response. The reference medium is fixed throughout, so we omit the common
argument $\alpha_0$ from the factors and states below.

\subsubsection{Radiative boundary feedback}
\label{app:radiative-feedback}

Radiative heat loss provides a source-like nonlinearity whose strength grows
rapidly with temperature and can therefore alter the peak response even when
conduction is well described by a linear model. Given an applied flux $q$, we
write the effective source as
\begin{equation}
\label{eq:radiative-feedback}
    f=q+\mathcal N(v_\alpha),
    \qquad
    \mathcal N(v_\alpha)
    :=-\sigma
    \big((v_\alpha+u_\infty)^4-u_\infty^4\big),
\end{equation}
where $u_\infty$ is the ambient temperature and $\mathcal N$ is the
Stefan--Boltzmann radiative flux~\citep{necati1993heat}. If the source and
observation manifolds coincide, this flux can be evaluated from the predicted
temperature and fed back as a source.

In the spirit of operator splitting~\citep{strang1968construction}, we
alternate linear propagation with this nonlinear source update. We use a
semi-implicit step based on a linearization at the previous prediction
\citep{ascher1997implicit}. Evaluating the full convolution at each step would
revisit the entire source history; the streaming representation instead
reduces the update to a fixed-cost state recurrence and a small solve in
coefficient space.

For the coefficient-space construction, we assume that the coincident source
and observation manifolds use the same orthonormal cosine basis. We retain the
$S/O$ subscripts to show the direction of each projection.

Let
\begin{equation}
    w^i:=\sum_{k=1}^{M}U_kg_k^i,
    \qquad
    v^i(\xi):=\boldsymbol{\Phi}_O(\xi)^\top w^i.
\end{equation}
We write the projected applied flux and radiative flux as
\begin{align}
    \widehat q^{\,i}
    &:={}
    \int_{U_S}\boldsymbol{\Phi}_S(\xi)q(\xi,t_i)\,d\xi,
    \\
    \widehat{\mathcal N}(w)
    &:={}
    \int_{U_S}
        \boldsymbol{\Phi}_S(\xi)
        \mathcal N\!\left(\boldsymbol{\Phi}_O(\xi)^\top w\right)
    \,d\xi.
\end{align}
At time $t_i$, set $\widehat{\mathcal N}^{\,i}:=
\widehat{\mathcal N}(w^i)$. Its derivative with respect to $w$ is
\begin{equation}
    J^i
    :={}
    \int_{U_S}
        \boldsymbol{\Phi}_S(\xi)\mathcal N'(v^i(\xi))
        \boldsymbol{\Phi}_O(\xi)^\top
    \,d\xi.
\end{equation}

To include the unknown radiative flux at $t_{i+1}$, we linearly interpolate
the source over $[t_i,t_{i+1}]$. With $\nu_k:=\lambda_k\Delta$, define
\begin{align}
    \gamma_0(\nu)
    &:={}
    \frac{1-(1+\nu)e^{-\nu}}{\nu^2},
    &
    \gamma_1(\nu)
    &:={}
    \frac{\nu-1+e^{-\nu}}{\nu^2},
    \\
    B_\Delta
    &:={}
    \sum_{k=1}^{M}\gamma_1(\nu_k)U_kV_k^\top,
\end{align}
where $\gamma_0(0)=\gamma_1(0)=1/2$ by continuity. All terms that are known
at the beginning of the step are collected in
\begin{align}
    \bar g_k^{i+1}
    &:={}
    e^{-\nu_k}g_k^i
    +\Delta\gamma_0(\nu_k)V_k^\top
        \left(\widehat q^{\,i}+\widehat{\mathcal N}^{\,i}\right)
    +\Delta\gamma_1(\nu_k)V_k^\top\widehat q^{\,i+1},
    \\
    \bar w^{i+1}
    &:={}
    \sum_{k=1}^{M}U_k\bar g_k^{i+1}.
\end{align}

\begin{proposition}[Semi-implicit radiative update]
\label{prop:radiation}
A first-order expansion of the radiative flux at $v^i$ gives the coefficient
update
\begin{equation}
\label{eq:radiation-solve}
    \left(I-\Delta B_\Delta J^i\right)w^{i+1}
    =\bar w^{i+1}
    +\Delta B_\Delta
        \left(\widehat{\mathcal N}^{\,i}-J^iw^i\right).
\end{equation}
Thus each time step requires one linear solve in the observation-basis
coefficients.
\end{proposition}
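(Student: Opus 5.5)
Since the kernel model \eqref{eq:kernel-model} is exact on every interval, I would start from the Duhamel representation of each mode state over one step, exactly as in the proof of Proposition~\ref{prop:streaming-app}. Instead of holding the source constant, I would interpolate the projected total source $\widehat f(s)=\widehat q(s)+\widehat{\mathcal N}(s)$ linearly on $[t_i,t_{i+1}]$. Writing $s=t_i+\theta\Delta$, the recursion becomes
\[
g_k^{i+1}=e^{-\nu_k}g_k^i+\Delta V_k^\top\int_0^1 e^{-\nu_k(1-\theta)}\bigl((1-\theta)\widehat f^{\,i}+\theta\widehat f^{\,i+1}\bigr)\,d\theta .
\]
The first step is a short calculus check: $\int_0^1 e^{-\nu(1-\theta)}(1-\theta)\,d\theta=\gamma_0(\nu)$ and $\int_0^1 e^{-\nu(1-\theta)}\theta\,d\theta=\gamma_1(\nu)$. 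Both follow by integration by parts. The limits $\gamma_0(0)=\gamma_1(0)=1/2$ follow by Taylor expansion, which covers the zero rate $\lambda_1=0$.

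Next I would split $\widehat f^{\,i+1}=\widehat q^{\,i+1}+\widehat{\mathcal N}^{\,i+1}$. All terms known at time $t_i$ are collected into $\bar g_k^{i+1}$, matching the definition, which leaves
\[
g_k^{i+1}=\bar g_k^{i+1}+\Delta\gamma_1(\nu_k)V_k^\top\widehat{\mathcal N}^{\,i+1}.
\]
Applying $U_k$ and summing over $k$ gives $w^{i+1}=\bar w^{i+1}+\Delta B_\Delta\widehat{\mathcal N}^{\,i+1}$. This identity is still exact for the interpolated source; the only approximation so far is the interpolation itself.

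The key step is the linearization of the radiative term. The map $w\mapsto\widehat{\mathcal N}(w)$ is the projection of a smooth pointwise nonlinearity composed with $w\mapsto\boldsymbol{\Phi}_O^\top w$. Differentiating under the integral sign (justified because $\mathcal N$ is a polynomial and the basis is bounded on compact $U_S$) gives derivative $J^i$ at $w^i$. Hence
\[
\widehat{\mathcal N}(w^{i+1})\approx\widehat{\mathcal N}^{\,i}+J^i(w^{i+1}-w^i).
\]
Substituting this and moving $\Delta B_\Delta J^iw^{i+1}$ to the left-hand side yields \eqref{eq:radiation-solve}. The system is square in the observation coefficients, so each step needs one linear solve.

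The main subtlety is bookkeeping rather than analysis: the source and observation coefficient spaces must be identified. Here I would use the standing assumption of coincident manifolds with a shared orthonormal cosine basis, so that $J^i$ maps $O$-coefficients to $S$-coefficients and $B_\Delta J^i$ is square. I would also note that solvability of $I-\Delta B_\Delta J^i$ is not part of the claim. It holds for small $\Delta$ by a Neumann series, and also plausibly because $\mathcal N'\le0$ makes $J^i$ negative semidefinite.
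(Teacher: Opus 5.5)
Your proposal is correct and follows the paper's proof essentially step for step: you linearly interpolate the projected source on $[t_i,t_{i+1}]$ to obtain the $\gamma_0,\gamma_1$ weights, absorb all known terms into $\bar g_k^{i+1}$, and sum to get $w^{i+1}=\bar w^{i+1}+\Delta B_\Delta\widehat{\mathcal N}(w^{i+1})$. You then linearize $\widehat{\mathcal N}$ at $w^i$ with Jacobian $J^i$ and collect terms, as the paper does. One small bookkeeping correction: $B_\Delta J^i$ is square ($D_O\times D_O$) from the dimensions alone, so the coincident-manifold assumption is not what makes it square; what that assumption provides is that $\mathcal N(\boldsymbol{\Phi}_O(\xi)^\top w)$ makes sense as a source density at the chart point $\xi\in U_S$, so that $\widehat{\mathcal N}$ and $J^i$ are well defined.
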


\begin{proof}
For $0\leq r\leq\Delta$, approximate the projected source by
\begin{equation}
    \widehat f(t_i+r)
    \approx
    \left(1-\frac{r}{\Delta}\right)\widehat f^i
    +\frac{r}{\Delta}\widehat f^{i+1}.
\end{equation}
Splitting the state integral at $t_i$ as in Proposition~\ref{prop:streaming-app}
then gives
\begin{align}
    g_k^{i+1}
    &\approx
    e^{-\nu_k}g_k^i
    +V_k^\top\widehat f^i
        \int_0^\Delta e^{-\lambda_k(\Delta-r)}
        \left(1-\frac{r}{\Delta}\right)dr
    \nonumber\\
    &\qquad
    +V_k^\top\widehat f^{i+1}
        \int_0^\Delta e^{-\lambda_k(\Delta-r)}
        \frac{r}{\Delta}\,dr
    \nonumber\\
    &=
    e^{-\nu_k}g_k^i
    +\Delta\gamma_0(\nu_k)V_k^\top\widehat f^i
    +\Delta\gamma_1(\nu_k)V_k^\top\widehat f^{i+1}.
\end{align}
For radiative feedback,
\begin{equation}
    \widehat f^i=\widehat q^{\,i}+\widehat{\mathcal N}^{\,i},
    \qquad
    \widehat f^{i+1}
    =\widehat q^{\,i+1}+\widehat{\mathcal N}(w^{i+1}).
\end{equation}
Using the definition of $\bar g_k^{i+1}$ therefore yields
\begin{equation}
    g_k^{i+1}
    =\bar g_k^{i+1}
    +\Delta\gamma_1(\nu_k)V_k^\top
        \widehat{\mathcal N}(w^{i+1}).
\end{equation}
After multiplication by $U_k$ and summation over $k$, this becomes
\begin{equation}
\label{eq:radiation-w-proof}
    w^{i+1}
    =\bar w^{i+1}
    +\Delta B_\Delta\widehat{\mathcal N}(w^{i+1}).
\end{equation}
The first-order expansion at $w^i$ is
\begin{align}
    \widehat{\mathcal N}(w^{i+1})
    &\approx
    \widehat{\mathcal N}(w^i)
    +J^i(w^{i+1}-w^i)
    \nonumber\\
    &=
    \widehat{\mathcal N}^{\,i}+J^i(w^{i+1}-w^i).
\end{align}
Substituting this expression into \eqref{eq:radiation-w-proof} and collecting
the terms containing $w^{i+1}$ gives \eqref{eq:radiation-solve}.
\end{proof}

This feedback construction uses the linearity of the kernel in the source and
could also be evaluated from the full convolution. The streaming form avoids
repeating that convolution at every nonlinear step.

\subsubsection{Kirchhoff transformation}
\label{app:kirchhoff-construction}

The second construction concerns state-dependent medium properties, as arise
when a material's conductivity changes with temperature. Given a reference
medium $\alpha_0$, write
\begin{equation}
\label{eq:state_dep_properties}
    \alpha_u(x,t)=\mathcal A\big(\alpha_0(x),u(x,t)\big),
    \qquad
    \alpha_0:\Omega\to\mathbb R^p,
    \quad
    \mathcal A:\mathbb R^p\times\mathbb R\to\mathbb R^p.
\end{equation}
If $\alpha_u$ were known throughout the volume, we could evaluate a newly
conditioned kernel $K_{\alpha_u}$. In our setting, however, the constitutive
law $\mathcal A$ is known while $u$ is predicted only on the observation
manifold, so the updated volumetric medium cannot generally be recovered.

For heat conduction, useful special cases arise when the spatial and
temperature dependence separate. Let $\alpha_0(x)$ be the spatial reference
conductivity and let $\widetilde k(u)$ and $c(u)$ be positive scalar
conductivity and heat-capacity factors. We consider
\begin{equation}
\label{eq:temp_dep_conduct}
    \begin{gathered}
        \partial_tu
        =\frac{1}{c(u)}\nabla\cdot
        \left(\alpha_0(x)\widetilde k(u)\nabla u\right),
        \qquad
        \alpha_0:\Omega\to\mathbb R_{>0},
        \quad
        \widetilde k,c:\mathbb R\to\mathbb R_{>0},
        \\
        r(u):=\frac{\widetilde k(u)}{c(u)},
        \qquad
        \text{(A)}\quad r(u)\equiv1,
        \qquad
        \text{(B)}\quad r(u)=\widetilde k(u).
    \end{gathered}
\end{equation}
Setting~A describes conductivity and heat capacity that vary by the same
factor, so heat spreads at a temperature-independent rate
\citep{lesnic1995note}. Setting~B fixes $c\equiv1$ and varies only the
conductivity, so the propagation speed depends on the local temperature
\citep{fabre2017integral}.

Both cases can be treated through the classical Kirchhoff construction
\citep{carslaw1906introduction,necati1993heat}. Define
\begin{equation}
\label{eq:kirchhoff-transform}
    \Theta_{\widetilde k}(r):=\int_0^r\widetilde k(s)\,ds,
    \qquad
    \theta:=\Theta_{\widetilde k}(u).
\end{equation}
It is strictly increasing and can therefore be inverted pointwise. This
change of variables itself does not depend on the streaming formulation.

\begin{proposition}[Kirchhoff transformation]
\label{prop:kirchhoff}
The transformed field satisfies
\begin{equation}
\label{eq:kirchhoff-general}
    \partial_t\theta
    =r(u)\nabla\cdot\left(\alpha_0(x)\nabla\theta\right),
    \qquad
    r(u):=\frac{\widetilde k(u)}{c(u)}.
\end{equation}
The imposed normal flux is unchanged. If $c(u)=\widetilde k(u)$, the equation
for $\theta$ is exactly the linear heat equation. If $c(u)\equiv1$, it becomes
\begin{equation}
\label{eq:kirchhoff-clock}
    \partial_t\theta
    =m(\theta)\nabla\cdot\left(\alpha_0(x)\nabla\theta\right),
    \qquad
    m(\theta):=r\!\left(\Theta_{\widetilde k}^{-1}(\theta)\right)
    =\widetilde k\!\left(\Theta_{\widetilde k}^{-1}(\theta)\right).
\end{equation}
\end{proposition}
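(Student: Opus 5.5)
The plan is a direct chain-rule computation on the Kirchhoff variable, carried out for smooth positive solutions first; the weak formulation is reached by density afterwards. Start from \eqref{eq:kirchhoff-transform}. Since $\Theta_{\widetilde k}'=\widetilde k>0$, we get $\partial_t\theta=\widetilde k(u)\,\partial_tu$ and $\nabla\theta=\widetilde k(u)\nabla u$. Hence the flux in \eqref{eq:temp_dep_conduct} can be rewritten as
\[
\alpha_0(x)\widetilde k(u)\nabla u=\alpha_0(x)\nabla\theta .
\]
Substituting this into the equation gives
\[
\partial_tu=\frac{1}{c(u)}\nabla\cdot(\alpha_0\nabla\theta).
\]
Multiplying by $\widetilde k(u)$ then yields $\partial_t\theta=\frac{\widetilde k(u)}{c(u)}\nabla\cdot(\alpha_0\nabla\theta)$, which is \eqref{eq:kirchhoff-general}.

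For the boundary statement, use the same pointwise identity on $\partial\Omega$. It gives
\[
-\alpha_0\widetilde k(u)\nabla u\cdot n=-\alpha_0\nabla\theta\cdot n,
\]
so a prescribed physical Neumann flux $f_\Sigma$ in the sense of \eqref{eq:neumann-source} becomes the same Neumann datum for $\theta$ with conductivity $\alpha_0$. Consequently the restricted Green representation \eqref{eq:restricted-kernel}, with kernel for medium $\alpha_0$, applies unchanged to $\theta$. The initial data transform pointwise as $\theta_0=\Theta_{\widetilde k}(u_0)$.

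The two special cases then follow directly. If $c=\widetilde k$, then $r\equiv1$, and \eqref{eq:kirchhoff-general} is the linear heat equation for $\theta$ with medium $\alpha_0$. If $c\equiv1$, then $r(u)=\widetilde k(u)$. Because $\Theta_{\widetilde k}$ is strictly increasing and continuous, it has a continuous inverse on its range, so $u=\Theta_{\widetilde k}^{-1}(\theta)$. Substituting this into $r(u)$ gives $m(\theta)$ and hence \eqref{eq:kirchhoff-clock}.

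The main obstacle is regularity. The chain rule and the division by $c(u)$ need $u$ regular enough, for instance $u\in H^1$ in space with $\partial_tu\in L^2$, and $\widetilde k,c$ continuous and positive. In the weak setting I would argue as follows. For $\widetilde k$ locally Lipschitz, the composition $\Theta_{\widetilde k}(u)$ is in $H^1$ and satisfies $\nabla\theta=\widetilde k(u)\nabla u$ almost everywhere. Testing the weak form against $\phi$, the flux term is literally the same expression, which confirms the unchanged Neumann load in \eqref{eq:weak-neumann-load}. In case (B), where the factor $r(u)$ multiplies the divergence, I would state the result for strong solutions to avoid non-divergence-form weak formulations.
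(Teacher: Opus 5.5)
Your proposal is correct and follows the paper's proof essentially line for line: the chain-rule identities $\nabla\theta=\widetilde k(u)\nabla u$ and $\partial_t\theta=\widetilde k(u)\partial_t u$, substitution into the equation and multiplication by $\widetilde k(u)$, the same flux identity on the boundary, and then the two specializations of $r$. Your regularity discussion goes beyond the paper, which argues formally for smooth solutions. Your aside that the restricted Green representation with medium $\alpha_0$ ``applies unchanged to $\theta$'' is valid only in the case $c=\widetilde k$, since for $c\equiv1$ the $\theta$-equation stays nonlinear.
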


\begin{proof}
Differentiating the transform gives
\begin{equation}
    \Theta_{\widetilde k}'(u)=\widetilde k(u).
\end{equation}
The chain rule therefore gives
\begin{align}
    \nabla\theta
    &=\widetilde k(u)\nabla u,
    \label{eq:kirchhoff-gradient}
    \\
    \partial_t\theta
    &=\widetilde k(u)\partial_tu.
    \label{eq:kirchhoff-time}
\end{align}
It follows that
\begin{align}
    \nabla\cdot\left(\alpha_0(x)\widetilde k(u)\nabla u\right)
    &=\nabla\cdot\left(\alpha_0(x)\nabla\theta\right),
    \\
    \partial_tu
    &=\frac{1}{\widetilde k(u)}\partial_t\theta.
\end{align}
Substitution into \eqref{eq:temp_dep_conduct} gives
\begin{equation}
    \frac{1}{\widetilde k(u)}\partial_t\theta
    =\frac{1}{c(u)}\nabla\cdot
    \left(\alpha_0(x)\nabla\theta\right).
\end{equation}
Multiplying by $\widetilde k(u)$ proves \eqref{eq:kirchhoff-general}. The same
gradient identity gives
\begin{equation}
    -\alpha_0(x)\widetilde k(u)\nabla u\cdot n
    =-\alpha_0(x)\nabla\theta\cdot n,
\end{equation}
so the prescribed normal flux is unchanged. Setting
$c(u)=\widetilde k(u)$ or $c(u)=1$ gives the two stated special cases.
\end{proof}

The two settings now lead to different uses of the learned response. Writing
$v_\alpha^{\mathrm{lin}}$ for the raw prediction of the linear kernel, both
can be summarized as
\begin{equation}
\label{eq:kirchhoff-input-output}
    \big(v_\alpha^{\mathrm{lin}},\widetilde k,c\big)
    \xrightarrow{\ \mathcal T\ }
    v_\alpha.
\end{equation}
In setting~A, the transformation reduces the problem exactly to the linear
heat equation. The raw prediction is therefore the transformed trace,
$v_\alpha^{\mathrm{lin}}=\theta|_{\Sigma_O}$, and pointwise inversion gives
\begin{equation}
    v_\alpha
    =\Theta_{\widetilde k}^{-1}(v_\alpha^{\mathrm{lin}}).
\end{equation}
Thus $\mathcal T$ is exact and does not require the streaming state. In
setting~B, the multiplier $m(\theta)$ remains, so the transformed system does
not fully linearize. Because this multiplier is defined throughout the volume
while the model predicts only a trace, we introduce a trace-based
approximation and use the streaming state to advance it in time.

For the conductivity used in the experiments, we set
\begin{equation}
\label{eq:conductivity-law}
    \widetilde k(u)=1+\beta u,
\end{equation}
and assume that $1+\beta u>0$ on the relevant solution range. The transform,
its inverse, and the remaining multiplier when $c(u)=1$ are
\begin{align}
\label{eq:kirchhoff-explicit}
    \Theta_{\widetilde k}(u)
    &=u+\frac{\beta}{2}u^2,
    &
    \Theta_{\widetilde k}^{-1}(s)
    &=\frac{-1+\sqrt{1+2\beta s}}{\beta},
    \\
\label{eq:clock-explicit}
    m(s)
    &=\sqrt{1+2\beta s}.
\end{align}
The inverse formula is written for $\beta\neq0$. For $\beta=0$, the transform
is the identity and $m\equiv1$.

We evaluate $m$ on the predicted trace and use it to modify the corresponding
time derivative. To express this update, we first collect the streaming
variables in a single state vector.

Let
\begin{equation}
    z:=\begin{bmatrix}g_1^\top&\cdots&g_M^\top\end{bmatrix}^{\!\top},
    \qquad
    \dot z=-\Lambda z+B_f\widehat f,
\label{eq:stacked-linear}
\end{equation}
where
\begin{equation}
    \Lambda:=\operatorname{diag}(\lambda_1I_L,\ldots,\lambda_MI_L),
    \qquad
    C:=\begin{bmatrix}U_1&\cdots&U_M\end{bmatrix},
    \qquad
    B_f:=\begin{bmatrix}V_1^\top&\cdots&V_M^\top\end{bmatrix}^{\!\top}.
\end{equation}
The transformed trace reconstructed from this state is
\begin{equation}
    v^{\mathrm{lin}}(\xi)
    :=\Theta_{\widetilde k}(u(X_O(\xi)))
    =\boldsymbol{\Phi}_O(\xi)^\top Cz.
\end{equation}
Define the projected multiplication matrix and a regularized map from
observation coefficients to state variables by
\begin{align}
\label{eq:clock-matrix}
    M(z)
    &:={}
    \int_{U_O}
        \boldsymbol{\Phi}_O(\xi)m(v^{\mathrm{lin}}(\xi))
        \boldsymbol{\Phi}_O(\xi)^\top
    \,d\xi,
    \\
\label{eq:left-inverse}
    C_\varepsilon^\dagger
    &:={}
    (C^\top C+\varepsilon I)^{-1}C^\top.
\end{align}

\begin{proposition}[Trace-based reduced approximation]
\label{prop:surface-clock}
Replacing the volumetric multiplier in \eqref{eq:kirchhoff-clock} by its value
on the predicted trace gives
\begin{equation}
\label{eq:clocked-reduced}
    \dot z=C_\varepsilon^\dagger M(z)C
    \left(-\Lambda z+B_f\widehat f\right).
\end{equation}
If $\beta=0$, $\varepsilon=0$, and $C$ has full column rank, this reduces to
the original linear streaming dynamics.
\end{proposition}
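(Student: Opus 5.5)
We treat Proposition~\ref{prop:surface-clock} as a derivation at the level of the reduced state. It rests on two facts: the streaming representation of Proposition~\ref{prop:streaming-app} in its continuous-time form, and the Kirchhoff transform of Proposition~\ref{prop:kirchhoff}.

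\textbf{Linear baseline.} First, recall that for $\beta=0$ the transformed trace is produced by the linear model. The definition of $g_k(t)$ gives $\dot g_k=-\lambda_k g_k+V_k^\top\widehat f$. Stacking the modes gives \eqref{eq:stacked-linear}, and the readout is $v^{\mathrm{lin}}=\boldsymbol{\Phi}_O^\top Cz$. Differentiating the readout in time gives the coefficient-space rate of the trace,
\[
\partial_t w = C\dot z = C\left(-\Lambda z+B_f\widehat f\right),
\]
where $w=Cz$. By \eqref{eq:kirchhoff-general}, this right-hand side approximates $\nabla\cdot(\alpha_0\nabla\theta)$ restricted to $\Sigma_O$, together with the flux input. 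We take this as the linear spatial increment of the transformed field.

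\textbf{Modulation by the multiplier.} Next, \eqref{eq:kirchhoff-clock} multiplies this increment pointwise by $m(\theta)$. The volumetric $\theta$ is unavailable, so we replace $m(\theta)$ on $\Sigma_O$ by $m(v^{\mathrm{lin}})$; this is the trace-based approximation. The modulated trace rate is then
\[
m(v^{\mathrm{lin}}(\xi))\,\boldsymbol{\Phi}_O(\xi)^\top C\dot z_{\mathrm{lin}} .
\]
Galerkin-projecting it onto $\boldsymbol{\Phi}_O$, using orthonormality of the cosine basis, gives coefficient rate $M(z)C(-\Lambda z+B_f\widehat f)$ with $M(z)$ as in \eqref{eq:clock-matrix}. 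To return from observation coefficients to the state, we require $C\dot z$ to match this rate in the regularized least-squares sense. Minimizing
\[
\|C\dot z-r\|^2+\varepsilon\|\dot z\|^2
\]
yields $\dot z=C_\varepsilon^\dagger r$, which is exactly \eqref{eq:clocked-reduced}.

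\textbf{Reduction check.} If $\beta=0$, then $m\equiv1$, so $M(z)=\int\boldsymbol{\Phi}_O\boldsymbol{\Phi}_O^\top=I$ by orthonormality. With $\varepsilon=0$ and $C$ of full column rank, $C^\top C$ is invertible and $C_0^\dagger C=(C^\top C)^{-1}C^\top C=I$. Hence $\dot z=-\Lambda z+B_f\widehat f$, the original linear streaming dynamics.

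\textbf{Main obstacle.} The difficulty is making precise what ``replacing the volumetric multiplier by its trace value'' means. The modelled identity holds only after projection onto the range of $C$, and the regularized pseudoinverse is a modelling choice, not a consequence. We therefore present the first part as a definition-level derivation rather than an error estimate. Only the $\beta=0$ reduction is an exact statement, and it rests solely on orthonormality of $\boldsymbol{\Phi}_O$ and invertibility of $C^\top C$.
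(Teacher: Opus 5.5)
Your proposal is correct and follows the paper's proof essentially step for step. You differentiate the linear readout to get $C(-\Lambda z+B_f\widehat f)$, modulate it by $m(v^{\mathrm{lin}})$ and project onto the orthonormal observation basis to obtain $M(z)C(-\Lambda z+B_f\widehat f)$, define $\dot z$ as the regularized least-squares fit (which gives $C_\varepsilon^\dagger$), and check the $\beta=0$, $\varepsilon=0$ case via $M(z)=I$ and $C^\dagger C=I$. Your remark that the first part is a modelling definition rather than a derived error estimate matches how the paper treats it.
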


\begin{proof}
The linear streaming model predicts the observation coefficients and their
time derivative as
\begin{align}
    w&=Cz,
    &
    \dot w&=C\dot z=C(-\Lambda z+B_f\widehat f).
\end{align}
Multiplying the reconstructed derivative by $m(v^{\mathrm{lin}}(\xi))$ and projecting
it back onto the observation basis gives the coefficient vector
\begin{equation}
    M(z)C(-\Lambda z+B_f\widehat f).
\end{equation}
We choose the state derivative by regularized least squares,
\begin{equation}
    \dot z
    =\underset{y}{\operatorname{argmin}}
    \left\lVert Cy-M(z)C(-\Lambda z+B_f\widehat f)\right\rVert^2
    +\varepsilon\lVert y\rVert^2.
\end{equation}
Its solution is exactly \eqref{eq:clocked-reduced}. If $\beta=0$, then
$m\equiv1$ and orthonormality of the observation basis gives $M(z)=I$.
For $\varepsilon=0$ and full column rank, $C^\dagger C=I$, so
\begin{equation}
    \dot z=-\Lambda z+B_f\widehat f.
\end{equation}
\end{proof}

Evolving \eqref{eq:clocked-reduced} and then applying the inverse Kirchhoff
transform gives the approximate map $\mathcal T$ for case~B.

\section{Datasets}
\label{app:datasets}
\label{app:TODO-datasets}
\label{app:exp}

This section defines the equations, media, sources, observation manifolds,
and numerical reference solutions used in the experiments. Model definitions
and experiment-specific protocols follow in later sections.

\subsection{Heat-Conduction Data}
\label{app:inst}

The temperature satisfies
\begin{equation}
\label{eq:heat-instance}
    \partial_t u=\nabla\!\cdot\!\bigl(\alpha(x)\nabla u\bigr)+f,
    \qquad u(\cdot,0)=0,
\end{equation}
on the cube $\Omega=[0,0.1]^3$ with homogeneous Neumann conditions. A
surface flux on the top face is represented as a volumetric source in the top
cell layer. The default observation is the temperature in that layer.

\begin{table}[ht]
\centering
\small
\begin{tabular}{ll}
\toprule
quantity & value \\
\midrule
spatial grid & $64^3$ cells \\
time interval & $[0,1]$ \\
stored frames & $N_t=64$ \\
time integrator & Crank--Nicolson \\
substeps per frame & $4$ for linear, $32$ for radiative, adaptive for configuration~B \\
linear solver & preconditioned conjugate gradients \\
relative tolerance & $10^{-8}$ \\
maximum iterations & $200$ \\
solver precision & float64 \\
\bottomrule
\end{tabular}
\caption{Reference discretization for the heat data.}
\label{tab:heat-data-settings}
\end{table}

The conservative finite-volume discretization uses harmonic averages of
$\alpha$ at cell faces. Crank--Nicolson time stepping uses 4 substeps per
stored frame for linear data and 32 for radiative data. Each linear system is
solved by conjugate gradients with a constant-coefficient DCT preconditioner.
The solver uses float64 and the learned models receive float32 fields. The
64 reported frames follow the zero initial condition, which is not stored as
an additional frame.

\begin{table}[ht]
\centering
\small
\begin{tabular}{p{0.17\linewidth}p{0.72\linewidth}}
\toprule
medium & construction \\
\midrule
two-phase & Threshold $g_{\mathrm{med}}$ at its median. Set
$\alpha_{\mathrm{solid}}=5\times10^{-4}$ and
$\alpha_{\mathrm{powder}}=5\times10^{-6}$. The top 2 cell layers are
set to $\alpha_{\mathrm{powder}}$. \\
graded & Set
$\varphi=\Phi(g_{\mathrm{med}}/\operatorname{std}(g_{\mathrm{med}}))$
and $\alpha=\alpha_{\mathrm{powder}}+
(\alpha_{\mathrm{solid}}-\alpha_{\mathrm{powder}})\varphi$.
For the interior-plane data, the endpoints are $10^{-4}$ and $10^{-2}$
and no powder cap is used. \\
\bottomrule
\end{tabular}
\caption{Media used in the heat experiments. The collocated experiments
use the two-phase medium. The interior-plane experiments use the graded
medium.}
\label{tab:heat-media}
\end{table}

The two-phase medium is obtained by smoothing a standard Gaussian field
$g_{\mathrm{med}}$ with a Gaussian filter of standard deviation $0.02$,
thresholding it at its median, and assigning
$\alpha_{\mathrm{solid}}=5\times10^{-4}$ and
$\alpha_{\mathrm{powder}}=5\times10^{-6}$. The top 2 cell layers are set to
the powder value. The interior-plane experiments instead use a graded medium
between $10^{-4}$ and $10^{-2}$ without the powder cap. Every learned model
receives the normalized coefficient
\begin{equation}
    \alpha_{\mathrm{norm}}
    =\frac{\alpha-\alpha_{\min}}{\alpha_{\max}-\alpha_{\min}}.
\end{equation}
For the two-phase data,
$(\alpha_{\min},\alpha_{\max})
=(\alpha_{\mathrm{powder}},\alpha_{\mathrm{solid}})$. The graded data use
their corresponding conductivity endpoints.

Every heat source is a Gaussian flux
\begin{equation}
\label{eq:heat-source-family}
    g(x,t)=A(t)\exp\!\left(-\frac{\lVert x-c(t)\rVert^2}{2\sigma^2}\right),
    \qquad \sigma=0.025,
\end{equation}
with peak amplitude $10$. Training sources are static pulses with centers
uniformly distributed inside the surface margins and durations between 6 and
64 frames. Their centers are uniform in $[0.15,0.85]^2L_\Omega$, and the
amplitude uses smooth $\sin^2$ ramps. Straight scans move at
constant speed along a random line segment of length $0.3L_\Omega$ to
$0.6L_\Omega$ and switch off at the end of the path. Their speed is sampled
between $0.5L_\Omega$ and $1.5L_\Omega$ per unit time. Serpentine scans
traverse 3 to 6 alternating horizontal lines in boxes whose widths range from
$0.25L_\Omega$ to $0.5L_\Omega$ and whose heights range from $0.15L_\Omega$
to $0.4L_\Omega$. Half of these paths include a small sinusoidal perturbation.
Only the static pulses are used for training.

Two further static-pulse test sets support appendix diagnostics. The rollout
set contains 512 frames on $[0,8]$ at the training frame spacing, while the
frame-spacing set contains 128 frames on $[0,1]$.

\begin{table}[ht]
\centering
\small
\begin{tabular}{lc}
\toprule
data set & samples \\
\midrule
heat training & $1024$ \\
static test & $128$ \\
straight scans & $128$ \\
serpentine scans & $128$ \\
radiative test & $128$ \\
Kirchhoff data & $1024/128/128$ per setting \\
interior-plane data & $1024/128/128/128$ per plane \\
resolution pairs & $64$ \\
\bottomrule
\end{tabular}
\caption{Heat data used for training and evaluation. Multiple numbers denote
training, static test, and the moving or shifted test sets.}
\label{tab:heat-splits}
\end{table}

The reported error is the mean of the per-sample relative errors
\begin{equation}
\label{eq:evaluation-metric}
    \mathcal E_{\mathrm{rel}}
    =\frac{1}{B}\sum_{b=1}^{B}
    \frac{\lVert \widehat v_b-v_b\rVert_2}
         {\lVert v_b\rVert_2+10^{-8}}.
\end{equation}
Tables report the mean and population standard deviation over training seeds
$0$, $1$, and $2$ unless stated otherwise.

\subsection{Distinct Observation Surfaces}
\label{app:interior-data}

The source and governing equation remain unchanged while the observation is
moved into the volume.

\begin{table}[ht]
\centering
\small
\begin{tabular}{lll}
\toprule
observation & grid location & relation to the source \\
\midrule
horizontal plane & $i_z=48$ & separated by about $L_\Omega/4$ \\
vertical plane & $i_x=32$ & mid-plane whose top edge lies in the source face \\
\bottomrule
\end{tabular}
\caption{Interior observation surfaces. Both cases are covered by
Proposition~\ref{prop:restricted-integrability}. The horizontal case also
satisfies the uniform bound in Remark~\ref{rem:separated-manifolds}.}
\label{tab:interior-planes}
\end{table}

The horizontal and vertical data use identical media and sources, which
isolates the effect of observation geometry. The selected graded-medium
contrast produces a visible response throughout both planes while retaining
meaningful dependence on the medium. Each plane has 1024 training samples and
128 samples for each static, straight, and serpentine test class.

\subsection{Advection--Diffusion Data}
\label{app:advection-data}

A ground-level release is observed on an orthogonal sensor curtain. The
concentration satisfies
\begin{equation}
\label{eq:advdiff-instance}
    \partial_t c+b\!\cdot\!\nabla c-\nabla\!\cdot\!(\kappa\nabla c)=S
\end{equation}
on $[0,9\,{\rm km}]^2\times[0,2\,{\rm km}]$ with homogeneous Neumann
conditions. The grid has $64^3$ cells and 64 stored frames over $2700$ s.
The diffusivity is
\begin{equation}
    \kappa(x)=\bar\kappa\exp(0.4g(x)),
    \qquad \bar\kappa\sim U(300,700)\ {\rm m^2s^{-1}},
\end{equation}
where $g$ is a band-limited cosine field with zero mean and unit variance,
clipped to $|g|\le2.5$. The velocity is the sum of a divergence-free circulation with
amplitude $W\sim U(8,12)$ m/s and a divergence-free perturbation whose RMS
speed is $0.4W$. Both components are tangent to the boundary.

The source manifold is the ground plane and the observation manifold is the
wall at $x=9$ km. The emission enters as a volumetric source with a fixed
Gaussian vertical profile centered at $50$ m with width $300$ m. The stored
observation is the cell layer adjacent to the wall. Static training fumes have random location, radius between
$400$ and $1200$ m, amplitude between $0.5$ and $1$, and exponential decay
time between $T/2$ and $2T$. Moving tests use either 1 crosswind track or 2 to
4 alternating crosswind legs. Moving emitters keep a constant amplitude and
radius and travel crosswind at $1.2$ to $2.4$ m/s. Static fumes are required to produce a curtain
RMS of at least $0.005$. Moving sources are not screened by their curtain
response.

The finite-volume reference uses 32 substeps per frame. Vertical diffusion
is implicit, while horizontal diffusion and first-order upwind advection are
explicit. Negative concentrations are clipped after each step.

\begin{table}[ht]
\centering
\small
\begin{tabular}{lc}
\toprule
data set & samples \\
\midrule
training & $1024$ \\
static test & $128$ \\
straight moving source & $128$ \\
complex moving source & $128$ \\
\bottomrule
\end{tabular}
\caption{Advection--diffusion data used for training and evaluation.}
\label{tab:advection-splits}
\end{table}

\subsection{Nonlinear and Resolution-Transfer Data}

The radiative set contains 128 static-pulse samples and stores both the
nonlinear reference and the corresponding linear solution. The boundary loss
is $-10^{-12}u^4$. The temperature-dependent conductivity experiments use
$\widetilde k(u)=1+\beta u$ in the 2 configurations developed in
Appendix~\ref{app:kirchhoff-construction}. The scale is set by the median peak
temperature $u_{\max}=1989.8$ of 64 linear training samples. The resulting
values are $1.0051\times10^{-4}$, $2.5129\times10^{-4}$, and
$5.0257\times10^{-4}$ for the mild, strong, and severe settings. Each setting
contains 1024 training samples, 128 static test samples, and 128 shifted test
samples. Shifted sources are scans, rasters, or sums of 2 static pulses.
Configuration~B uses $\lceil 4(1+\beta u^{(s)}_{\max})\rceil$ substeps per
frame, where $u^{(s)}_{\max}$ is the peak of a linear solve for the same sample.

The resolution study uses 64 paired samples. A medium drawn on the $64^3$
grid is prolonged as the same piecewise-constant field on the $128^3$ grid.
The same physical source is evaluated on each grid, and the numerical solver
is run independently at both resolutions. The fine target averages the top 2
layers so that it represents the same physical slab as the coarse target.

\section{GObO Architecture and Common Protocol}
\label{app:shared-architecture}
\label{app:TODO-shared-architecture}
\label{app:implementation}

\subsection{Cosine Basis and Kernel Representation}
\label{app:heat-model}
\label{app:impl-basis}

All source and observation charts are rectangular. The outer boundaries of
the physical domains have homogeneous Neumann conditions, so cosine functions
are a natural basis because their normal derivatives vanish there. Their
two-dimensional tensor products are indexed by 2 spatial frequencies and
therefore form a regular coefficient grid that can be processed efficiently.
We use the cell-centered DCT-II basis
\begin{equation}
\label{eq:impl-dct1d}
    \phi_a(i)=c_a\cos\!\left(\frac{\pi a(2i+1)}{2N}\right),
    \qquad c_0=N^{-1/2},
    \qquad c_a=(2/N)^{1/2}\ \text{for }a>0.
\end{equation}
The first 32 functions in each direction give a $32\times32$ coefficient
grid and $D_S=D_O=1024$. Source projection and observation synthesis are
\begin{equation}
    \widehat F^m=Df^mD^\top,
    \qquad v^m=D^\top\widehat V^mD.
\end{equation}
The constant midpoint quadrature weight is absorbed into the learned source
factors. Source and observation charts use the same function set. For
collocated data, 1 transform matrix therefore serves both projection and
synthesis. The basis is defined in continuous chart
coordinates, so the same coefficients can be evaluated on another grid. This
property is used in the resolution-transfer experiment.

The finite kernel used for heat is
\begin{equation}
\label{eq:heat-model-app}
    K_{\theta,\alpha}(\xi,\eta,\tau)
    =\sum_{r=1}^{R}e^{-\mu_r\tau}
    \boldsymbol\Phi_O(\xi)^\top
    U_r(\alpha)V_r(\alpha)^\top
    \boldsymbol\Phi_S(\eta).
\end{equation}
Here $\mu_r$ and $R$ correspond to $\lambda_k$ and $M$ in the main text. The
main model uses $R=16$ rates and factor search rank $L=16$. It shares
source and observation factors for collocated heat data and uses separate
factors for distinct manifolds. Each factor is normalized to unit Frobenius
norm and a learned scalar amplitude sets its magnitude. Thus the coefficient
matrix at rate $r$ is
\begin{equation}
\label{eq:impl-factor-matrix}
    A_r(\alpha)=\sum_{\ell=1}^{L}a_{r\ell}(\alpha)
    \widehat u_{r\ell}(\alpha)\widehat v_{r\ell}(\alpha)^\top.
\end{equation}
This normalization is used in every reported heat run.

The fixed rates are the lowest analytic decay rates of the homogeneous
Neumann heat Green kernel on the two-dimensional chart. They provide a
physical temporal dictionary for the dominant decay scales, while the learned
factors in Eq.~(\ref{eq:impl-factor-matrix}) represent the heterogeneous
medium. The rates follow
\begin{equation}
\label{eq:impl-ladder}
    \mu_{a,b}=\alpha_{\mathrm{solid}}\frac{\pi^2}{L_\Omega^2}(a^2+b^2).
\end{equation}
The 16 retained values of $a^2+b^2$ are
$0,1,1,2,4,4,5,5,8,9,9,10,10,13,13,16$. The zero rate represents the
conserved mean mode. Repeated analytic rates retain independent learned
spatial factors. In the learned-rate ablation, a pooled head predicts
$d_r(\alpha)$ and replaces each nonzero ladder value by
\begin{equation}
\label{eq:impl-learned-rates}
    \mu_r(\alpha)=\mu_r^{(0)}
    \exp\!\left(6\tanh d_r(\alpha)\right).
\end{equation}
The head is initialized at $d_r=0$ and is regularized toward the analytic
ladder. The conserved rate remains zero.

\subsection{Medium-Conditioning Network}
\label{app:impl-backbone}

The conditioning network receives only the normalized three-dimensional medium. For
each surface location, the corresponding depth column is augmented with a
fixed 4-channel depth encoding. 3 one-dimensional convolutions reduce its
64 depth values to 8 positions. Averaging over them gives 16 features. 2 planar
convolutions then produce a 32-channel field on the manifold grid. We append
16 fixed spatial coordinate channels and apply a channel map
$48\to256\to64$.

The resulting field passes through 4 blocks of width 64. Each block combines
a cosine spectral path, a pointwise linear path, and a small U-Net path. Its update is
\begin{equation}
\label{eq:impl-cosine-block}
    h^{p+1}=\operatorname{GELU}\!\left(
    \mathcal N_p(\mathcal S_p h^p)+W_ph^p+\mathcal U_ph^p\right).
\end{equation}
Here $\mathcal S_p$ is a cosine spectral operator that retains the lowest
$32\times32$ modes and mixes channels independently at each mode,
$\mathcal N_p$ is normalization, $W_p$ acts separately at each grid point,
and $\mathcal U_p$ captures local features at several scales. The spectral
path uses the same cosine transform as the kernel basis. After the 4 blocks, a
separate truncated spectral layer applies the forward cosine transform and
mixes channels separately at each mode. It stops directly on the
$32\times32$ coefficient grid without applying the inverse transform.

\begin{table}[ht]
\centering
\small
\begin{tabular}{ll}
\toprule
quantity & setting \\
\midrule
retained modes & $32\times32$ in each chart \\
source and observation dimensions & $D_S=D_O=1024$ \\
temporal rates & $R=16$ \\
factor rank per rate & $L=16$ \\
hidden width & $64$ \\
conditioning blocks & 4 cosine spectral blocks with U-Nets and 1 truncated layer \\
collocated coupling & shared source and observation factors \\
distinct-manifold coupling & separate source and observation factors \\
parameter count & $22.16$ million \\
\bottomrule
\end{tabular}
\caption{Settings for the heat GObO used in the main experiments.}
\label{tab:heat-model-settings}
\end{table}
A coefficient head produces the source and observation factors in
Eq.~(\ref{eq:impl-factor-matrix}). A pooled head produces their amplitudes.
For collocated heat data the same factor is used on both sides, while distinct
manifolds use separate factors. The source is not an input to this network. It
is projected and applied only after the medium-specific kernel has been
formed, so later sources in the same medium require no further neural-network
evaluation.

Figure~\ref{fig:impl-arch} places this architecture beside the adapted NGO
system network. NGO predicts a dense coefficient matrix for its 2 linear
temporal basis functions. GObO instead keeps the manifold grid through its conditioning blocks and
produces low-rank factors on the coefficient grid. It therefore never forms
the dense $D_O\times D_S$ matrix.

\begin{figure}[ht]
\centering
\includegraphics[width=\linewidth]{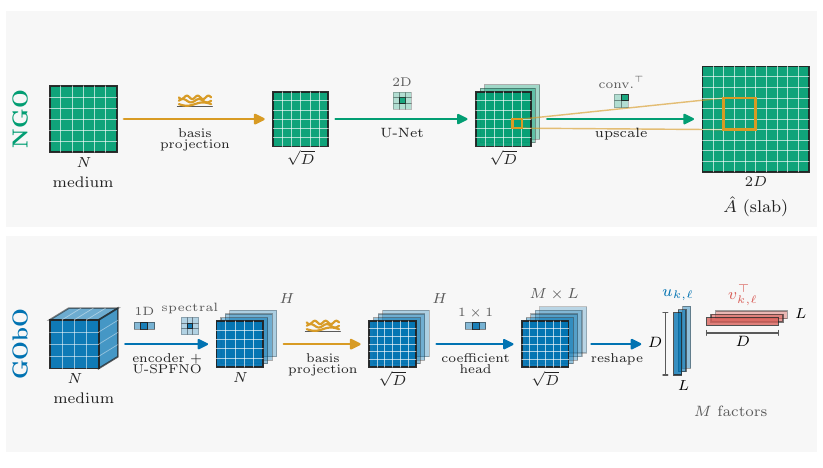}
\caption{\textbf{Conditioning architectures.} The adapted NGO system network
maps medium projections to a dense coefficient matrix. GObO maps the medium
volume to low-rank factors for each analytic decay rate. The GObO source is
applied only after these factors have been produced.}
\label{fig:impl-arch}
\end{figure}

\subsection{Temporal Propagation and Streaming State}

The reported heat and advection--diffusion models treat the source as
piecewise linear between frames. For $z=\mu\Delta$, define
\begin{equation}
\label{eq:impl-gamma}
    \gamma_0(z)=\frac{1-(1+z)e^{-z}}{z^2},
    \qquad
    \gamma_1(z)=\frac{z-1+e^{-z}}{z^2},
\end{equation}
with their continuous limits at $z=0$. The contribution of source frames
$f^{m-1}$ and $f^m$ is then integrated exactly against each exponential.
The equivalent dense form uses the lower-triangular propagator
\begin{equation}
    P_r[m,s]=\Delta\Big(\gamma_1(\mu_r\Delta)e^{-\mu_r\Delta(m-s)}\mathbb 1[s\le m]
    +\gamma_0(\mu_r\Delta)e^{-\mu_r\Delta(m-s-1)}\mathbb 1[s\le m-1]\Big)
\end{equation}
acting on the projected source coefficients $V_r(\alpha)^\top\widehat f^s$.
Training evaluates the resulting convolution densely over all frames. At
inference, the same computation is written as
\begin{equation}
\label{eq:impl-stream}
\begin{aligned}
    g_r^m={}&e^{-\mu_r\Delta}g_r^{m-1}
    +\Delta\gamma_0(\mu_r\Delta)V_r(\alpha)^\top\widehat f^{m-1}\\
    &+\Delta\gamma_1(\mu_r\Delta)V_r(\alpha)^\top\widehat f^m,\\
    v^m={}&\boldsymbol\Phi_O^\top\sum_r U_r(\alpha)g_r^m.
\end{aligned}
\end{equation}
with $g_r^1=\Delta\gamma_1(\mu_r\Delta)V_r(\alpha)^\top\widehat f^1$ because
the zero initial frame is not stored. The source coefficients are contracted
with the source factors before propagation, and only the observation factors
enter the synthesis. The state has $RL=256$ scalar coefficients for the heat
model. Changing the frame spacing only changes the analytical propagation
weights. Dense and streaming evaluations agree to floating-point precision.

\subsection{Advection--Diffusion Variant}

The advection--diffusion model receives the diffusivity and 3 velocity
components. The channels are scaled to order 1 using
$\kappa/\kappa_{\mathrm{ref}}$, where
$\kappa_{\mathrm{ref}}=500\,\mathrm{m^2s^{-1}}$, and by multiplying every
velocity component by $T/L_x=0.3$ s/m. One input path pools depth to the ground chart and a second pools
the along-wind direction to the sensor curtain. Each path has its own channel
map, 4 of the same conditioning blocks, and a final truncated spectral block.
Both retain a $32\times32$ cosine coefficient grid and produce separate source
and observation factors. Both factor grids are complex valued. Including the
conjugate rates gives the real curtain field
\begin{equation}
v^m=\operatorname{Re}\!\left(2\sum_{r=1}^{8}\sum_{\ell=1}^{8}
U_{r\ell}G^m_{r\ell}\right),
\qquad
G^m_{r\ell}=\sum_sP_r[m,s]
\langle V_{r\ell},\widehat f^{\,s}\rangle.
\end{equation}
The factor 2 accounts for the conjugate rate.

The kernel uses rank $L=8$ and the 8 lowest square modes $(a,b)$. It
parametrizes 8 complex rates together with their conjugates. In normalized
time, the fixed rates satisfy
\begin{equation}
\label{eq:impl-adv-ladder}
\operatorname{Re}\widetilde\mu_{a,b}
=\kappa_{\mathrm{ref}}T\frac{\pi^2(a^2+b^2)}{L_x^2},
\qquad
\operatorname{Im}\widetilde\mu_{a,b}=\widehat u(\alpha)\pi a,
\end{equation}
where $\kappa_{\mathrm{ref}}=500\,\mathrm{m^2s^{-1}}$ and $\widehat u$ is
the root-mean-square horizontal velocity scaled by $T/L_x$. The propagator
runs in normalized time with $\Delta=1/N_t$, so the rates are dimensionless.
The real part
represents diffusive decay and the imaginary part represents advective phase
transport. Adding the conjugate terms gives a real-valued output. A
learned-rate ablation perturbs both parts of each rate. The real part follows
Eq.~(\ref{eq:impl-learned-rates}), while the imaginary part is multiplied by
$1+\tanh e_r$. This variant does not
use the heat model's amplitude head or factor normalization. A scalar
calibration sets its initial output scale.

\subsection{Training and Evaluation}
\label{app:impl-training}
\label{app:impl-eval}

\begin{table}[ht]
\centering
\small
\begin{tabular}{ll}
\toprule
quantity & setting \\
\midrule
loss & relative $L_2$ in \eqref{eq:evaluation-metric} \\
optimizer & AdamW with default moment parameters \\
learning rate & $10^{-3}$ \\
weight decay & $10^{-4}$ \\
schedule & cosine decay to zero \\
training steps & $30{,}000$ \\
batch size & $64$ \\
precision & float32 \\
validation frequency & every $500$ steps \\
validation samples & $32$ held-out training samples \\
reported checkpoint & final checkpoint \\
training seeds & $0,1,2$ \\
gradient clipping & none \\
\bottomrule
\end{tabular}
\caption{Shared training settings. FNO-3D uses batch size 8. The
interior-plane models use the changes described below.}
\label{tab:training-settings}
\end{table}

Collocated heat data use all rotations and reflections of the square for data
augmentation. Horizontal-plane data use the same transformations. The
vertical-plane and advection--diffusion data use only the reflection that
preserves their geometry. Inputs are normalized as stated in the dataset
definitions. A source that enters a neural network is divided by its
training-set standard deviation. GObO instead receives the source in physical
units because it is applied only after conditioning. Targets remain in physical units. FNO-3D uses
batch size 8. Interior-plane models use learning rate $3\times10^{-4}$ and
gradient clipping at $1$. Adapted NGO uses Adam with learning rate $10^{-3}$
and no weight decay. All evaluations use Eq.~(\ref{eq:evaluation-metric}) on
the same splits.

\section{Experimental Details}
\label{app:details}
\label{app:TODO-experiment-details}

This section gives only the choices that differ across experiments. Dataset,
architecture, training, and metric definitions are given above. Numerical
outcomes are collected in Appendix~\ref{app:results}.

\subsection{Experimental setting}

\subsubsection{Baselines}
\label{app:heat-training}
\label{app:impl-baselines}

FNO-2D receives a two-dimensional medium encoding together with all 64 source
frames and predicts all 64 observation frames directly. It has width 64, 32
Fourier modes per direction, and 4 spectral layers. A 5-layer version tests
additional depth. The backbone-matched spectral model uses GObO's medium encoder
and 4 or 5 of the same cosine spectral blocks but replaces the kernel by direct regression of the
trajectory. FNO-3D receives the conductivity volume and the surface source
$f(x,y,t)$ arranged as a $64^3$ array with time as the third axis,
uses width 64 with 16 modes in each direction, and predicts the observed
trajectory. The advection--diffusion FNO stacks the diffusivity and velocity
fields as depth channels and maps the ground-source array directly to the
sensor curtain. The nonlinear FNO baselines are retrained independently for
each material law and contrast.

All baselines use the same data splits, loss, and evaluation metric as GObO.
Model sizes are reported in Table~\ref{tab:params}.

\subsection{GObO learns a kernel that generalizes across sources}

\subsubsection{Heat and Distinct-Surface Protocols}
\label{app:interior-details}

Every heat model is trained on static pulses and evaluated without retraining
on held-out static pulses, straight scans, and serpentine scans. The headline
experiment observes the source face. The distinct-surface experiments use
separate source and observation factors. The horizontal plane retains all
square rotations and reflections. The vertical plane uses reflection in the shared
horizontal coordinate because rotations do not preserve its geometry.

\subsubsection{Advection--Diffusion Protocol}
\label{app:advection-details}
\label{app:advdiff}

The fixed-rate GObO is the primary model. Learned rates provide an ablation,
and FNO-2D is the direct baseline. Training uses stationary fumes. Evaluation
uses held-out fumes, straight crosswind motion, and alternating crosswind
motion. Reflection in the crosswind coordinate is applied to the medium
volume, source, and observation while reversing the corresponding velocity
component.

\subsection{The augmented kernel state captures non-closed submanifold dynamics}
\label{app:details-kernel-state}

\subsubsection{Adapted Transient NGO}
\label{app:TODO-ngo-details}
\label{app:ngo-details}
\label{app:impl-ngo}

The baseline follows the published time-dependent NGO, with only the minimal adaptations that
our setting requires. Its conditioning network is shown next to GObO's in
Figure~\ref{fig:impl-arch}.

NGO leaves the choice of spatial basis open. Its authors use splines,
whereas we use the cosine basis of this paper so that the baseline is
comparable with the other experiments. This is the only design choice we
make differently. The remaining changes are not differences but adaptations
that our setting necessitates. The stepped state is the observed trace because
the full field that NGO advances is not available. The volumetric medium is
projected onto the spatial basis at each depth, and the resulting maps are
concatenated as channels for the system network.

NGO stabilizes its rollout by capping the norm of the step matrix at a
threshold $S$. We report their published value $S=0.8$ together with $S=1$,
because the norm of the true step map in our setting is close to 1.

\subsubsection{One-Step GObO Variant}
\label{app:impl-history}

This diagnostic keeps GObO's medium encoder and spatial factors but replaces
the lag-dependent kernel by a purely spatial recurrence. With $k$ previous
frames it predicts
\begin{equation}
\label{eq:impl-history}
    \widehat v^m
    =\sum_{j=1}^{k}S_j(\alpha)\widehat v^{m-j}
    +B_0(\alpha)q_L^m+B_1(\alpha)q_R^m.
\end{equation}
We use $k=1$, $4$, and $16$. Each spatial map is represented by a rank-256
factorization with separate input and output factors. The concatenated state
block $[S_1\cdots S_k]$ is normalized by the same norm cap as NGO with
$S=1$. During training it receives the true previous frames and predicts the
next frame. At evaluation it starts from the zero trace and is then advanced
using its own predictions.

\subsubsection{Best Linear One-Step Map}
\label{app:oracle-details}

This fit removes neural-network approximation from the test of whether the
observation contains enough state for a one-step update. For each of 3 fixed
test media, each of the 64 modes in an $8\times8$ cosine block is excited once.
Reported values are means over these 3 media.
The reference responses define the restricted discrete Green operator. For
history depth $k$, the best linear map is
\begin{equation}
\label{eq:impl-oracle-fit}
    W_k=\arg\min_W\sum_{p,m}
    \left\|W^\top
    \begin{bmatrix}
    \widehat h_p^{m-1} & \cdots & \widehat h_p^{m-k} &
    \widehat f_p^{m-1} & \widehat f_p^m
    \end{bmatrix}^{\!\top}
    -\widehat h_p^m\right\|_2^2.
\end{equation}
The fit is evaluated on 24 additional band-limited pulses by 1-step residual
and rollout error. The same procedure is repeated for a two-dimensional heat
equation on the top layer of the same medium, observed over its whole domain.
This control has access to the complete state and separates missing state in the
partial observation from basis truncation and regression error.

\subsection{The kernel parametrization captures the spatiotemporal structure}
\label{app:details-kernel-structure}

\subsubsection{Naive Tensor Decomposition}
\label{app:impl-spacetime}

The naive alternative replaces the lag law by independent cosine bases for
source time and observation time
\begin{equation}
\label{eq:impl-spacetime-kernel}
    K(\xi,\eta,t,s)
    =\sum_{c,c'=0}^{C-1}\tau_c(t)\tau_{c'}(s)
    \boldsymbol\Phi_O(\xi)^\top A_{cc'}(\alpha)
    \boldsymbol\Phi_S(\eta).
\end{equation}
It uses the same medium encoder and rank-16 spatial factors as GObO. The main
comparison uses $C=16$. A size-matched control uses $C=4$. This representation
does not enforce zero response before a source is applied. It also has no
finite streaming state and is defined only on the fixed time interval used to
build its temporal bases.

\subsubsection{Architecture and Parametrization Ablations}
\label{app:impl-medium}

The retained ablations compare fixed and learned rates, shared and separate
factors, and search ranks. A further control replaces GObO's learned
volume-to-manifold encoder by the lowest $16^3$ cosine projections of the
medium followed by a parameter-matched U-Net. The kernel, rate ladder, factors,
and recurrence remain unchanged. Reduced $16\times16$ and $8\times8$ basis
controls test whether differences can be explained by basis size.

\subsubsection{Controlled Curved-Surface Recovery}
\label{app:toy-details}
\label{app:impl-toy}

The controlled problem uses the homogeneous Neumann heat equation on the unit
cube with diffusivity 1. For chart coordinates $(p,q)$, set
$a=p-1/2$ and $b=q-1/2$. The source and observation surfaces are
\begin{equation}
\label{eq:toy-manifolds}
\begin{aligned}
X_S(p,q)&=\left(\tfrac12+0.7a+0.08\sin\!(2\pi q),
\tfrac12+0.8b,0.26-0.15a+0.15b+0.3ab\right),\\
X_O(p,q)&=\left(\tfrac12+0.7a+0.05\cos\!(2\pi q),
\tfrac12+0.45b+0.1\sin\!(2\pi p),0.78-0.25a+0.2b-0.2a^2\right).
\end{aligned}
\end{equation}
The restricted reference kernel is computed from 343 ambient cosine modes and
includes the source-surface area factor.

\begin{table}[ht]
\centering
\small
\begin{tabular}{ll}
\toprule
quantity & value \\
\midrule
diffusivity & $\kappa=1$ \\
time step and frames & $\Delta t=0.0125$ and $N_t=64$ \\
first supervised frame & $4$, corresponding to $\tau_{\min}=0.05$ \\
chart grid & $64\times64$ cell-centered points \\
retained chart modes & $16\times16$ cosine modes \\
ambient spectral cutoff & $n_j\leq6$, giving $343$ modes \\
displayed lags & $0.05$, $0.075$, and $0.125$ \\
held-out sources & $64$ Gaussian chart bumps \\
\bottomrule
\end{tabular}
\caption{Settings for the controlled curved-manifold example.}
\label{tab:toy-settings}
\end{table}

The recovery figure uses 256 coefficient-space impulse probes, 16 distinct
rates, and a truncated singular-value decomposition on frames with lag at
least $0.05$. Each fitted
coefficient matrix is truncated to rank 8. The rank sweep is a separate fit
with 63 rate slots, including repeated ambient eigenvalues. It minimizes the
relative response error plus a factor norm penalty and is compared with a
convex low-rank reference based on the nuclear norm. The table reports only
the data-fit term for both methods. Recovered rank uses threshold
$10^{-4}\sigma_1$.

\subsection{GObO enables rapid source evaluation and nonlinear corrections}

\subsubsection{Timing Protocol}
\label{app:timing-details}
\label{app:impl-timing}

\begin{table}[ht]
\centering
\small
\begin{tabular}{ll}
\toprule
quantity & setting \\
\midrule
CPU & AMD EPYC 7742 with 32 allocated cores \\
GPU & NVIDIA A100-SXM4 with 80 GB \\
software & Python 3.12.3 and PyTorch 2.13 \\
numerical libraries & NumPy 2.1.0 and SciPy 1.17.1 with OpenBLAS \\
solver repetitions & 1 warmup and 3 measurements \\
learned-model repetitions & 3 warmups and 8 measurements \\
reported statistic & median \\
\bottomrule
\end{tabular}
\caption{Environment used for the timing measurements.}
\label{tab:timing-environment}
\end{table}

GObO conditioning contains the only neural-network evaluation for a medium.
Later all-frames and streaming queries use only the explicit kernel. FNO-2D
requires a network evaluation for every source. Batch size 1 measures latency.
Batch size 64 measures throughput and shares 1 conditioned GObO across the
batch. GPU measurements are synchronized. The matched-accuracy solver is the
cheapest tested setting whose error against the production solver is below
GObO's in-distribution error of $0.011$.
The latency measurements use 1 test medium and 1 source. The finite-volume
rows use the production configuration with 1 thread. Each learned CPU row uses
the fastest result obtained with 1, 8, or 32 threads.

\subsubsection{Resolution-Transfer Protocol}
\label{app:res}
\label{app:impl-res}

\begin{table}[ht]
\centering
\small
\begin{tabular}{lp{0.72\linewidth}}
\toprule
mode & computation \\
\midrule
M0 & Compare the 2 numerical reference solutions. \\
M1 & Run the complete learned model on the fine grid. \\
M2 & Run the learned model on the coarse grid and evaluate its final
continuous spectral representation on the fine grid. \\
M3 & Run only the GObO medium encoder on the coarse grid, then project
the fine-grid source and evaluate the operator on the fine grid. \\
\bottomrule
\end{tabular}
\caption{Resolution-transfer modes used in Section~\ref{sec:exp-res}.}
\label{tab:resolution-modes}
\end{table}

M1 computes the positional encodings on the 64-cell training grid and
resamples them to the fine grid.
The positional-encoding ablation instead uses raw fine-grid indices. M2
resynthesizes GObO coefficients with the fine-grid cosine basis and uses
zero-padded Fourier synthesis for FNO. M3 rebuilds the source and observation
basis matrices at the fine resolution while reusing factors conditioned on the
coarse medium.

\subsubsection{Radiative Feedback Protocol}
\label{app:radiation}

The radiative law uses coefficient $10^{-12}$ and ambient temperature 0. We
evaluate an explicit update using the previous-frame radiative flux and a
linearized implicit update in coefficient space. Both use the unchanged GObO
kernel and the exact propagation weights in the state update. The reference
uses 32 finite-volume substeps per stored frame.
In the implicit correction matrix, $\gamma_1(\nu_k)$ is replaced by its
small-$\nu$ limit $1/2$. The state update itself retains the exact weights.

\subsubsection{Temperature-Dependent Conduction Protocol}
\label{app:kirchhoff}
\label{app:impl-nonlinear}

\begin{table}[ht]
\centering
\small
\begin{tabular}{lcc}
\toprule
setting & $\beta u_{\max}$ & conductivity ratio \\
\midrule
mild & $0.2$ & $1.2$ \\
strong & $0.5$ & $1.5$ \\
severe & $1.0$ & $2.0$ \\
\bottomrule
\end{tabular}
\caption{Conductivity contrasts used in the nonlinear experiments. The
range covers reported process windows for 316L, IN718, and Ti-6Al-4V
\citep{kim1975thermophysical,specialmetals2007in718,mukherjee2016printability}.}
\label{tab:kirchhoff-settings}
\end{table}

Configuration A applies the exact Kirchhoff transform
$\Theta=u+\beta u^2/2$, evaluates the unchanged linear checkpoint, and applies
the inverse transform pointwise. Configuration B advances the reduced kernel
state with the trace-based state equation from
Appendix~\ref{app:kirchhoff-construction}. The regularized left inverse uses
$\varepsilon=10^{-8}\lVert S^\top S\rVert_2$. The learned state update uses 4
substeps per frame. The local clock is frozen within a substep and the
resulting affine system is advanced by its matrix exponential.

The fixed-point comparison uses damping $0.5$, residual scale 1, tolerance
$10^{-3}$, and at most 20 iterations. A sample is marked as diverged after 2
successive residual increases. The reported iterate is the one with the
smallest residual and does not use the truth for selection.
The time-derivative rows obtain $\partial_t\theta$ by finite differences over
frames, while the generator row uses the exact derivative of the modal
representation. Statistics pool the 128 samples from all 3 seeds.

\section{Additional Experimental Results}
\label{app:results}

The results follow the same order as the main experiments. Protocols are not
repeated here.

\subsection{Experimental setting}

\subsubsection{Model-Size Comparison}
\label{app:result-model-size}

\begin{table}[!ht]
\centering
\footnotesize
\setlength{\tabcolsep}{3pt}
\begin{tabular}{lrl}
\toprule
model & parameters & scope \\
\midrule
GObO & $22.16$M & heat, fixed rates \\
FNO-2D & $16.91$M & heat \\
matched spectral model & $17.89$M & heat \\
FNO-3D & $67.18$M & heat \\
advection GObO, fixed rates & $44.01$M & advection--diffusion \\
advection GObO, learned rates & $44.14$M & advection--diffusion \\
advection FNO-2D & $16.96$M & advection--diffusion \\
FNO-2D, 5 layers & $21.11$M & heat \\
matched spectral model, 5 blocks & $22.33$M & heat \\
\bottomrule
\end{tabular}
\caption{\textbf{Model sizes.} Fixed-rate and learned-rate advection
models are listed separately.}
\label{tab:params}
\end{table}

Table~\ref{tab:params} follows the framework convention in which 1 complex
tensor entry is counted as 1 parameter. FNO spectral weights therefore contain
2 real numbers per counted complex parameter.

\subsection{GObO learns a kernel that generalizes across sources}
\label{app:TODO-generalization-results}
\label{app:heat-results}
\label{app:source-results}
\label{app:noncol}
\label{sec:exp-source}

\subsubsection{Heat Source Transfer}
\label{app:result-heat-transfer}

The full comparison in Table~\ref{tab:exp1_full} confirms the trade-off shown
in the main text. Direct models are strongest on static sources but degrade
sharply under motion. GObO changes little across the 3 source classes.

\begin{table}[ht]
\centering
\scriptsize
\setlength{\tabcolsep}{2.6pt}
\begin{tabular}{lcccccc}
\toprule
& \multicolumn{3}{c}{relative $L_2$} & \multicolumn{3}{c}{absolute $L_2$} \\
\cmidrule(lr){2-4}\cmidrule(lr){5-7}
method & static & straight & serpentine & static & straight & serpentine \\
\midrule
GObO & $0.011 \pm 0.001$ & $0.012 \pm 0.000$ & $0.005 \pm 0.000$ & $2838 \pm 160$ & $2253 \pm 85$ & $2447 \pm 12$ \\
FNO-2D & $0.001 \pm 0.000$ & $0.101 \pm 0.004$ & $0.137 \pm 0.007$ & $282 \pm 17$ & $23330 \pm 810$ & $63590 \pm 3400$ \\
FNO-3D & $0.004 \pm 0.000$ & $0.185 \pm 0.003$ & $0.426 \pm 0.008$ & $970 \pm 6$ & $46670 \pm 360$ & $199500 \pm 3900$ \\
matched spectral model & $0.003 \pm 0.001$ & $0.204 \pm 0.002$ & $0.217 \pm 0.004$ & $370 \pm 50$ & $48130 \pm 700$ & $100800 \pm 1900$ \\
\bottomrule
\end{tabular}
\caption{\textbf{Complete source-transfer comparison.} Values are
means and population standard deviations over 3 seeds. The moving
sets are zero-shot.}
\label{tab:exp1_full}
\end{table}

The aggregate errors show that the difference is not limited to 1 path type
or 1 normalization. GObO remains stable in relative and absolute error on both
moving tests, while every direct model loses its static-source advantage.
Figure~\ref{fig:exp1-kymo} shows how that failure develops during a scan.

\begin{figure}[ht]
\centering
\begin{subfigure}[b]{0.58\textwidth}
  \centering\includegraphics[width=\linewidth]{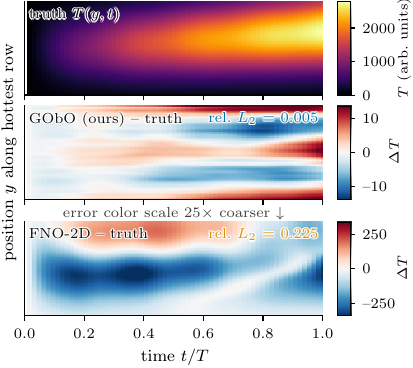}
  \caption{response and signed error}
\end{subfigure}\hfill
\begin{subfigure}[b]{0.36\textwidth}
  \centering\includegraphics[width=\linewidth]{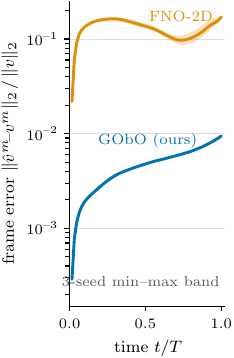}
  \caption{per-frame relative error}
\end{subfigure}
\caption{\textbf{Per-frame behavior on serpentine scans.} GObO tracks
the moving peak throughout the trajectory, whereas FNO-2D develops a
persistent displacement error. Curves show the 3-seed min--max
band.}
\label{fig:exp1-kymo}
\end{figure}

The displacement visible in the signed error explains why the relative error
grows even though the direct model remains accurate on static sources.

\subsubsection{Baseline Capacity and GObO Design}
\label{app:result-baseline-depth}
\label{app:result-gop-ablation}

Table~\ref{tab:architecture-ablation} shows that increasing the depth of either
direct baseline does not repair moving-source transfer. Within GObO, fixed
rates and shared factors give the most reliable result. Search rank 1 is
insufficient.

\begin{table}[ht]
\centering
\footnotesize
\begin{tabular}{lccc}
\toprule
baseline architecture & static & straight & serpentine \\
\midrule
FNO-2D, 4 layers & $0.001 \pm 0.000$ & $0.101 \pm 0.004$ & $0.137 \pm 0.007$ \\
FNO-2D, 5 layers & $0.001 \pm 0.000$ & $0.103 \pm 0.002$ & $0.149 \pm 0.005$ \\
matched spectral model, 4 layers & $0.003 \pm 0.001$ & $0.204 \pm 0.002$ & $0.217 \pm 0.004$ \\
matched spectral model, 5 layers & $0.004 \pm 0.001$ & $0.226 \pm 0.002$ & $0.234 \pm 0.013$ \\
\bottomrule
\end{tabular}
\caption{\textbf{Baseline-depth ablation} (relative $L_2$). Increasing
depth does not resolve the moving-source failure of either direct-output
architecture.}
\label{tab:architecture-ablation}
\end{table}

The unchanged or worse transfer error after adding depth rules out insufficient
baseline capacity as the primary explanation. Table~\ref{tab:exp1_ablation}
next varies the structure inside GObO to determine which parts of its
factorization matter.

\begin{table}[ht]
\centering
\footnotesize
\begin{tabular}{lccc}
\toprule
GObO factorization & static & straight & serpentine \\
\midrule
fixed rates, shared factors, $L=16$ & $\mathbf{0.011 \pm 0.001}$ & $\mathbf{0.012 \pm 0.000}$ & $\mathbf{0.005 \pm 0.000}$ \\
learned rates, shared factors, $L=16$ & $0.011 \pm 0.002$ & $0.012 \pm 0.002$ & $0.006 \pm 0.001$ \\
fixed rates, separate factors, $L=16$ & $0.019 \pm 0.001$ & $0.016 \pm 0.001$ & $0.008 \pm 0.000$ \\
learned rates, separate factors, $L=16$ & $0.013 \pm 0.002$ & $0.013 \pm 0.002$ & $0.006 \pm 0.001$ \\
fixed rates, shared factors, $L=1$ & $0.130 \pm 0.007$ & $0.123 \pm 0.005$ & $0.074 \pm 0.008$ \\
\bottomrule
\end{tabular}
\caption{\textbf{Rate and factor-structure ablation.} Here $L$ is the
factor search rank used for each fixed or learned rate, matching the
terminology of Tables~\ref{tab:rank} and \ref{tab:rank-full}.}
\label{tab:exp1_ablation}
\end{table}

These controls support the use of the analytic fixed rates, shared factors,
and $L=16$ in the headline heat model.

\subsubsection{Distinct Observation Surfaces}
\label{app:result-distinct-observations}

The same source-transfer pattern holds when source and observation surfaces
are distinct.

\begin{table}[ht]
\centering
\footnotesize
\begin{tabular}{llccc}
\toprule
observation & method & static & straight & serpentine \\
\midrule
horizontal plane & GObO & $0.042$ & $\mathbf{0.044}$ & $\mathbf{0.027}$ \\
horizontal plane & FNO-2D & $\mathbf{0.027}$ & $0.194$ & $0.108$ \\
vertical mid-plane & GObO & $0.064$ & $\mathbf{0.062}$ & $\mathbf{0.054}$ \\
vertical mid-plane & FNO-2D & $0.066$ & $0.240$ & $0.148$ \\
\bottomrule
\end{tabular}
\caption{\textbf{Source transfer with non-collocated observations.}
The horizontal plane is separated from the source by $L_\Omega/4$, the
vertical plane is orthogonal to the source face.}
\label{tab:noncol}
\end{table}

Table~\ref{tab:noncol} shows that separating the manifolds does not remove
GObO's transfer advantage. Figure~\ref{fig:noncol} resolves the vertical-plane
result in time and shows that the aggregate gap is sustained throughout the
moving scan.

\begin{figure}[ht]
\centering
\begin{subfigure}[b]{0.60\textwidth}
  \centering\includegraphics[width=\linewidth]{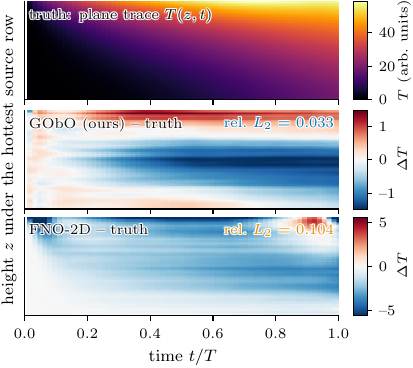}
  \caption{vertical trace and signed error}
\end{subfigure}\hfill
\begin{subfigure}[b]{0.36\textwidth}
  \centering\includegraphics[width=\linewidth]{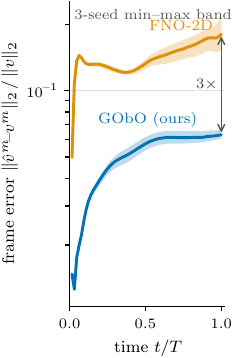}
  \caption{per-frame relative error}
\end{subfigure}
\caption{\textbf{A moving source observed on the vertical mid-plane.}
For the displayed sample, GObO has relative error $0.033$ and FNO-2D
has $0.104$, the aggregate per-frame curves show that the gap persists
over the test set.}
\label{fig:noncol}
\end{figure}

The result shows that the learned kernel transfers across sources even when
the response is observed away from and orthogonal to the source surface.

\subsubsection{Advection--Diffusion Source Transfer}
\label{app:result-advection-transfer}
\label{app:advection-results}
\label{app:ident}

Table~\ref{tab:exp4_full} shows that fixed physical rates keep GObO close to
its static error under both source shifts. Learned rates degrade all 3 source
classes. FNO-2D is competitive on static fumes but does not transfer to motion.

\begin{table}[ht]
\centering
\footnotesize
\setlength{\tabcolsep}{2.5pt}
\begin{tabular}{lcccccc}
\toprule
& \multicolumn{3}{c}{relative $L_2$} & \multicolumn{3}{c}{absolute $L_2$} \\
\cmidrule(lr){2-4}\cmidrule(lr){5-7}
method & static & straight & complex & static & straight & complex \\
\midrule
GObO, fixed rates & $0.114 \pm 0.018$ & $0.101 \pm 0.020$ & $0.139 \pm 0.035$ & $1.88 \pm 0.19$ & $2.04 \pm 0.31$ & $2.00 \pm 0.24$ \\
GObO, learned rates & $0.300 \pm 0.021$ & $0.313 \pm 0.016$ & $0.352 \pm 0.028$ & $5.19 \pm 0.19$ & $6.68 \pm 0.23$ & $6.13 \pm 0.26$ \\
FNO-2D & $0.104 \pm 0.009$ & $0.430 \pm 0.015$ & $0.364 \pm 0.007$ & $1.77 \pm 0.15$ & $9.33 \pm 0.36$ & $6.09 \pm 0.17$ \\
\bottomrule
\end{tabular}
\caption{\textbf{Complete source-generalization results on the sensor
curtain.} Values are means and population standard deviations over
3 seeds.}
\label{tab:exp4_full}
\end{table}

The smaller gap on stationary fumes shows that both models can learn the
in-distribution map. Figure~\ref{fig:exp4} illustrates the different behavior
after the source leaves the stationary training distribution.

\begin{figure}[ht]
\centering
\includegraphics[width=0.95\textwidth]{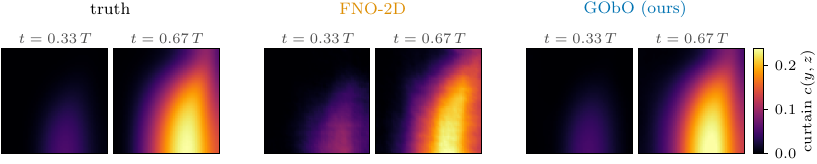}
\caption{\textbf{A moving ground-level release observed on a vertical
sensor curtain.} The qualitative comparison complements the aggregate
advection--diffusion errors in Table~\ref{tab:exp4_full}.}
\label{fig:exp4}
\end{figure}

The curtain response confirms the quantitative result without suggesting the
catastrophic qualitative failure observed for heat.

\subsection{The augmented kernel state captures non-closed submanifold dynamics}
\label{app:stream-results}
\label{sec:exp-streaming}

\subsubsection{Adapted Transient NGO}
\label{app:result-ngo}

The $S=1$ model reaches 1-step error $0.012$ and rollout errors $0.039$,
$0.043$, and $0.027$ on static, straight, and serpentine sources. The published value $S=0.8$
clips every sample and gives static rollout error $0.721$.

The $S=1$ result is therefore the informative NGO comparison used in the main
table, while the published threshold is retained to document the adaptation.

\subsubsection{One-Step Prediction from Partial Observations}
\label{app:result-one-step}

On the fully observed two-dimensional control, the best 1-frame map has
residual $4\times10^{-15}$ and additional history does not improve it. On the
three-dimensional trace, the residual is $3.2\times10^{-3}$ at $k=1$ and
$1.1\times10^{-3}$ at $k=16$. The corresponding best linear rollout errors are
$0.043$ and $0.016$. This contrast shows that the observed trace is not a
closed state.

Table~\ref{tab:impl-onestep} shows that the learned one-step GObO variants
follow the same trend inside the training window. Longer history lowers static
rollout error from $0.103$ to $0.032$. Beyond that window, the longer
recurrences become unstable, while GObO remains bounded through the reported
diagnostic.

\begin{table}[!ht]
\centering
\small
\resizebox{\linewidth}{!}{\begin{tabular}{lrcccccc}
\toprule
& params & one-step (static) & rollout, static & straight & serpentine & rescaled fraction & ms/frame \\
\midrule
GObO & 22.16\,M & -- & $0.011\pm0.001$ & $0.012\pm0.000$ & $0.005\pm0.000$ & -- & 0.13 \\
\multicolumn{8}{@{}l}{\emph{A. NGO as published}} \\
NGO, $S=0.8$ & 0.02\,M & $0.163\pm0.000$ & $0.721\pm0.000$ & $0.760\pm0.000$ & $0.552\pm0.001$ & $1.00$ & 0.09 \\
NGO, $S=1$ & 0.02\,M & $0.012\pm0.000$ & $0.039\pm0.001$ & $0.043\pm0.000$ & $0.027\pm0.001$ & $1.00$ & 0.09 \\
\multicolumn{8}{@{}l}{\emph{B. One-step GObO variants}} \\
one-step, $k=1$ & 22.28\,M & $0.021\pm0.000$ & $0.103\pm0.003$ & $0.103\pm0.009$ & $0.058\pm0.003$ & $1.00$ & 0.08 \\
one-step, $k=4$ & 22.43\,M & $0.014\pm0.007$ & $0.091\pm0.012$ & $0.108\pm0.016$ & $0.052\pm0.007$ & $1.00$ & 0.09 \\
one-step, $k=16$ & 23.05\,M & $0.015\pm0.004$ & $0.032\pm0.005$ & $0.033\pm0.006$ & $0.017\pm0.003$ & $1.00$ & 0.09 \\
\bottomrule
\end{tabular}}
\caption{One-step schemes on the trace. Relative error, mean $\pm$ population
standard deviation over 3 seeds, 128 samples per set. ``One-step'' is the error of a single
step from the true previous frames, the quantity these rows are trained on, the
3 rollout columns feed the model its own previous frames from rest. ``Rescaled
fraction'' indicates whether the norm cap is active on the first static test
sample, averaged over the 3 seeds. Frame-spacing transfer requires
retraining for every one-step row. Parameter counts are of the built models.}
\label{tab:impl-onestep}
\end{table}

Inside the training interval, longer histories partly compensate for the
missing state and the $k=16$ variant is the strongest one-step model. This
does not establish a stable evolution law. Table~\ref{tab:impl-onestep-horizon}
rolls every model beyond the interval used for training and exposes the
resulting accumulation of error.

\begin{table}[!ht]
\centering
\small
\begin{tabular}{lcccc}
\toprule
cumulative error on & $[0,1T]$ & $[0,2T]$ & $[0,3T]$ & $[0,4T]$ \\
\midrule
GObO & $0.010\pm0.000$ & $0.047\pm0.002$ & $0.105\pm0.004$ & $0.158\pm0.007$ \\
NGO, $S=0.8$ & $0.702\pm0.000$ & $0.812\pm0.000$ & $0.832\pm0.000$ & $0.841\pm0.000$ \\
NGO, $S=1$ & $0.036\pm0.001$ & $0.142\pm0.001$ & $0.237\pm0.001$ & $0.297\pm0.001$ \\
one-step, $k=1$ & $0.099\pm0.001$ & $0.227\pm0.005$ & $0.326\pm0.008$ & $0.396\pm0.012$ \\
one-step, $k=4$ & $0.090\pm0.012$ & diverged & diverged & diverged \\
one-step, $k=16$ & $0.030\pm0.006$ & $0.136\pm0.022$ & $0.58\pm0.20$ & diverged \\
\bottomrule
\end{tabular}
\caption{One-step schemes, horizon. Cumulative relative error on the first
$h\cdot64$ frames of the decaying-source test set (128 samples, source off
before $T$), mean $\pm$ population standard deviation over 3 seeds. GObO by the streaming
recurrence, the one-step rows by rollout. ``Diverged'' is a rollout that has
left the scale of the data by that horizon (a cumulative error above $1$ in at
least 1 seed). The $S=0.8$ row has decayed to a fraction of the truth
instead.}
\label{tab:impl-onestep-horizon}
\end{table}

The horizon comparison shows that lower error inside the training window does
not imply stable repeated prediction from partial observations.

\subsection{The kernel parametrization captures the spatiotemporal structure}
\label{app:TODO-kernel-parametrization-results}
\label{app:kernel-structure-results}

\subsubsection{Impulse-Response Structure}
\label{app:result-impulse-response}

Figure~\ref{fig:impl-lag} compares both learned responses with the reference
impulse response for the same medium and source location. GObO captures its
lag-dependent decay and onset in this example. The generic space--time basis
does not reproduce this structure and responds before the impulse is applied.

\begin{figure}[ht]
\centering
\begin{subfigure}[b]{0.43\textwidth}
  \centering\includegraphics[width=\linewidth]{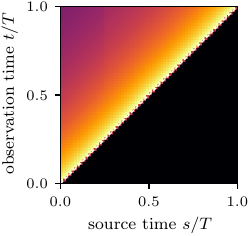}
  \caption{GObO kernel parametrization}
\end{subfigure}\hfill
\begin{subfigure}[b]{0.50\textwidth}
  \centering\includegraphics[width=\linewidth]{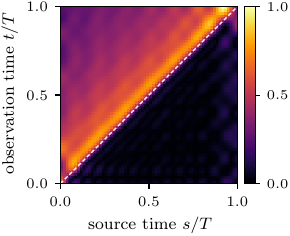}
  \caption{separable space--time parametrization}
\end{subfigure}
\caption{\textbf{Impulse-response comparison.} The panels show the
response produced by the proposed kernel parametrization and by the naive
separable space--time alternative.}
\label{fig:impl-lag}
\end{figure}

This response-level comparison links the lower GObO error to the temporal
structure imposed by its lag-dependent kernel.

\subsubsection{Naive Tensor-Decomposition Accuracy}
\label{app:result-tensor}

Table~\ref{tab:impl-spacetime} shows that removing the lag law increases static
error from $0.011$ to $0.035$ and moving error from $0.012$ to $0.045$. Its
pre-source response reaches $0.52$ of its peak. The size-matched $C=4$ control
is less accurate still, with static error $0.126$ and pre-source response
$0.91$ of its peak.

\begin{table}[!ht]
\centering
\small
\resizebox{\linewidth}{!}{\begin{tabular}{lrcccccc}
\toprule
& params & static & straight & serpentine & $\Delta/2$ & $\Delta$ (control) & $2\Delta$ \\
\midrule
GObO ($e^{-\lambda_k(t-s)}$, 16 rates) & 22.16\,M & $0.011\pm0.001$ & $0.012\pm0.000$ & $0.005\pm0.000$ & $0.010\pm0.000$ & $0.011\pm0.000$ & $0.011\pm0.000$ \\
separable space--time kernel, $C=16$ & 22.67\,M & $0.035\pm0.001$ & $0.045\pm0.002$ & $0.051\pm0.008$ & $0.031\pm0.001$ & $0.031\pm0.001$ & $0.033\pm0.001$ \\
\bottomrule
\end{tabular}}
\caption{Separable space--time basis. Relative error \eqref{eq:evaluation-metric}, mean $\pm$
population standard deviation over 3 seeds, 128 samples per set. The last 3
columns use the same independently generated frame-spacing test set.}
\label{tab:impl-spacetime}
\end{table}

The extra parameters of the separable model do not compensate for replacing
the lag-dependent kernel and do not provide a reusable streaming state.

\subsubsection{Architecture and Medium-Pathway Ablations}
\label{app:result-medium-pathway}

Table~\ref{tab:impl-medium} shows that replacing the learned volume encoder by
fixed medium projections increases static error to $0.103$ at a similar
parameter count. An MLP on the same projections reaches $0.020$ but uses
$137.23$ million parameters. Reducing GObO's basis to $16\times16$ gives static
error $0.009$, while an $8\times8$ basis gives $0.019$. The comparison therefore
cannot be explained by basis size alone.

\begin{table}[!ht]
\centering
\small
\resizebox{\linewidth}{!}{\begin{tabular}{lrccccccc}
\toprule
& params & static & straight & serpentine & $\Delta/2$ & $\Delta$ (control) & $2\Delta$ & ms/frame \\
\midrule
GObO (cosine spectral conditioning) & 22.16\,M & $0.011\pm0.001$ & $0.012\pm0.000$ & $0.005\pm0.000$ & $0.010\pm0.000$ & $0.011\pm0.000$ & $0.011\pm0.000$ & 0.13 \\
medium projections $+$ U-Net & 22.73\,M & $0.103\pm0.012$ & $0.102\pm0.011$ & $0.068\pm0.011$ & $0.110\pm0.013$ & $0.110\pm0.013$ & $0.111\pm0.013$ & 0.14 \\
\bottomrule
\end{tabular}}
\caption{Medium pathway. Relative error, mean $\pm$ population standard deviation over 3
seeds, 128 samples per set. Columns as in Table~\ref{tab:impl-spacetime},
ms/frame is measured per streaming step.}
\label{tab:impl-medium}
\end{table}

The learned volume-to-manifold pathway is therefore important independently
of the retained basis size.

\subsubsection{Controlled Curved-Surface Recovery}
\label{app:result-curved-recovery}

The recovered kernel has slice errors $9.4\times10^{-3}$,
$4.3\times10^{-3}$, and $1.6\times10^{-3}$ at lags $0.05$, $0.075$, and
$0.125$. Its held-out error on 64 Gaussian chart sources is $0.027$.
Figure~\ref{fig:toy} compares the geometry and a recovered kernel slice with
the reference.

\begin{figure}[ht]
\centering
\begin{subfigure}[b]{0.32\textwidth}
  \centering\includegraphics[width=\linewidth]{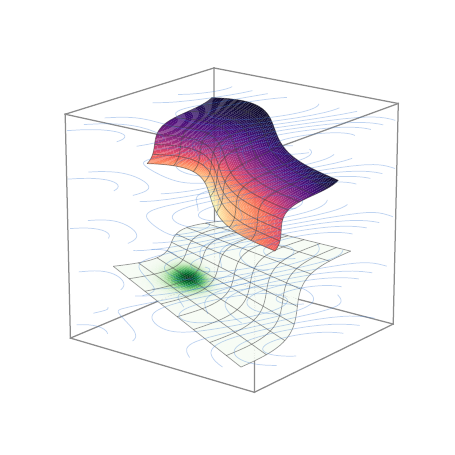}
  \caption{source and observation manifolds}
\end{subfigure}\hfill
\begin{subfigure}[b]{0.32\textwidth}
  \centering\includegraphics[width=\linewidth]{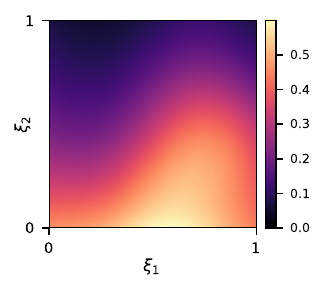}
  \caption{reference kernel slice}
\end{subfigure}\hfill
\begin{subfigure}[b]{0.32\textwidth}
  \centering\includegraphics[width=\linewidth]{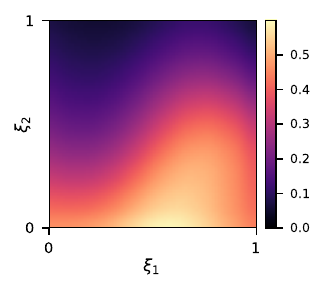}
  \caption{recovered kernel slice}
\end{subfigure}
\caption{\textbf{Kernel recovery on distinct curved manifolds.} The
source lies on a saddle and the response is observed on a separate curved
surface. The final panels compare the reference and recovered restricted
kernel at $\tau=0.05$.}
\label{fig:toy}
\end{figure}

The agreement across separated curved surfaces provides a controlled check
that the factorization can recover the intended restricted kernel.

\subsubsection{Factor-Rank Sweep}
\label{app:result-rank}

Table~\ref{tab:rank-full} reports the complete search-rank sweep for the
controlled curved-surface recovery problem and compares each factorized fit
with the coefficient-space optimum.

At $L=1$, the data-fit objective remains more than an order of magnitude above
the coefficient-space optimum. For $L\geq8$, it is within approximately one
percent of that optimum and has reached its elbow. The recovered rank remains
7 from $L=8$ through $L=64$. At $L=256$, additional singular values lie at the
reporting threshold and increase the reported rank without improving the
objective. We use $L=16$ in the main experiments to remain beyond the elbow.
The held-out error for this separate optimization experiment is $0.106$ on 16
sources.

\begin{table}[!b]
\centering
\footnotesize
\setlength{\tabcolsep}{3pt}
\begin{tabular}{lccc}
\toprule
search rank $L$ & objective & recovered rank & coefficient error \\
\midrule
coefficient optimum & $1.804\times10^{-4}$ & $7$ & n/a \\
$1$ & $2.593\times10^{-3}$ & $1$ & $0.986$ \\
$2$ & $8.714\times10^{-4}$ & $2$ & $0.730$ \\
$4$ & $2.563\times10^{-4}$ & $4$ & $0.380$ \\
$8$ & $1.826\times10^{-4}$ & $7$ & $0.215$ \\
$16$ & $1.788\times10^{-4}$ & $7$ & $0.160$ \\
$32$ & $1.801\times10^{-4}$ & $7$ & $0.120$ \\
$64$ & $1.810\times10^{-4}$ & $7$ & $0.095$ \\
$256$ & $1.808\times10^{-4}$ & $18^{\ast}$ & $0.039$ \\
\bottomrule
\end{tabular}
\caption{\textbf{Complete search-rank sweep.} Recovered rank uses the
threshold $10^{-4}\sigma_1$, coefficient error is
$\lVert A-A_{\mathrm{cvx}}\rVert_F/\lVert A_{\mathrm{cvx}}\rVert_F$.
The objective reaches its elbow at $L=8$, and $L=16$ is used in the heat
experiments. At $L=256$, the additional singular values lie at the reporting
threshold, making the rank count threshold-sensitive.}
\label{tab:rank}
\label{tab:rank-full}
\end{table}

The stable rank over the range around the elbow motivates the conservative
$L=16$ choice used in the learned heat kernels.

\subsection{GObO enables rapid source evaluation and nonlinear corrections}
\label{app:TODO-reuse-results}
\label{app:nonlinear-results}
\label{sec:exp-kirchhoff}

\subsubsection{Repeated-Source Timing}
\label{app:result-timing}
\label{app:result-complexity}

Table~\ref{tab:timing-hw} reports the operation-level measurements. All
interquartile ranges are within about $3\%$ of the median. Dense and streaming
evaluation agree to $5.2\times10^{-7}$. The matched-accuracy finite-volume
setting has relative error $0.0096$ against the production solve.

\begin{table}[ht]
\centering
\footnotesize
\setlength{\tabcolsep}{4pt}
\begin{tabular}{llccc}
\toprule
operation & frequency & CPU & GPU $b=1$ & GPU $b=64$/sample \\
\midrule
FV assembly and preconditioner & per medium & $0.6$ s & n/a & n/a \\
FV production trajectory & per source & $63.5$ s & n/a & n/a \\
FV matched-accuracy trajectory & per source & $4.6$ s & n/a & n/a \\
FNO-2D forward pass & per source & $194.1$ ms & $2.50$ ms & $0.38$ ms \\
GObO conditioning & per medium & $218.3$ ms & $10.1$ ms & $4.29$ ms \\
GObO batch response & per source & $1.45$ ms & $0.45$ ms & $0.008$ ms \\
GObO streaming response & per source & $57.0$ ms & $25.9$ ms & $0.41$ ms \\
\bottomrule
\end{tabular}
\caption{\textbf{Operation-level timing.} A batch of 64 sources shares
one conditioned GObO, the finite-volume solver was evaluated on CPU only.}
\label{tab:timing-hw}
\end{table}

The one-time conditioning cost is amortized over every later source.
Table~\ref{tab:timing-complexity} separates this one-time work from dense and
streaming queries in the implemented asymptotic costs.

\begin{table}[ht]
\centering
\footnotesize
\begin{tabular}{lll}
\toprule
method & time & memory \\
\midrule
finite-volume solver & $O(N_tn_{\mathrm{sub}}I_{\mathrm{CG}}N^3\log N)$ & $O(N^3)$ \\
FNO-2D & $O(PN^2(C^2+C\log N))$ & $O(N^2C)$ \\
GObO conditioning & $O(N^3)$ & operator factors \\
GObO batch & $O(N_t[N^2\log N+RL(D_S+D_O)]+N_t^2RL)$ & $O(N_tN^2+N_t^2R)$ \\
GObO streaming & $O(N_t[N^2\log N+RL(D_S+D_O)])$ & $O(N^2+RL)$ \\
\bottomrule
\end{tabular}
\caption{\textbf{Implemented asymptotic costs.} Streaming removes the
quadratic dependence on history length.}
\label{tab:timing-complexity}
\end{table}

Together, the measurements and costs show that conditioning is paid once and
that later sources use the explicit kernel without another network pass.

\subsubsection{Resolution Transfer}
\label{app:result-resolution-transfer}
\label{app:result-positional-encoding}
\label{sec:exp-res}

Table~\ref{tab:res} shows that coarse computation followed by fine synthesis
and coarse conditioning followed by a fine-grid query both reach the $0.026$
reference discretization gap. The second path is specific to GObO because the
direct FNO consumes the source before its learned layers.

\begin{table}[ht]
\centering
\footnotesize
\begin{tabular}{lcc}
\toprule
mode & GObO & FNO-2D \\
\midrule
M0. compare truths at 64 and 128 & \multicolumn{2}{c}{$0.026$} \\
M1. end-to-end at 128 & $0.078 \pm 0.022$ & $0.044 \pm 0.002$ \\
M2. coarse compute, fine readout & $\mathbf{0.025 \pm 0.000}$ & $0.059 \pm 0.000$ \\
M3. condition at 64, operator at 128 & $\mathbf{0.025 \pm 0.000}$ & n/a \\
\bottomrule
\end{tabular}
\caption{\textbf{Resolution transfer $64\to128$} (relative $L_2$
against truth at 128, 3 seeds, 64 paired samples). M2 and M3 reach
the discretization floor, M3 is structurally unavailable to FNO.}
\label{tab:res}
\end{table}

The numerical values separate end-to-end fine-grid evaluation from reuse of
the coarse-grid operator. Figure~\ref{fig:res} makes this distinction explicit
by tracing where the fine-grid source enters each model.

\begin{figure}[ht]
\centering
\includegraphics[width=0.48\textwidth]{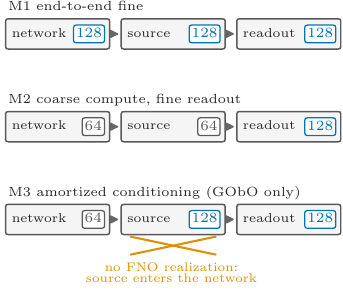}
\caption{\textbf{Resolution-transfer paths.} The source enters FNO
before its learned layers, so there is no FNO analogue of the GObO-only
M3 path.}
\label{fig:res}
\end{figure}

Table~\ref{tab:indexpe} shows that the end-to-end fine-grid path is sensitive
to how coordinates are represented. The operator-reuse paths do not evaluate
the encoder on the fine grid and are unaffected by this choice.

\begin{table}[ht]
\centering
\footnotesize
\begin{tabular}{lcc}
\toprule
M1 positional encoding & GObO & FNO-2D \\
\midrule
raw cell-index features & $0.292 \pm 0.056$ & $0.047 \pm 0.001$ \\
continuous reference features & $0.078 \pm 0.022$ & $0.044 \pm 0.002$ \\
\bottomrule
\end{tabular}
\caption{\textbf{Positional-encoding ablation for end-to-end fine
evaluation.} M2 and M3 do not run the encoder on the fine grid and are
unaffected by this choice.}
\label{tab:indexpe}
\end{table}

The grid-independent basis evaluation, rather than end-to-end fine-grid
inference, is what makes the strongest transfer modes reach the numerical
discretization floor.

\subsubsection{Radiative Feedback}
\label{app:result-radiative-feedback}

Both the explicit and linearized implicit corrections have relative error
$0.027\pm0.001$ over 128 samples and 3 seeds. Their absolute errors are
$6.7\times10^3$ and $6.5\times10^3$. The uncorrected linear trace ranges from
$0.002$ to $0.31$ relative error across samples. Figure~\ref{fig:exp2-app}
shows that the correction removes the peak overshoot and follows the final
profile.

\begin{figure}[ht]
\centering
\begin{subfigure}[b]{0.45\textwidth}
  \centering\includegraphics{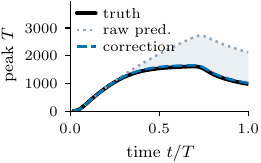}
  \caption{peak temperature over time}
\end{subfigure}\hfill
\begin{subfigure}[b]{0.45\textwidth}
  \centering\includegraphics{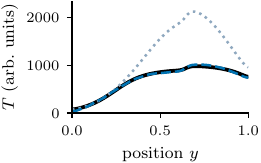}
  \caption{hottest-row profile at $t=T$}
\end{subfigure}
\caption{\textbf{Radiative feedback with the original linear
checkpoint.} (a)~Peak surface temperature over time, the shaded band
between the unchanged linear kernel and the truth is the heat that
radiation removes. (b)~Temperature at $t=T$ along the hottest row. The
unchanged kernel overshoots the peak, whereas the streamed semi-implicit
correction captures both saturation and the final profile without
retraining.}
\label{fig:exp2-app}
\end{figure}

The corrected trajectories show that feedback can be added around the frozen
linear kernel without retraining it on radiative data.

\subsubsection{Temperature-Dependent Conduction}
\label{app:result-temperature-conduction}

Table~\ref{tab:exp3} shows that the exact transform in configuration A keeps
GObO near its linear accuracy across all contrasts and shifted sources. The
trace-based update in configuration B also improves every tested setting
without retraining.

\begin{table}[ht]
\centering
\scriptsize
\setlength{\tabcolsep}{2.6pt}
\begin{tabular}{lccccccc}
\toprule
& & \multicolumn{2}{c}{mild $0.2$} & \multicolumn{2}{c}{strong $0.5$} & \multicolumn{2}{c}{severe $1.0$} \\
\cmidrule(lr){3-4}\cmidrule(lr){5-6}\cmidrule(lr){7-8}
method & retraining & static & shifted & static & shifted & static & shifted \\
\midrule
\multicolumn{8}{l}{\emph{Configuration A. exact Kirchhoff transform}} \\
FNO trained for each setting & 6 & $\mathbf{0.001}$ & $0.104$ & $\mathbf{0.001}$ & $0.116$ & $\mathbf{0.001}$ & $0.103$ \\
linear FNO + transform & 0 & $\mathbf{0.001}$ & $0.096$ & $0.002$ & $0.107$ & $0.002$ & $0.099$ \\
GObO, no correction & 0 & $0.053$ & $0.069$ & $0.122$ & $0.151$ & $0.232$ & $0.263$ \\
GObO + exact transform & 0 & $0.011$ & $\mathbf{0.009}$ & $0.010$ & $\mathbf{0.009}$ & $0.010$ & $\mathbf{0.009}$ \\
\midrule
\multicolumn{8}{l}{\emph{Configuration B. trace-based state update}} \\
FNO trained for each setting & 6 & $\mathbf{0.001}$ & $0.101$ & $\mathbf{0.001}$ & $0.104$ & $\mathbf{0.001}$ & $0.108$ \\
GObO, no correction & 0 & $0.025$ & $0.031$ & $0.054$ & $0.064$ & $0.096$ & $0.115$ \\
GObO + trace-based update & 0 & $0.014$ & $\mathbf{0.012}$ & $0.021$ & $\mathbf{0.019}$ & $0.030$ & $\mathbf{0.030}$ \\
\bottomrule
\end{tabular}
\caption{\textbf{Temperature-dependent conduction without retraining.}
Relative errors are means over 3 seeds. Every standard deviation is at most
$0.002$ for the GObO rows and at most $0.007$ for the retrained FNO. Shifted sources include scans, rasters, and sums of 2
pulses.}
\label{tab:exp3}
\end{table}

The table establishes the trend over all 3 conductivity contrasts and both
source classes. Figure~\ref{fig:exp3} shows how the corrections change the
spatial temperature profile for severe examples rather than only reducing an
aggregate norm.

\begin{figure}[ht]
\centering
\begin{subfigure}[b]{0.47\textwidth}
  \centering\includegraphics[width=\linewidth]{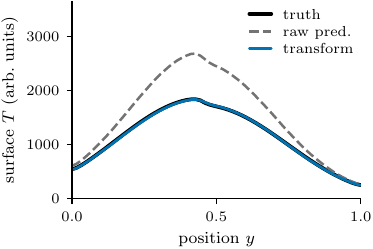}
  \caption{configuration A. exact Kirchhoff transform}
\end{subfigure}\hfill
\begin{subfigure}[b]{0.47\textwidth}
  \centering\includegraphics[width=\linewidth]{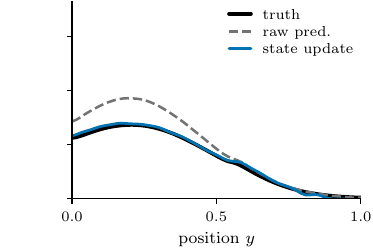}
  \caption{configuration B. trace-based state update}
\end{subfigure}
\caption{\textbf{2 nonlinear material laws corrected with 1 linear
checkpoint.} Surface temperature along the hottest row at $t=T$ for
the severe-contrast tier ($k_{\max}/k_{\min}=2$), 1 static test
sample per configuration. Each panel overlays the truth, the uncorrected
GObO prediction, and the corrected prediction. Panel~(a) uses the exact
Kirchhoff transform and panel~(b) uses the trace-based state update. Both panels
share axes, line styles, and temperature scale. Table~\ref{tab:exp3}
quantifies the contrast trends.}
\label{fig:exp3}
\end{figure}

Both constructions preserve the source-transfer benefit of the linear model
while correcting the nonlinear material response.

\subsubsection{Fixed-Point Comparison}
\label{app:result-fixed-point}

Table~\ref{tab:exp3_picard} separates convergence of the outer iteration from
the surrogate's accuracy on the effective sources generated by that iteration.
GObO improves substantially under both residual constructions, while the
direct FNO remains inaccurate on shifted sources.

\begin{table}[ht]
\centering
\scriptsize
\setlength{\tabcolsep}{2.5pt}
\begin{tabular}{llccccc}
\toprule
model & residual/state & steps & mild static & mild shifted & strong static & strong shifted \\
\midrule
FNO trained for setting & n/a & 0 & $0.001$ & $0.101$ & $0.001$ & $0.104$ \\
linear FNO + iteration & time derivative & $\le20$ & $0.011\,(2)$ & $0.097\,(29)$ & $0.026\,(6)$ & $0.093\,(26)$ \\
linear GObO + iteration & time derivative & $\le20$ & $0.011\,(3)$ & $\mathbf{0.009}\,(2)$ & $0.015\,(3)$ & $\mathbf{0.014}\,(3)$ \\
linear GObO + iteration & generator & $\le20$ & $0.011\,(3)$ & $\mathbf{0.009}\,(2)$ & $0.016\,(4)$ & $\mathbf{0.014}\,(3)$ \\
GObO + trace-based update & generator state & 1 & $0.014$ & $0.012$ & $0.021$ & $0.019$ \\
FV solver + iteration & time derivative & $\le20$ & $0.002\,(1)$ & n/a & $0.007\,(3)$ & n/a \\
\bottomrule
\end{tabular}
\caption{\textbf{Fixed-point attribution for configuration B.}
Entries are median relative errors over non-diverged samples,
parentheses give half the interquartile range in units of the last digit.}
\label{tab:exp3_picard}
\end{table}

The iteration results attribute the remaining shifted-source error to the
surrogate rather than to failure of the nonlinear outer solve.

\end{document}